\DeclareMathOperator*{\argmin}{argmin}
\DeclareMathOperator*{\sign}{sign}
\newtheorem{theorem}{Theorem}
\newtheorem{lemma}[theorem]{Lemma}
\newtheorem{corollary}[theorem]{Corollary}
\renewcommand{\eqref}[1]{Eq.~(\ref{#1})}
\newcommand{\figref}[1]{Fig.~\ref{#1}}
\title{A Safe Screening Rule for Sparse Logistic Regression}
\author[1]{Jie Wang}
\author[1]{Jiayu Zhou}
\author[2]{Jun Liu}
\author[1]{Peter Wonka}
\author[1]{Jieping Ye}
\affil[1]{Computer Science and Engineering, Arizona State University,
            USA}
\affil[2]{SAS Institute Inc.,
            USA}
\begin{document}

\maketitle

\begin{abstract}
The $\ell_1$-regularized logistic regression (or sparse logistic regression) is a widely used method for simultaneous classification and feature selection. Although many recent efforts have been devoted to its efficient implementation, its application to high dimensional data still poses significant challenges. In this paper, we present a fast and effective {\bf s}parse {\bf lo}gistic {\bf re}gression {\bf s}creening rule (Slores) to identify the ``0" components in the solution vector, which may lead to a substantial reduction in the number of features to be entered to the optimization. An appealing feature of Slores is that the data set needs to be scanned only once to run the screening and its computational cost is negligible compared to that of solving the sparse logistic regression problem. Moreover, Slores is independent of solvers for sparse logistic regression, thus Slores can be integrated with any existing solver to improve the efficiency. We have evaluated Slores using high-dimensional data sets from different applications. Extensive experimental results demonstrate that Slores outperforms the existing state-of-the-art screening rules and the efficiency of solving sparse logistic regression is improved by one magnitude in general.
\end{abstract}
\section{Introduction}
Logistic regression (LR) is a popular and well established classification method that has been widely used in many domains such as machine learning \cite{Chaudhuri2008,Friedman2000}, text mining \cite{BrzezinskiK99,Genkin2007}, image processing \cite{Gould2008,MartinsSM07}, bioinformatics \cite{Asgary2007,Liao07,Sartor09,Zhu01,Zhu2004}, medical and social sciences \cite{Boyd1987,Palei2009} etc. When the number of feature variables is large compared to the number of training samples, logistic regression is prone to over-fitting. To reduce over-fitting, regularization has been shown to be a promising approach. Typical examples include $\ell_2$ and $\ell_1$ regularization. Although $\ell_1$ regularized LR is more challenging to solve compared to $\ell_2$ regularized LR, it has received much attention in the last few years and the interest in it is growing \cite{Sun2009,Wu2009,Zhu2004} due to the increasing prevalence of high-dimensional data. The most appealing property of $\ell_1$ regularized LR is the sparsity of the resulting models, which is equivalent to feature selection.

In the past few years, many algorithms have been proposed to efficiently solve the $\ell_1$ regularized LR \citep{Efron04,Lee06,Krishnapuram2005,Park2007a}. However, for large-scale problems, solving the $\ell_1$ regularized LR with higher accuracy remains challenging. One promising solution is by ``screening", that is, we first identify the ``{\it inactive}" features, which have $0$ coefficients in the solution and then discard them from the optimization. This would result in a reduced feature matrix and substantial savings in computational cost and memory size. In \cite{ElGhaoui2011}, El Ghaoui {\it et al.} proposed novel screening rules, called ``SAFE", to accelerate the optimization for a class of $\ell_1$ regularized problems, including LASSO \cite{Tibshirani1996}, $\ell_1$ regularized LR and $\ell_1$ regularized support vector machines. Inspired by SAFE, Tibshirani {\it et al.} \citep{Tibshirani11} proposed ``strong rules" for a large class of $\ell_1$ regularized problems, including LASSO, elastic net, $\ell_1$ regularized LR and more general convex problems. In \cite{Xiang2011,Xiang2012}, Xiang et al. proposed ``DOME" rules to further improve SAFE rules for LASSO based on the observation that SAFE rules can be understood as a special case of the general ``sphere test". Although both strong rules and the sphere tests are more effective in discarding features than SAFE for solving LASSO, it is worthwhile to mention that strong rules may mistakenly discard features that have non-zero coefficients in the solution and the sphere tests are not easy to be generalized to handle the $\ell_1$ regularized LR. To the best of our knowledge, the SAFE rule is the only screening test for the $\ell_1$ regularized LR that is ``safe", that is, it only discards features that are guaranteed to be absent from the resulting models.
\begin{wrapfigure}{r}{0.4\textwidth}
\vspace{-0.2in}
  \begin{center}
    \includegraphics[width=0.4\textwidth]{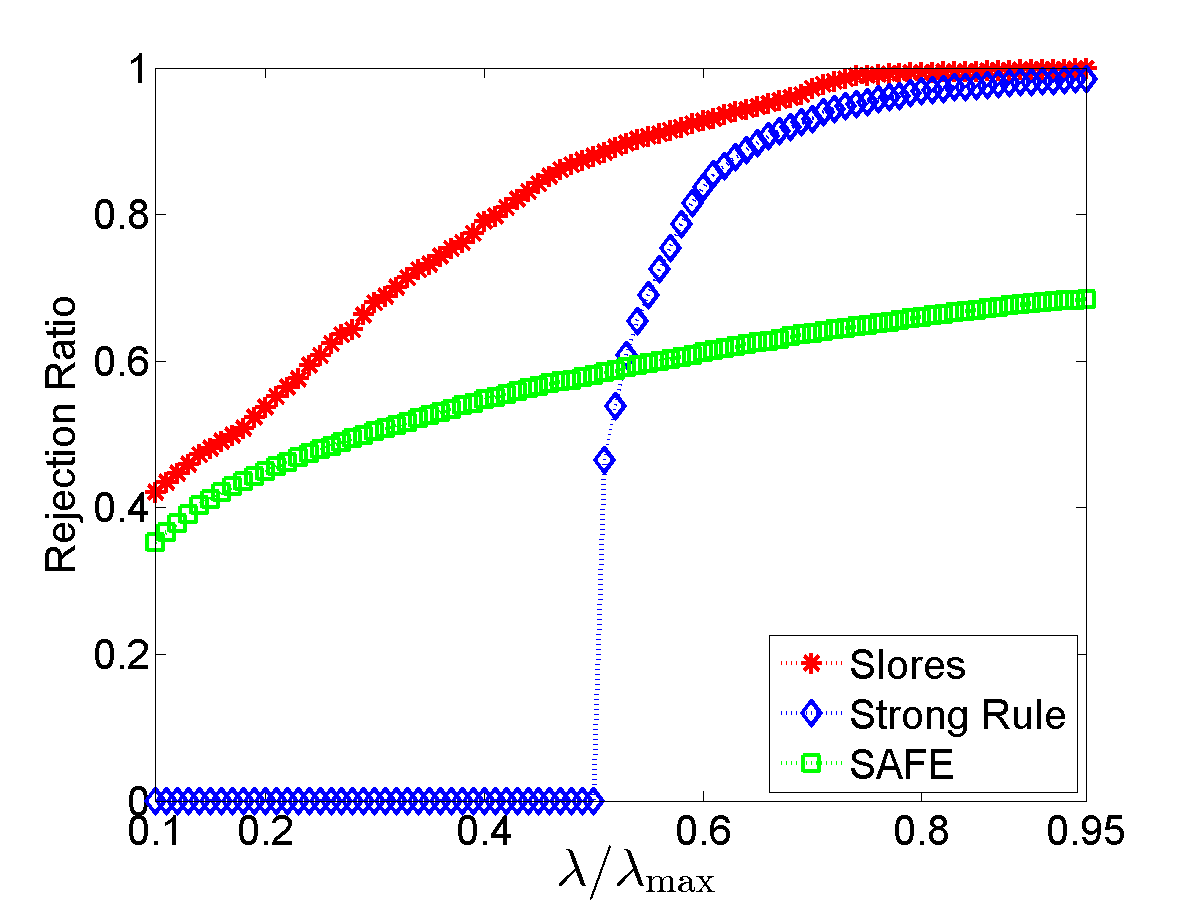}
  \end{center}\vspace{-0.2cm}
  \caption{Comparison of Slores, strong rule and SAFE on the prostate cancer data set.}
  \label{fig:bsc_introduction}
  \vspace{-0.1in}
\end{wrapfigure}
\vspace{-0.15in}

In this paper, we develop novel screening rules, called ``Slores", for the $\ell_1$ regularized LR. The proposed screening tests detect inactive features by estimating an upper bound of the inner product between each feature vector and the ``dual optimal solution" of the $\ell_1$ regularized LR, which is unknown. The more accurate the estimation is, the more inactive features can be detected. An accurate estimation of such an upper bound turns out to be quite challenging. Indeed most of the key ideas/insights behind existing ``safe" screening rules for LASSO heavily rely on the least square loss, which are not applicable for the $\ell_1$ regularized LR case due to the presence of the logistic loss. To this end, we propose a novel framework to accurately estimate an upper bound. Our key technical contribution is to formulate the estimation of an upper bound of the inner product as a constrained convex optimization problem and show that it admits a closed form solution. Therefore, the estimation of the inner product can be computed efficiently. Our extensive experiments have shown that Slores discards far more features than SAFE yet requires much less computational efforts. In contrast with strong rules, Slores is ``safe", i.e., it never discards features which have non-zero coefficients in the solution. To illustrate the effectiveness of Slores, we compare Slores, strong rule and SAFE on a data set of prostate cancer along a sequence of $86$ parameters equally spaced on the $\lambda/\lambda_{max}$ scale from $0.1$ to $0.95$, where $\lambda$ is the parameter for the $\ell_1$ penalty and $\lambda_{max}$ is the smallest tuning parameter \cite{Koh2007} such that the solution of the $\ell_1$ regularized LR is $0$ [please refer to \eqref{eqn:lambdamx}]. The data matrix contains $132$ patients with $15154$ features. To measure the performance of different screening rules, we compute the rejection ratio which is
the ratio between the number of features discarded by screening rules and the number of features
with 0 coefficients in the solution. Therefore, the larger the rejection ratio is, the more effective the
screening rule is. The results are shown in \figref{fig:bsc_introduction}.
Clearly, Slores discards far more features than SAFE especially when $\lambda/\lambda_{max}$ is large while the strong rule is not applicable when $\lambda/\lambda_{max}\leq0.5$. We present more experimental results and discussions to demonstrate the effectiveness of Slores in Section \ref{sec:experiments}.



\section{Basics and Motivations}\label{section:departure}

In this section, we briefly review the basics of the $\ell_1$ regularized LR and then motivate the general screening rules via the KKT conditions.
Suppose we are given a set of training samples $\{{\bf x}_i\}_{i=1}^m$ and the associate labels ${\bf b}\in\Re^m$, where ${\bf x}_i\in\Re^p$ and $b_i\in\{1,-1\}$ for all $i\in\{1,\ldots,m\}$. The $\ell_1$ regularized logistic regression is:
\begin{equation}\tag{LRP$_{\lambda}$}\label{prob:logistic}
\min_{\beta,c}\,\,\frac{1}{m}\sum_{i=1}^m\log (1+\exp(-\langle\beta,\bar{\bf{x}}_i \rangle-b_ic))+\lambda\|\beta\|_1,
\end{equation}
where $\beta\in\Re^p$ and $c\in\Re$ are the model parameters to be estimated, $\bar{\bf x}_i=b_i{\bf x}_i$, and $\lambda>0$ is the tuning parameter. Let the data matrix be $\overline{\bf X}\in\Re^{m\times p}$ with the $i^{th}$ row being $\bar{\bf x}_i$ and the $j^{th}$ column being $\bar{\bf x}^j$.

Let $\mathcal{C}=\{\theta\in\Re^m:\theta_i\in(0,1), i= 1,\ldots,m\}$ and $f(y)=y\log(y)
+(1-y)\log(1-y)$ for $y\in(0,1)$. The dual problem of (\ref{prob:logistic}) (please refer to the supplement) is given by
\begin{align}\tag{LRD$_{\lambda}$}\label{prob:dual_logistic}
\min_{\theta}\,\,\left\{g(\theta)=\frac{1}{m}\sum_{i=1}^mf(\theta_i):
\,\,\|\bar{\bf X}^T\theta\|_{\infty}\leq m\lambda,
\langle\theta, {\bf b} \rangle = 0,
\theta\in \mathcal{C}\right\}.
\end{align}
To simplify notations, we denote the feasible set of problem (\ref{prob:dual_logistic}) as $\mathcal{F}_{\lambda}$, and let $(\beta^*_{\lambda}, c^*_{\lambda})$ and $\theta^*_{\lambda}$ be the optimal solutions of problems (\ref{prob:logistic}) and (\ref{prob:dual_logistic}) respectively. In \cite{Koh2007}, the authors have shown that for some special choice of the tuning parameter $\lambda$, both of (\ref{prob:logistic}) and (\ref{prob:dual_logistic}) have closed form solutions. In fact,
let $\mathcal{P}=\{i:b_i=1\}$, $\mathcal{N}=\{i:b_i=-1\}$, and $m^+$ and $m^-$ be the cardinalities of $\mathcal{P}$ and $\mathcal{N}$ respectively. We define
\begin{align}\label{eqn:lambdamx}
\lambda_{max}=\tfrac{1}{m}\|\bar{\bf X}^T\theta^*_{\lambda_{max}}\|_{\infty},
\end{align}
where
\begin{align}\label{eqn:thetamx}
[\theta^*_{\lambda_{max}}]_i=
\begin{cases}\vspace{1mm}
\frac{m^-}{m},\,\,\mbox{if }i\in\mathcal{P},\\
\frac{m^+}{m},\,\,\mbox{if }i\in\mathcal{N},
\end{cases}
i=1,\ldots,m.
\end{align}
($[\cdot]_i$ denotes the $i^{th}$ component of a vector.) Then, it is known \cite{Koh2007} that $\beta^*_{\lambda}=0$ and $\theta^*_{\lambda}=\theta^*_{\lambda_{max}}$ whenever $\lambda\geq\lambda_{max}$.
When $\lambda\in(0,\lambda_{max}]$, it is known that (\ref{prob:dual_logistic}) has a unique optimal solution. (For completeness, we include the proof in the supplement.)
We can now write the KKT conditions of problems (\ref{prob:logistic}) and (\ref{prob:dual_logistic}) as
\begin{align}\label{KKT}
\langle \theta^*_{\lambda},\bar{\bf x}^j\rangle\in
\begin{cases}
m\lambda,\hspace{15mm}\mbox{if }[\beta^*_{\lambda}]_j>0,\\
-m\lambda,\hspace{12.5mm}\mbox{if }[\beta^*_{\lambda}]_j<0,\\
[-m\lambda,m\lambda],\hspace{4mm}\mbox{if }[\beta^*_{\lambda}]_j=0.
\end{cases}
j=1,\ldots,p.
\end{align}
In view of \eqref{KKT}, we can see that
\begin{align}\tag{R1}\label{rule:KKT}
|\langle\theta^*_{\lambda},\bar{\bf x}^{j} \rangle|<m\lambda\Rightarrow[\beta^*_{\lambda}]_{j} = 0.
\end{align}
In other words, if $|\langle\theta^*_{\lambda},\bar{\bf x}^{j} \rangle<m\lambda$,  then the KKT conditions imply that the coefficient of $\bar{\bf x}^{j}$ in the solution $\beta^*_{\lambda}$ is $0$ and thus the $j^{th}$ feature can be safely removed from the optimization of (\ref{prob:logistic}).
However, for the general case in which $\lambda<\lambda_{max}$, (\ref{rule:KKT}) is not applicable since it assumes the knowledge of $\theta^*_{\lambda}$. Although it is unknown, we can still estimate a region $\mathcal{A}_{\lambda}$ which contains $\theta^*_{\lambda}$. As a result, if $\max_{\theta\in\mathcal{A}}|\langle\theta,\bar{\bf x}^j\rangle|<m\lambda$, we can also conclude that $[\beta^*_{\lambda}]_j=0$ by (\ref{rule:KKT}). In other words, (\ref{rule:KKT}) can be relaxed as
\begin{align}\tag{R1$'$}\label{rule:KKT1}
T(\theta^*_{\lambda},\bar{\bf x}^{j}):=\max_{\theta\in\mathcal{A}_{\lambda}}|\langle\theta,\bar{\bf x}^j\rangle|<m\lambda\Rightarrow[\beta^*_{\lambda}]_{j} = 0.
\end{align}
In this paper, (\ref{rule:KKT1}) serves as the foundation for constructing our screening rules, Slores.
From (\ref{rule:KKT1}), it is easy to see that screening rules with smaller $T(\theta^*_{\lambda},\bar{\bf x}^{j})$ are more aggressive in discarding features. To give a tight estimation of $T(\theta^*_{\lambda},\bar{\bf x}^{j})$, we need to restrict the region $\mathcal{A}_{\lambda}$ which includes $\theta^*_{\lambda}$ as small as possible.
In Section \ref{section:upper_bound}, we show that the estimation of the upper bound $T(\theta^*_{\lambda},\bar{\bf x}^{j})$ can be obtained via solving a convex optimization problem. We show in Section \ref{section:solution_bound} that the convex optimization problem admits a closed form solution and derive Slores in Section \ref{section:algorithm} based on (\ref{rule:KKT1}).

\section{Estimating the Upper Bound via Solving a Convex Optimization Problem}\label{section:upper_bound}

In this section, we present a novel framework to estimate an upper bound $T(\theta^*_{\lambda},\bar{\bf x}^{j})$ of $|\langle\theta^*_{\lambda},\bar{\bf x}^j \rangle|$. In the subsequent development, we assume a parameter $\lambda_0$ and the corresponding dual optimal $\theta^*_{\lambda_0}$ are given. In our Slores rule to be presented in Section \ref{section:algorithm}, we set $\lambda_0$ and $\theta^*_{\lambda_0}$ to be $\lambda_{max}$ and $\theta^*_{\lambda_{max}}$ given in Eqs. (\ref{eqn:lambdamx}) and (\ref{eqn:thetamx}). We formulate the estimation of $T(\theta^*_{\lambda},\bar{\bf x}^{j})$ as a constrained convex optimization problem in this section, which will be shown to admit a closed form solution in Section \ref{section:solution_bound}.


For the dual function $g(\theta)$, it follows that
$[\nabla g(\theta)]_i=\frac{1}{m}\log(\frac{\theta_i}{1-\theta_i}), \,\, [\nabla^2 g(\theta)]_{i,i}=\frac{1}{m}\frac{1}{\theta_i(1-\theta_i)}\geq\frac{4}{m}.$
Since $\nabla^2 g(\theta)$ is a diagonal matrix, it follows that $\nabla^2 g(\theta)\succeq\frac{4}{m}I$, where $I$ is the identity matrix. Thus, $g(\theta)$ is strongly convex with modulus $\mu = \frac{4}{m}$ \cite{Nesterov2004}. Rigorously, we have the following lemma.
\begin{lemma}\label{lemma:modulus}
Let $\lambda>0$ and $\theta_1,\theta_2\in\mathcal{F}_{\lambda}$, then
\begin{equation}\label{eqn:modulus1}
\mbox{a).}\hspace{25mm}g(\theta_2)-g(\theta_1)\geq\langle\nabla g(\theta_1), \theta_2-\theta_1\rangle + \tfrac{2}{m}\|\theta_2-\theta_1\|_2^2.
\end{equation}
\hspace{18mm}b). If $\theta_1\neq\theta_2$, the inequality in (\ref{eqn:modulus1}) becomes a strict inequality, i.e., ``$\geq$" becomes ``$>$".
\end{lemma}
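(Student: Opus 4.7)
The plan is to derive both parts from a second-order Taylor expansion of $g$ along the segment joining $\theta_1$ and $\theta_2$, relying on the Hessian bound $\nabla^2 g(\theta)\succeq (4/m)I$ that the excerpt already established pointwise on $\mathcal{C}$.

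First I would observe that $\mathcal{F}_{\lambda}$ is convex, being the intersection of the open box $\mathcal{C}$, the hyperplane $\{\langle\theta,{\bf b}\rangle=0\}$, and the polytope $\{\|\bar{\bf X}^T\theta\|_\infty\leq m\lambda\}$. Hence for $t\in[0,1]$ the point $\eta_t := \theta_1 + t(\theta_2-\theta_1)$ lies in $\mathcal{F}_\lambda\subset\mathcal{C}$, so $g$ is $C^2$ along the segment. The scalar function $h(t) := g(\eta_t)$ then satisfies $h'(0) = \langle\nabla g(\theta_1), \theta_2-\theta_1\rangle$ and $h''(t) = (\theta_2-\theta_1)^\top \nabla^2 g(\eta_t)(\theta_2-\theta_1)$.

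For part (a), I would apply Taylor's theorem with integral remainder,
\[
g(\theta_2) - g(\theta_1) - \langle\nabla g(\theta_1),\theta_2-\theta_1\rangle = \int_0^1 (1-t)\, h''(t)\,dt,
\]
and lower-bound the integrand via $\nabla^2 g(\eta_t)\succeq (4/m)I$ to obtain $h''(t)\geq (4/m)\|\theta_2-\theta_1\|_2^2$. Pulling the norm out and using $\int_0^1 (1-t)\,dt = 1/2$ produces the claimed $(2/m)\|\theta_2-\theta_1\|_2^2$ term exactly.

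For part (b), writing $v := \theta_2-\theta_1$ and using the diagonal form of $\nabla^2 g$ noted in the excerpt, the gap between the two sides of \eqref{eqn:modulus1} equals
\[
\int_0^1 (1-t)\, \frac{1}{m}\sum_{i=1}^m v_i^2\left(\frac{1}{\eta_{t,i}(1-\eta_{t,i})} - 4\right)\,dt.
\]
The algebraic identity $\tfrac{1}{y(1-y)} - 4 = \tfrac{(1-2y)^2}{y(1-y)}$ shows the integrand is pointwise nonnegative and vanishes at a given $t$ only when every coordinate satisfies $v_i=0$ or $\eta_{t,i}=1/2$. If $\theta_1\neq\theta_2$, some $v_i\neq 0$, and for that coordinate $t\mapsto \eta_{t,i}$ is a nonconstant affine function on $[0,1]$, hence equal to $1/2$ at most once; the integrand is therefore strictly positive on a set of positive Lebesgue measure and the integral is strictly positive. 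I do not anticipate any real obstacle: the only delicate point is pinpointing exactly where $(1-2y)^2/(y(1-y))$ can vanish in order to rule out equality in (b), which the short affine-trajectory argument handles cleanly.
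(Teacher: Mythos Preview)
Your proposal is correct and follows essentially the same route as the paper: both arguments use the integral Taylor remainder along the segment and the pointwise Hessian bound $\nabla^2 g\succeq (4/m)I$ for part (a), and for part (b) both show the Hessian quadratic form strictly exceeds $(4/m)\|\theta_2-\theta_1\|_2^2$ on a set of positive measure. Your version is slightly slicker in part (b)---using the identity $\tfrac{1}{y(1-y)}-4=\tfrac{(1-2y)^2}{y(1-y)}$ and the at-most-one-root property of the affine map $t\mapsto\eta_{t,i}$ in place of the paper's continuity-plus-$\epsilon$-interval argument---but the underlying idea is identical.
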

Given $\lambda\in(0,\lambda_0]$, it is easy to see that both of $\theta^*_{\lambda}$ and $\theta^*_{\lambda_0}$ belong to $\mathcal{F}_{\lambda_0}$. Therefore, Lemma \ref{lemma:modulus} can be a useful tool to bound $\theta^*_{\lambda}$ with the knowledge of $\theta^*_{\lambda_0}$. In fact, we have the following theorem.
\begin{theorem}\label{thm:ball_constraint}
Let $\lambda_{max}\geq\lambda_0>\lambda>0$, then the following holds:
\begin{align}\label{ineqn:sensitivity2}
\mbox{a).}\hspace{10mm}\|\theta^*_{\lambda}-\theta^*_{\lambda_0}\|_2^2\leq\frac{m}{2}\left[g\left(\tfrac{\lambda}{\lambda_0}\theta^*_{\lambda_0}\right)-g(\theta^*_{\lambda_0})+\left(1-\tfrac{\lambda}{\lambda_0}\right)\langle\nabla g(\theta^*_{\lambda_0}), \theta^*_{\lambda_0}\rangle\right]
\end{align}
\hspace{11mm}b). If $\theta_{\lambda}^*\neq\theta_{\lambda_0}^*$, the inequality in (\ref{ineqn:sensitivity2}) becomes a strict inequality, i.e., ``$\leq$" becomes ``$<$".
\end{theorem}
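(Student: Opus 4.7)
The plan is to combine the strong-convexity estimate of Lemma~\ref{lemma:modulus} with a suitably rescaled trial point in $\mathcal{F}_\lambda$, and close the argument using the KKT identity \eqref{KKT} at $\theta^*_{\lambda_0}$. The preliminary observation is that $\tfrac{\lambda}{\lambda_0}\theta^*_{\lambda_0}\in\mathcal{F}_\lambda$: multiplying by $\tfrac{\lambda}{\lambda_0}\in(0,1)$ keeps every coordinate in $(0,1)$, preserves $\langle\theta,{\bf b}\rangle=0$, and contracts $\|\bar{\bf X}^T\theta^*_{\lambda_0}\|_\infty\leq m\lambda_0$ to $\|\bar{\bf X}^T(\tfrac{\lambda}{\lambda_0}\theta^*_{\lambda_0})\|_\infty\leq m\lambda$. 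Because $\mathcal{F}_\lambda\subseteq\mathcal{F}_{\lambda_0}$, both $\theta^*_\lambda$ and $\theta^*_{\lambda_0}$ lie in $\mathcal{F}_{\lambda_0}$, so Lemma~\ref{lemma:modulus} is applicable to this pair.

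First I would apply Lemma~\ref{lemma:modulus}(a) with $\theta_1=\theta^*_{\lambda_0}$ and $\theta_2=\theta^*_\lambda$ and rearrange to
\[
\tfrac{2}{m}\|\theta^*_\lambda-\theta^*_{\lambda_0}\|_2^2 \leq g(\theta^*_\lambda)-g(\theta^*_{\lambda_0})-\langle\nabla g(\theta^*_{\lambda_0}),\theta^*_\lambda-\theta^*_{\lambda_0}\rangle.
\]
Using the optimality of $\theta^*_\lambda$ on $\mathcal{F}_\lambda$ and the feasibility of $\tfrac{\lambda}{\lambda_0}\theta^*_{\lambda_0}$ just established, I would substitute $g(\theta^*_\lambda)\leq g(\tfrac{\lambda}{\lambda_0}\theta^*_{\lambda_0})$. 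Next, using the identity
\[
\theta^*_\lambda-\theta^*_{\lambda_0}=\bigl(\theta^*_\lambda-\tfrac{\lambda}{\lambda_0}\theta^*_{\lambda_0}\bigr)-\bigl(1-\tfrac{\lambda}{\lambda_0}\bigr)\theta^*_{\lambda_0},
\]
I split the gradient term as $-\langle\nabla g(\theta^*_{\lambda_0}),\theta^*_\lambda-\theta^*_{\lambda_0}\rangle=(1-\tfrac{\lambda}{\lambda_0})\langle\nabla g(\theta^*_{\lambda_0}),\theta^*_{\lambda_0}\rangle-\langle\nabla g(\theta^*_{\lambda_0}),\theta^*_\lambda-\tfrac{\lambda}{\lambda_0}\theta^*_{\lambda_0}\rangle$. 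This reduces the proof to showing that the cross term $\langle\nabla g(\theta^*_{\lambda_0}),\theta^*_\lambda-\tfrac{\lambda}{\lambda_0}\theta^*_{\lambda_0}\rangle$ is nonnegative.

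The main obstacle is precisely this nonnegativity; I would dispatch it using the primal--dual KKT stationarity (derived in the supplement), which yields $\nabla g(\theta^*_{\lambda_0})=-\tfrac{1}{m}\bar{\bf X}\beta^*_{\lambda_0}-\tfrac{c^*_{\lambda_0}}{m}{\bf b}$. The ${\bf b}$-contribution vanishes since $\langle{\bf b},\theta^*_\lambda\rangle=\langle{\bf b},\theta^*_{\lambda_0}\rangle=0$. For the $\bar{\bf X}\beta^*_{\lambda_0}$-contribution, identity \eqref{KKT} gives $\langle\beta^*_{\lambda_0},\bar{\bf X}^T\theta^*_{\lambda_0}\rangle=m\lambda_0\|\beta^*_{\lambda_0}\|_1$, while H\"older's inequality together with the feasibility constraint $\|\bar{\bf X}^T\theta^*_\lambda\|_\infty\leq m\lambda$ yields $\langle\beta^*_{\lambda_0},\bar{\bf X}^T\theta^*_\lambda\rangle\leq m\lambda\|\beta^*_{\lambda_0}\|_1$. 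Subtracting $\tfrac{\lambda}{\lambda_0}$ times the former from the latter gives exactly $0$, so the cross term is $\geq 0$. Multiplying through by $m/2$ then delivers part~(a).

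For part~(b), if $\theta^*_\lambda\neq\theta^*_{\lambda_0}$ then Lemma~\ref{lemma:modulus}(b) upgrades the opening strong-convexity step to a strict inequality; every subsequent step is either an equality or a non-strict upper bound on the right-hand side, so the strictness propagates to the final conclusion.
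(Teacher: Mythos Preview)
Your proof is correct and follows essentially the same route as the paper: apply Lemma~\ref{lemma:modulus} to the pair $(\theta^*_{\lambda_0},\theta^*_{\lambda})$, bound $g(\theta^*_\lambda)$ by $g(\tfrac{\lambda}{\lambda_0}\theta^*_{\lambda_0})$ via feasibility of the rescaled point, and then control the gradient term using KKT at $\theta^*_{\lambda_0}$. The only cosmetic difference is that you invoke the primal--dual stationarity $\nabla g(\theta^*_{\lambda_0})=-\tfrac{1}{m}\bar{\bf X}\beta^*_{\lambda_0}-\tfrac{c^*_{\lambda_0}}{m}{\bf b}$ together with \eqref{KKT} and H\"older, whereas the paper works directly with the Lagrange multipliers $\eta^\pm$ of the dual constraints; since those multipliers are exactly the positive and negative parts of $\beta^*_{\lambda_0}$ (up to scaling), the two computations are the same argument in different notation.
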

\vspace{-0.2in}
\begin{proof}
a). It is easy to see that $\mathcal{F}_{\lambda}\subseteq\mathcal{F}_{\lambda_0}$, $\theta^*_{\lambda}\in\mathcal{F}_{\lambda}$ and $\theta^*_{\lambda_0}\in\mathcal{F}_{\lambda_0}$. Therefore, both of $\theta^*_{\lambda_0}$ and $\theta^*_{\lambda}$ belong to the set $\mathcal{F}_{\lambda_0}$. By Lemma \ref{lemma:modulus}, we have
\begin{align}\label{ineqn:sensitivity}
\|\theta_{\lambda}^*-\theta_{\lambda_0}^*\|_2^2&\leq \tfrac{m}{2}\left[g(\theta_{\lambda}^*)-g(\theta_{\lambda_0}^*)+\langle \nabla g(\theta_{\lambda_0}^*), \theta_{\lambda_0}^*-\theta_{\lambda}^*\rangle\right].
\end{align}
Let $\theta_{\lambda}=\frac{\lambda}{\lambda_0}\theta^*_{\lambda_0}$. It is easy to see that
$$
\theta_{\lambda}\in\mathcal{C},\,\,\|\bar{\bf X}^T\theta_{\lambda}\|_{\infty}=\tfrac{\lambda}{\lambda_0}\|\bar{\bf X}^T\theta^*_{\lambda_0}\|_{\infty}\leq m\lambda,\,\,\langle\theta_{\lambda},{\bf b}\rangle=\tfrac{\lambda}{\lambda_0}\langle\theta^*_{\lambda_0},{\bf b}\rangle=0.
$$
Therefore, we can see that $\theta_{\lambda}\in\mathcal{F}_{\lambda}$ and thus
\begin{align*}
g(\theta^*_{\lambda})=\min_{\theta\in\mathcal{F}_{\lambda}}g(\theta)\leq g(\theta_{\lambda})= g\left(\tfrac{\lambda}{\lambda_0}\theta^*_{\lambda_0}\right).
\end{align*}
Then the inequality in (\ref{ineqn:sensitivity}) becomes
\begin{align}\label{ineqn:sensitivity1}
\|\theta_{\lambda}^*-\theta_{\lambda_0}^*\|_2^2\leq \tfrac{m}{2}\left[g\left(\tfrac{\lambda}{\lambda_0}\theta^*_{\lambda_0}\right)-g(\theta_{\lambda_0}^*)+\langle \nabla g(\theta_{\lambda_0}^*), \theta_{\lambda_0}^*-\theta_{\lambda}^*\rangle\right].
\end{align}
On the other hand, by noting that (\ref{prob:dual_logistic}) is feasible, we can see that the Slater's conditions holds and thus the KKT conditions \cite{Ruszczynski2006} lead to:
\begin{align}\label{eqn:kkt_dual_logistic}
0\in\nabla g(\theta^*_{\lambda})+\sum_{j=1}^p\eta_j^+\bar{\bf x}^j+\sum_{j'=1}^p\eta_i^-(-\bar{\bf x}^i)+\gamma{\bf b}+N_{\mathcal{C}}(\theta^*_{\lambda}),
\end{align}
where $\eta^+,\eta^-\in\Re^p_+$, $\gamma\in\Re$ and $N_{\mathcal{C}}(\theta^*_{\lambda})$ is the normal cone of $\mathcal{C}$ at $\theta^*_{\lambda}$ \cite{Ruszczynski2006}. Because $\theta^*_{\lambda}\in\mathcal{C}$ and $\mathcal{C}$ is an open set, $\theta^*_{\lambda}$ is an interior point of $\mathcal{C}$ and thus $N_{\mathcal{C}}(\theta^*_{\lambda})=\emptyset$ \cite{Ruszczynski2006}. Therefore, \eqref{eqn:kkt_dual_logistic} becomes:
\begin{align}\label{eqn:kkt_dual_logistic1}
\nabla g(\theta^*_{\lambda})+\sum_{j=1}^p\eta_j^+\bar{\bf x}^j+\sum_{j'=1}^p\eta_{j'}^-(-\bar{\bf x}^{j'})+\gamma{\bf b}=0.
\end{align}
Let $\mathcal{I}^+_{\lambda_0}=\{j:\langle\theta^*_{\lambda_0}, \bar{\bf x}^j\rangle=m\lambda_0,\, j=1,\ldots,p\}$, $\mathcal{I}^-_{\lambda_0}=\{j':\langle\theta^*_{\lambda_0}, \bar{\bf x}^{j'}\rangle=-m\lambda_0,\, j=1,\ldots,p\}$ and $\mathcal{I}_{\lambda_0}=\mathcal{I}^+_{\lambda_0}\cup\mathcal{I}^-_{\lambda_0}$. We can see that $\mathcal{I}^+_{\lambda_0}\cap\mathcal{I}^-_{\lambda_0}=\emptyset$. By the complementary slackness condition, if $k\notin\mathcal{I}_{\lambda_0}$, we have $\eta^+_k=\eta^-_k=0$. Therefore,
\begin{align*}
&\langle\nabla g(\theta^*_{\lambda_0}), \theta^*_{\lambda_0}\rangle+\sum_{j\in\mathcal{I}^+_{\lambda_0}}\eta^+_j\langle \theta^*_{\lambda_0},\bar{\bf x}^j\rangle+\sum_{j'\in\mathcal{I}^-_{\lambda_0}}\eta^-_{j'}\langle \theta^*_{\lambda_0},-\bar{\bf x}^{j'}\rangle+\gamma\langle\theta^*_{\lambda_0},  {\bf b}\rangle=0\\
\Leftrightarrow&-\tfrac{1}{m\lambda_0}\langle\nabla g(\theta^*_{\lambda_0}), \theta^*_{\lambda_0} \rangle=\sum_{j\in\mathcal{I}^+_{\lambda_0}}\eta^+_j+\sum_{j'\in\mathcal{I}^-_{\lambda_0}}\eta^-_{j'}
\end{align*}
Similarly, we have
\begin{align*}
&\langle\nabla g(\theta^*_{\lambda_0}), \theta^*_{\lambda}\rangle+\sum_{j\in\mathcal{I}^+_{\lambda_0}}\eta^+_j\langle \theta^*_{\lambda},\bar{\bf x}^j\rangle+\sum_{j'\in\mathcal{I}^-_{\lambda_0}}\eta^-_{j'}\langle \theta^*_{\lambda},-\bar{\bf x}^{j'}\rangle+\gamma\langle\theta^*_{\lambda},  {\bf b}\rangle=0\\
\Leftrightarrow&-\langle\nabla g(\theta^*_{\lambda_0}), \theta^*_{\lambda} \rangle=\sum_{j\in\mathcal{I}^+_{\lambda_0}}\eta^+_j\langle \theta^*_{\lambda},\bar{\bf x}^j\rangle+\sum_{j'\in\mathcal{I}^-_{\lambda_0}}\eta^-_{j'}\langle \theta^*_{\lambda},-\bar{\bf x}^{j'}\rangle\\
&\hspace{28mm}\leq\sum_{j\in\mathcal{I}^+_{\lambda_0}}\eta^+_j|\langle \theta^*_{\lambda},\bar{\bf x}^j\rangle|+\sum_{j'\in\mathcal{I}^-_{\lambda_0}}\eta^-_{j'}|\langle \theta^*_{\lambda},-\bar{\bf x}^{j'}\rangle|\\
&\hspace{28mm}\leq m\lambda\bigg\{\sum_{j\in\mathcal{I}^+_{\lambda_0}}\eta^+_j+\sum_{j'\in\mathcal{I}^-_{\lambda_0}}\eta^-_{j'}\bigg\}\\
&\hspace{28mm}=-\tfrac{\lambda}{\lambda_0}\langle g(\theta^*_{\lambda_0}), \theta^*_{\lambda_0} \rangle
\end{align*}
Recall (\ref{ineqn:sensitivity1}), the inequality in (\ref{ineqn:sensitivity2}) follows.

b). The proof is the same as part a) by noting part b) of Lemma \ref{lemma:modulus}.
\end{proof}
Theorem \ref{thm:ball_constraint} implies that $\theta^*_{\lambda}$ is inside a ball centred at $\theta^*_{\lambda_0}$ with radius
\begin{align}\label{eqn:radius}
r=\sqrt{\tfrac{m}{2}\left[g\left(\tfrac{\lambda}{\lambda_0}\theta^*_{\lambda_0}\right)-g(\theta^*_{\lambda_0})+(1-\tfrac{\lambda}{\lambda_0})\langle\nabla g(\theta^*_{\lambda_0}), \theta^*_{\lambda_0}\rangle\right]}.
\end{align}
Recall that to make our screening rules more aggressive in discarding features, we need to get a tight upper bound $T(\theta^*_{\lambda},\bar{\bf x}^{j})$ of $|\langle\theta^*_{\lambda},\bar{\bf x}^j \rangle|$ [please see (\ref{rule:KKT1})]. Thus, it is desirable to further restrict the possible region $\mathcal{A}_{\lambda}$ of $\theta^*_{\lambda}$. Clearly, we can see that
\begin{align}\label{eqn:constraint2}
\langle\theta^*_{\lambda},{\bf b}\rangle=0
\end{align}
since $\theta^*_{\lambda}$ is feasible for problem (\ref{prob:dual_logistic}).
On the other hand, we call the set $\mathcal{I}_{\lambda_0}$ defined in the proof of Theorem \ref{thm:ball_constraint} the ``{\it active set}" of $\theta^*_{\lambda_0}$. In fact, we have the following lemma for the active set.
\begin{lemma}\label{lemma:active_set}
Given the optimal solution $\theta^*_{\lambda}$ of problem (\ref{prob:dual_logistic}), the active set $\mathcal{I}_{\lambda}=\{j:|\langle\theta^*_{\lambda},\bar{\bf x}^j\rangle|=m\lambda,\, j=1,\ldots,p\}$ is not empty if $\lambda\in(0,\lambda_{max}]$.
\end{lemma}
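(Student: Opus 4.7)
The plan is to argue by contradiction. Suppose $\mathcal{I}_\lambda=\emptyset$, so that $|\langle\theta^*_\lambda,\bar{\bf x}^j\rangle|<m\lambda$ for every $j$. I then want to use the KKT characterization of $\theta^*_\lambda$ already derived in \eqref{eqn:kkt_dual_logistic1} in the proof of Theorem~\ref{thm:ball_constraint}, namely
\[
\nabla g(\theta^*_\lambda)+\sum_{j=1}^p\eta_j^+\bar{\bf x}^j-\sum_{j'=1}^p\eta_{j'}^-\bar{\bf x}^{j'}+\gamma{\bf b}=0,
\]
together with complementary slackness applied to the inactive inequality constraints. If no constraint $|\langle\theta^*_\lambda,\bar{\bf x}^j\rangle|=m\lambda$ is tight, then $\eta_j^+=\eta_{j'}^-=0$ for all $j,j'$, and the stationarity condition collapses to $\nabla g(\theta^*_\lambda)=-\gamma{\bf b}$ for some scalar $\gamma$.

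Next I would use the explicit form $[\nabla g(\theta)]_i=\tfrac{1}{m}\log\tfrac{\theta_i}{1-\theta_i}$ to solve this componentwise. This forces $[\theta^*_\lambda]_i$ to take one value $a\in(0,1)$ on $\mathcal{P}$ and the value $1-a$ on $\mathcal{N}$. Combining this with the equality constraint $\langle\theta^*_\lambda,{\bf b}\rangle=0$ (i.e.\ $m^+a-m^-(1-a)=0$) pins down $a=m^-/m$, so that $\theta^*_\lambda$ coincides with the closed-form $\theta^*_{\lambda_{max}}$ given in \eqref{eqn:thetamx}.

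Finally I would derive the contradiction from feasibility. By the definition \eqref{eqn:lambdamx}, $\|\bar{\bf X}^T\theta^*_{\lambda_{max}}\|_\infty=m\lambda_{max}$, hence
\[
\|\bar{\bf X}^T\theta^*_\lambda\|_\infty=m\lambda_{max}\ge m\lambda,
\]
which contradicts the assumed strict inequality $\|\bar{\bf X}^T\theta^*_\lambda\|_\infty<m\lambda$ in the case $\lambda<\lambda_{max}$, and the case $\lambda=\lambda_{max}$ is even more immediate since $\theta^*_{\lambda_{max}}$ itself makes the infinity-norm constraint tight. Thus $\mathcal{I}_\lambda\neq\emptyset$ throughout $(0,\lambda_{max}]$.

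The main obstacle I anticipate is simply the careful bookkeeping in the KKT step: one must be sure that stationarity is in force (Slater holds, as noted after \eqref{eqn:kkt_dual_logistic}), that the normal cone of the open set $\mathcal{C}$ contributes nothing at an interior feasible point, and that complementary slackness may legitimately be invoked for \emph{every} inequality constraint (so that the absence of any tight constraint really does zero out all of $\eta^+,\eta^-$). Once that is in place, the reduction to the explicit formula \eqref{eqn:thetamx} and the resulting infeasibility are straightforward.
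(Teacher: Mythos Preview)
Your argument is correct and shares the same opening move as the paper: assume $\mathcal{I}_\lambda=\emptyset$, invoke stationarity plus complementary slackness (valid since Slater holds and the normal cone of the open set $\mathcal{C}$ vanishes), and reduce to $\nabla g(\theta^*_\lambda)=-\gamma{\bf b}$. Where you diverge is in the second half. You solve this equation explicitly, using the closed form of $\nabla g$ together with $\langle\theta^*_\lambda,{\bf b}\rangle=0$ to force $\theta^*_\lambda=\theta^*_{\lambda_{max}}$, and then read off the contradiction directly from \eqref{eqn:lambdamx}. The paper instead argues more abstractly: since $|\langle\theta^*_\lambda,\bar{\bf x}^j\rangle|<m\lambda<m\lambda_{max}$ for all $j$, the point $\theta^*_\lambda$ also satisfies the KKT system for (LRD$_{\lambda_{max}}$) with all multipliers zero, hence is optimal there; but it differs from $\theta^*_{\lambda_{max}}$, contradicting the uniqueness established in Lemma~B. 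Your route is slightly more elementary in that it avoids appealing to uniqueness of the dual optimum, at the cost of the short explicit computation; the paper's route is more structural and would generalize to settings where $\nabla g$ is less explicit.
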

Since $\lambda_0\in(0,\lambda_{max}]$, we can see that $\mathcal{I}_{\lambda_0}$ is not empty by Lemma \ref{lemma:active_set}. We pick $j_0\in\mathcal{I}_{\lambda_0}$ and set
\begin{equation}\label{eqn:xbs}
\bar{\bf x}^*=\sign(\langle\theta^*_{\lambda_0},\bar{\bf x}^{j_0}\rangle)\bar{\bf x}^{j_0}.
\end{equation}
It follows that $\langle\bar{\bf x}^*,\theta^*_{\lambda_0} \rangle = m\lambda_0$. Due to the feasibility of $\theta^*_{\lambda}$ for problem (\ref{prob:dual_logistic}), $\theta^*_{\lambda}$ satisfies
\begin{align}\label{eqn:constraint3}
\langle\theta^*_{\lambda}, \bar{\bf x}^*\rangle\leq m\lambda.
\end{align}
As a result, Theorem \ref{thm:ball_constraint}, \eqref{eqn:constraint2} and (\ref{eqn:constraint3}) imply that $\theta^*_{\lambda}$ is contained in the following set: $$\mathcal{A}_{\lambda_0}^{\lambda}:=\{\theta:\|\theta-\theta^*_{\lambda_0}\|_2^2\leq r^2,
\langle\theta, {\bf b}\rangle=0,
\langle\theta, \bar{\bf x}^*\rangle\leq m\lambda\}.$$
Since $\theta^*_{\lambda}\in\mathcal{A}_{\lambda_0}^{\lambda}$, we can see that
$|\langle\theta^*_{\lambda}, \bar{\bf x}^j\rangle|\leq\max_{\theta\in\mathcal{A}_{\lambda_0}^{\lambda}}\,\,|\langle\theta, \bar{\bf x}^j\rangle|.$
Therefore, (\ref{rule:KKT1}) implies that if
\begin{align}\tag{UBP}\label{prob:bound}
T(\theta^*_{\lambda},\bar{\bf x}^{j};\theta^*_{\lambda_0}):=\max_{\theta\in\mathcal{A}_{\lambda_0}^{\lambda}}\,\,|\langle\theta, \bar{\bf x}^j\rangle|
\end{align}
is smaller than $m\lambda$, we can conclude that $[\beta^*_{\lambda}]_j=0$ and $\bar{\bf x}^j$ can be discarded from the optimization of (LRP$_{\lambda}$).
Notice that, we replace the notations $\mathcal{A}_{\lambda}$ and $T(\theta^*_{\lambda},\bar{\bf x}^{j})$ with $T(\theta^*_{\lambda},\bar{\bf x}^{j};\theta^*_{\lambda_0})$ and $\mathcal{A}_{\lambda_0}^{\lambda}$ to emphasize their dependence on $\theta^*_{\lambda_0}$. Clearly, as long as we can solve for $T(\theta^*_{\lambda},\bar{\bf x}^{j};\theta^*_{\lambda_0})$, (\ref{rule:KKT1}) would be an applicable screening rule to discard features which have $0$ coefficients in $\beta^*_{\lambda}$. We give a closed form solution of problem (\ref{prob:bound}) in the next section.

\section{ Solving the Convex Optimization Problem (UBP)}\label{section:solution_bound}

In this section, we show how to solve the convex optimization problem (\ref{prob:bound}) based on the standard Lagrangian multiplier method. We first transform problem (\ref{prob:bound}) into a pair of convex minimization problem (\ref{prob:bound1}) via \eqref{eqn:bound} and then show that the strong duality holds for (\ref{prob:bound1}) in Lemma \ref{lemma:strong_duality_bound}. The strong duality guarantees the applicability of the Lagrangian multiplier method. We then give the closed form solution of (\ref{prob:bound1}) in Theorem \ref{thm:bound}. After we solve problem (\ref{prob:bound1}), it is straightforward to compute the solution of problem (\ref{prob:bound}) via \eqref{eqn:bound}.

Before we solve (\ref{prob:bound}) for the general case, it is worthwhile to mention a special case in which ${\bf P}\bar{\bf x}^j=\bar{{\bf x}}^j-\frac{\langle\bar{\bf x}^j, {\bf b}\rangle}{\|{\bf b}\|_2^2}{\bf b}=0$. Clearly, ${\bf P}$ is the projection operator which projects a vector onto the orthogonal complement of the space spanned by ${\bf b}$. In fact, we have the following theorem.
\begin{theorem}
Let $\lambda_{max}\geq\lambda_0>\lambda>0$, and assume $\theta^*_{\lambda_0}$ is known. For $j\in\{1,\ldots,p\}$, if ${\bf P}\bar{\bf x}^j=0$, then $T(\theta^*_{\lambda},\bar{\bf x}^{j};\theta^*_{\lambda_0})=0$.
\end{theorem}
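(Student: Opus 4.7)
The plan is to unpack the hypothesis $\mathbf{P}\bar{\mathbf{x}}^j = 0$ and combine it with the equality constraint $\langle \theta, \mathbf{b}\rangle = 0$ that defines the feasible set $\mathcal{A}_{\lambda_0}^{\lambda}$. Since $\mathbf{P}$ is the orthogonal projection onto the complement of $\mathrm{span}(\mathbf{b})$, the assumption $\mathbf{P}\bar{\mathbf{x}}^j = 0$ is equivalent to saying that $\bar{\mathbf{x}}^j$ lies in $\mathrm{span}(\mathbf{b})$. Written out from the definition of $\mathbf{P}$, this gives the explicit representation
\[
\bar{\mathbf{x}}^j = \frac{\langle \bar{\mathbf{x}}^j, \mathbf{b}\rangle}{\|\mathbf{b}\|_2^2}\,\mathbf{b}.
\]

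Next I would substitute this representation into the inner product $\langle \theta, \bar{\mathbf{x}}^j\rangle$ for an arbitrary $\theta \in \mathcal{A}_{\lambda_0}^{\lambda}$. By linearity, the inner product becomes a scalar multiple of $\langle \theta, \mathbf{b}\rangle$, and since every $\theta \in \mathcal{A}_{\lambda_0}^{\lambda}$ satisfies the equality constraint $\langle \theta, \mathbf{b}\rangle = 0$, the whole expression vanishes. Thus $|\langle \theta, \bar{\mathbf{x}}^j\rangle| = 0$ uniformly over the feasible region, so its maximum $T(\theta^*_{\lambda}, \bar{\mathbf{x}}^j; \theta^*_{\lambda_0})$ is zero.

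There is essentially no obstacle here: the argument is a direct consequence of the geometry of $\mathcal{A}_{\lambda_0}^{\lambda}$ together with the algebraic form of the projection $\mathbf{P}$. Neither the ball constraint $\|\theta - \theta^*_{\lambda_0}\|_2^2 \le r^2$ nor the halfspace constraint $\langle \theta, \bar{\mathbf{x}}^*\rangle \le m\lambda$ is needed; the equality constraint from dual feasibility alone forces the objective to be identically zero. The only thing worth flagging is that the statement implicitly uses $\mathbf{b} \neq 0$ so that $\mathbf{P}$ is well defined, which is guaranteed because $b_i \in \{1, -1\}$ for all $i$.
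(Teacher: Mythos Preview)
Your proposal is correct and follows essentially the same approach as the paper: both arguments exploit only the equality constraint $\langle\theta,\mathbf{b}\rangle=0$ in $\mathcal{A}_{\lambda_0}^{\lambda}$ together with the hypothesis $\mathbf{P}\bar{\mathbf{x}}^j=0$. The paper phrases it via self-adjointness of the projection, writing $|\langle\theta,\bar{\mathbf{x}}^j\rangle|=|\langle\mathbf{P}\theta,\bar{\mathbf{x}}^j\rangle|=|\langle\theta,\mathbf{P}\bar{\mathbf{x}}^j\rangle|=0$, whereas you expand $\bar{\mathbf{x}}^j$ as a scalar multiple of $\mathbf{b}$ explicitly; these are the same computation.
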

Because of (\ref{rule:KKT1}), we immediately have the following corollary.
\begin{corollary}\label{corollary:trivial_feature}
Let $\lambda\in(0,\lambda_{max})$ and $j\in\{1,\ldots,p\}$. If ${\bf P}\bar{\bf x}^j=0$, then $[\beta^*_{\lambda}]_j=0$.
\end{corollary}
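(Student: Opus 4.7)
The approach is essentially a one-line reduction using the linear constraint $\langle\theta,{\bf b}\rangle=0$ that is baked into $\mathcal{A}_{\lambda_0}^{\lambda}$. First I would unpack the hypothesis ${\bf P}\bar{\bf x}^j=0$: since ${\bf P}$ is defined as the orthogonal projector onto the complement of $\mathrm{span}({\bf b})$, the condition ${\bf P}\bar{\bf x}^j=0$ is equivalent to saying that $\bar{\bf x}^j$ lies entirely along ${\bf b}$. Concretely, rearranging the defining identity ${\bf P}\bar{\bf x}^j=\bar{\bf x}^j-\frac{\langle\bar{\bf x}^j,{\bf b}\rangle}{\|{\bf b}\|_2^2}{\bf b}$ yields $\bar{\bf x}^j=\alpha{\bf b}$ with $\alpha=\frac{\langle\bar{\bf x}^j,{\bf b}\rangle}{\|{\bf b}\|_2^2}$.

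Next I would invoke the definition of the feasible region $\mathcal{A}_{\lambda_0}^{\lambda}$ established in Section \ref{section:upper_bound}, which contains the equality constraint $\langle\theta,{\bf b}\rangle=0$. For any $\theta\in\mathcal{A}_{\lambda_0}^{\lambda}$ we then have $\langle\theta,\bar{\bf x}^j\rangle=\alpha\langle\theta,{\bf b}\rangle=0$, so $|\langle\theta,\bar{\bf x}^j\rangle|=0$ identically on the feasible set. Taking the maximum in the definition
\[
T(\theta^*_{\lambda},\bar{\bf x}^{j};\theta^*_{\lambda_0})=\max_{\theta\in\mathcal{A}_{\lambda_0}^{\lambda}}|\langle\theta,\bar{\bf x}^j\rangle|
\]
then gives $T(\theta^*_{\lambda},\bar{\bf x}^{j};\theta^*_{\lambda_0})=0$, provided $\mathcal{A}_{\lambda_0}^{\lambda}$ is nonempty. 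Nonemptiness is immediate because $\theta^*_{\lambda}\in\mathcal{A}_{\lambda_0}^{\lambda}$ by the very construction of this region (Theorem \ref{thm:ball_constraint} together with the feasibility conditions \eqref{eqn:constraint2} and \eqref{eqn:constraint3}).

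There is no real obstacle here: the only subtlety worth flagging in the write-up is the equivalence between ${\bf P}\bar{\bf x}^j=0$ and $\bar{\bf x}^j\in\mathrm{span}({\bf b})$, which follows directly from the explicit formula for ${\bf P}$ and does not require any further machinery. In particular, the ball constraint $\|\theta-\theta^*_{\lambda_0}\|_2^2\leq r^2$ and the half-space constraint $\langle\theta,\bar{\bf x}^*\rangle\leq m\lambda$ play no role in the argument — the hyperplane constraint alone already forces $\langle\theta,\bar{\bf x}^j\rangle$ to vanish.
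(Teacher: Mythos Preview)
Your argument is correct and is essentially the paper's approach: the paper proves the Corollary by setting $\lambda_0=\lambda_{max}$, invoking Theorem~4 (whose appendix proof is exactly your computation that $\langle\theta,\bar{\bf x}^j\rangle=0$ on $\mathcal{A}_{\lambda_0}^{\lambda}$, phrased as $\langle\theta,\bar{\bf x}^j\rangle=\langle{\bf P}\theta,\bar{\bf x}^j\rangle=\langle\theta,{\bf P}\bar{\bf x}^j\rangle=0$ rather than via $\bar{\bf x}^j=\alpha{\bf b}$), and then applying (R1$'$). Two cosmetic completions would make your write-up self-contained: explicitly pick $\lambda_0=\lambda_{max}$ so that $\mathcal{A}_{\lambda_0}^{\lambda}$ is well-defined from the hypothesis $\lambda\in(0,\lambda_{max})$ alone, and add the one-line finish $T=0<m\lambda\Rightarrow[\beta^*_{\lambda}]_j=0$ via (R1$'$).
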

For the general case in which ${\bf P}\bar{\bf x}^j\neq0$, let
\begin{align}\label{prob:subprob}
T_{+}(\theta^*_{\lambda},\bar{\bf x}^{j};\theta^*_{\lambda_0}):=\max_{\theta\in\mathcal{A}_{\lambda_0}^{\lambda}}\,\,\langle\theta, +\bar{\bf x}^j\rangle,\,\,
T_{-}(\theta^*_{\lambda},\bar{\bf x}^{j};\theta^*_{\lambda_0}):=\max_{\theta\in\mathcal{A}_{\lambda_0}^{\lambda}}\,\,\langle\theta, -\bar{\bf x}^j\rangle.
\end{align}
Clearly, we have
\begin{align}\label{eqn:bound}
T(\theta^*_{\lambda},\bar{\bf x}^{j};\theta^*_{\lambda_0})=\max\{T_{+}(\theta^*_{\lambda},\bar{\bf x}^{j};\theta^*_{\lambda_0}), T_{-}(\theta^*_{\lambda},\bar{\bf x}^{j};\theta^*_{\lambda_0})\}.
\end{align}
Therefore, we can solve problem (\ref{prob:bound}) by solving the two sub-problems in (\ref{prob:subprob}).

Let $\xi\in\{+1, -1\}$. Then problems in (\ref{prob:subprob}) can be written uniformly as
\begin{align}\tag{UBP$_s$}\label{prob:bound_sub}
T_{\xi}(\theta^*_{\lambda},\bar{\bf x}^{j};\theta^*_{\lambda_0})=\max_{\theta\in\mathcal{A}_{\lambda_0}^{\lambda}}\,\,\langle\theta, \xi\bar{\bf x}^j\rangle.
\end{align}
To make use of the standard Lagrangian multiplier method,  we transform problem (\ref{prob:bound_sub}) to the following minimization problem:
\begin{align}\tag{UBP$'$}\label{prob:bound1}
-T_{\xi}(\theta^*_{\lambda},\bar{\bf x}^{j};\theta^*_{\lambda_0})&=\min_{\theta\in\mathcal{A}_{\lambda_0}^{\lambda}}\,\,\langle\theta, -\xi\bar{\bf x}^j\rangle
\end{align}
by noting that $\max_{\theta\in\mathcal{A}_{\lambda_0}^{\lambda}}\,\,\langle\theta, \xi\bar{\bf x}^j\rangle=-\min_{\theta\in\mathcal{A}_{\lambda_0}^{\lambda}}\,\,\langle\theta, -\xi\bar{\bf x}^j\rangle$.
\begin{lemma}\label{lemma:strong_duality_bound}
Let $\lambda_{max}\geq\lambda_0>\lambda>0$ and assume $\theta^*_{\lambda_0}$ is known. The strong duality holds for problem (\ref{prob:bound1}). Moreover, problem (\ref{prob:bound1}) admits an optimal solution in $\mathcal{A}_{\lambda_0}^{\lambda}$.
\end{lemma}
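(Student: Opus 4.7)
The plan is to verify the two claims separately: existence of an optimal solution, and strong duality via a (refined) Slater's condition. For existence, observe that $\mathcal{A}_{\lambda_0}^{\lambda}$ is the intersection of a closed Euclidean ball (hence compact), a hyperplane, and a closed half-space, so it is compact in $\Re^m$; the objective $\langle\theta,-\xi\bar{\bf x}^j\rangle$ is linear and thus continuous. The Weierstrass extreme value theorem then yields an optimal solution inside $\mathcal{A}_{\lambda_0}^{\lambda}$.

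For strong duality, I would cast (UBP$'$) in standard form: a convex objective (linear), one convex nonaffine inequality constraint $\|\theta-\theta^*_{\lambda_0}\|_2^2-r^2\leq 0$, one affine inequality $\langle\theta,\bar{\bf x}^*\rangle-m\lambda\leq 0$, and one affine equality $\langle\theta,{\bf b}\rangle=0$. By the refined Slater condition (affine inequalities need not be strict), it suffices to exhibit a feasible $\hat\theta$ with $\|\hat\theta-\theta^*_{\lambda_0}\|_2^2<r^2$.

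The natural candidate is $\hat\theta=\theta^*_{\lambda}$ itself. Feasibility in $\mathcal{A}_{\lambda_0}^{\lambda}$ is what Theorem~\ref{thm:ball_constraint}, \eqref{eqn:constraint2} and \eqref{eqn:constraint3} were designed to give, so the only thing left is the \emph{strict} ball inequality. By Theorem~\ref{thm:ball_constraint}(b), this strict inequality holds as soon as $\theta^*_{\lambda}\neq\theta^*_{\lambda_0}$. To rule out equality, I would invoke Lemma~\ref{lemma:active_set}: since $\lambda_0\in(0,\lambda_{max}]$, the active set $\mathcal{I}_{\lambda_0}$ is nonempty, so $\|\bar{\bf X}^T\theta^*_{\lambda_0}\|_{\infty}=m\lambda_0>m\lambda$. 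Hence $\theta^*_{\lambda_0}\notin\mathcal{F}_{\lambda}$, so $\theta^*_{\lambda_0}$ cannot coincide with $\theta^*_{\lambda}\in\mathcal{F}_{\lambda}$. Applying Theorem~\ref{thm:ball_constraint}(b) gives $\|\theta^*_{\lambda}-\theta^*_{\lambda_0}\|_2^2<r^2$, which is exactly the strict Slater point we need; strong duality then follows from standard convex duality.

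The main obstacle is the last verification, namely separating $\theta^*_\lambda$ from $\theta^*_{\lambda_0}$ so that Theorem~\ref{thm:ball_constraint}(b) applies; the rest is routine compactness and a textbook invocation of Slater. A nonessential subtlety is that the feasible set is defined in $\Re^m$ (not restricted to $\mathcal{C}$), so no additional open-domain check is required and the refined Slater condition applies cleanly.
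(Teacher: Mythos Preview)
Your proposal is correct and follows essentially the same approach as the paper: existence via compactness of $\mathcal{A}_{\lambda_0}^{\lambda}$ and continuity of the linear objective, and Slater's condition verified at the point $\theta^*_{\lambda}$, with $\theta^*_{\lambda}\neq\theta^*_{\lambda_0}$ established using Lemma~\ref{lemma:active_set} and the strict ball inequality then supplied by Theorem~\ref{thm:ball_constraint}(b). The paper phrases the separation argument by picking a specific $j_0\in\mathcal{I}_{\lambda_0}$ and comparing $|\langle\theta^*_{\lambda_0},\bar{\bf x}^{j_0}\rangle|=m\lambda_0$ against $|\langle\theta^*_{\lambda},\bar{\bf x}^{j_0}\rangle|\leq m\lambda$, which is equivalent to your formulation via $\|\bar{\bf X}^T\theta^*_{\lambda_0}\|_{\infty}=m\lambda_0$.
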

Because the strong duality holds for problem (\ref{prob:bound1}) by Lemma \ref{lemma:strong_duality_bound}, the Lagrangian multiplier method is applicable for (\ref{prob:bound1}). In general, we need to first solve the dual problem and then recover the optimal solution of the primal problem via KKT conditions. Recall that $r$ and $\bar{\bf x}^*$ are defined by \eqref{eqn:radius} and (\ref{eqn:xbs}) respectively. Lemma \ref{lemma:prob_ubd} derives the dual problems of (\ref{prob:bound1}) for different cases.
\begin{lemma}\label{lemma:prob_ubd}
Let $\lambda_{max}\geq\lambda_0>\lambda>0$ and assume $\theta^*_{\lambda_0}$ is known. For $j\in\{1,\ldots,p\}$ and ${\bf P}\bar{\bf x}^j\neq0$, let $\bar{\bf x}=-\xi\bar{\bf x}^j$. Denote
\begin{align*}
\mathcal{U}_1=\{(u_1,u_2):u_1>0,u_2\geq0\}\,\,\mbox{and  }\,\mathcal{U}_2=\left\{(u_1,u_2):u_1=0,u_2=-\tfrac{\langle{\bf P}\bar{\bf x},{\bf P}\bar{x}^*\rangle}{\|{\bf P}\bar{\bf x}^*\|_2^2}\right\}.
\end{align*}

a). If $\frac{\langle{\bf P}\bar{\bf x},{\bf P}\bar{x}^*\rangle}{\|{\bf P}\bar{\bf x}\|_2\|{\bf P}\bar{x}^*\|_2}\in(-1,1]$, the dual problem of (\ref{prob:bound1}) is equivalent to:
\begin{align}\tag{UBD$'$}\label{prob:dual_bound1}
\max_{(u_1,u_2)\in\mathcal{U}_1}\,\,\bar{g}(u_1,u_2)=-\tfrac{1}{2u_1}\|{\bf P}\bar{\bf x}+u_2{\bf P}\bar{\bf x}^*\|_2^2+u_2m(\lambda_0-\lambda)+\langle\theta^*_{\lambda_0}, \bar{\bf x}\rangle-\tfrac{1}{2}u_1r^2.
\end{align}
Moreover, $\bar{g}(u_1,u_2)$ attains its maximum in $\mathcal{U}_1$.

b). If $\frac{\langle{\bf P}\bar{\bf x},{\bf P}\bar{x}^*\rangle}{\|{\bf P}\bar{\bf x}\|_2\|{\bf P}\bar{x}^*\|_2}=-1$, the dual problem of (\ref{prob:bound1}) is equivalent to:
\begin{align}\tag{UBD$''$}\label{prob:dual_bound11}
\max_{(u_1,u_2)\in\mathcal{U}_1\cup\mathcal{U}_2}\,\,\bar{\bar{g}}(u_1,u_2)=
\begin{cases}
\bar{g}(u_1,u_2),\hspace{12mm}\mbox{if }(u_1,u_2)\in\mathcal{U}_1,\\
-\frac{\|{\bf P}\bar{\bf x}\|_2}{\|{\bf P}\bar{\bf x}^*\|_2}m\lambda,\hspace{7.25mm}\mbox{if }(u_1,u_2)\in\mathcal{U}_2.
\end{cases}
\end{align}
\end{lemma}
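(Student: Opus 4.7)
My plan is to directly compute the Lagrangian dual of (\ref{prob:bound1}) and then perform a case split on whether the ball multiplier $u_1$ is strictly positive or zero. Let $u_1\ge 0$ be the multiplier for $\|\theta-\theta^*_{\lambda_0}\|_2^2\le r^2$, $u_2\ge 0$ for $\langle\theta,\bar{\bf x}^*\rangle\le m\lambda$, and $v\in\Re$ for $\langle\theta,{\bf b}\rangle=0$. Substituting $\delta=\theta-\theta^*_{\lambda_0}$ and using the facts $\langle\theta^*_{\lambda_0},{\bf b}\rangle=0$ and $\langle\theta^*_{\lambda_0},\bar{\bf x}^*\rangle=m\lambda_0$ (from the definition of $\bar{\bf x}^*$), the Lagrangian rewrites as
$$L = \langle\theta^*_{\lambda_0},\bar{\bf x}\rangle+u_2 m(\lambda_0-\lambda)-\tfrac{u_1}{2}r^2+\langle\delta,w\rangle+\tfrac{u_1}{2}\|\delta\|_2^2,\quad w:=\bar{\bf x}+u_2\bar{\bf x}^*+v{\bf b}.$$

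For $u_1>0$, the infimum over $\delta$ is attained at $\delta=-w/u_1$ and equals $-\tfrac{1}{2u_1}\|w\|_2^2$. Maximizing over the unconstrained $v$ then amounts to minimizing $\|w\|_2^2$ over $v$, which is the squared norm of the projection of $\bar{\bf x}+u_2\bar{\bf x}^*$ onto $\text{span}({\bf b})^{\perp}$, i.e.\ $\|{\bf P}\bar{\bf x}+u_2{\bf P}\bar{\bf x}^*\|_2^2$. Plugging this back yields exactly the expression for $\bar g(u_1,u_2)$ in (\ref{prob:dual_bound1}). For $u_1=0$, the $\delta$ term becomes linear, so the infimum is $-\infty$ unless $w=0$, i.e.\ $\bar{\bf x}+u_2\bar{\bf x}^*\in\text{span}({\bf b})$, equivalently ${\bf P}\bar{\bf x}=-u_2{\bf P}\bar{\bf x}^*$; when this holds the dual value collapses to $\langle\theta^*_{\lambda_0},\bar{\bf x}\rangle+u_2m(\lambda_0-\lambda)$.

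Now I split on the geometry of ${\bf P}\bar{\bf x}$ and ${\bf P}\bar{\bf x}^*$. Note ${\bf P}\bar{\bf x}^*\neq 0$ because otherwise $m\lambda_0=\langle\theta^*_{\lambda_0},\bar{\bf x}^*\rangle=\langle\theta^*_{\lambda_0},(I-{\bf P})\bar{\bf x}^*\rangle=0$, a contradiction. In case (a) the cosine lies in $(-1,1]$, so ${\bf P}\bar{\bf x}=-u_2{\bf P}\bar{\bf x}^*$ with $u_2\ge 0$ is impossible, meaning the $u_1=0$ branch contributes $-\infty$ and the dual reduces to maximizing $\bar g$ over $\mathcal{U}_1$. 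In case (b) the cosine equals $-1$, the unique $u_2\ge 0$ making the linear term vanish is $u_2=\|{\bf P}\bar{\bf x}\|_2/\|{\bf P}\bar{\bf x}^*\|_2=-\langle{\bf P}\bar{\bf x},{\bf P}\bar{\bf x}^*\rangle/\|{\bf P}\bar{\bf x}^*\|_2^2$, and the parallel relation lets me evaluate $\langle\theta^*_{\lambda_0},\bar{\bf x}\rangle=\langle\theta^*_{\lambda_0},{\bf P}\bar{\bf x}\rangle=-\tfrac{\|{\bf P}\bar{\bf x}\|_2}{\|{\bf P}\bar{\bf x}^*\|_2}\langle\theta^*_{\lambda_0},{\bf P}\bar{\bf x}^*\rangle=-\tfrac{\|{\bf P}\bar{\bf x}\|_2}{\|{\bf P}\bar{\bf x}^*\|_2}m\lambda_0$. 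Combining the two terms $\langle\theta^*_{\lambda_0},\bar{\bf x}\rangle+u_2 m(\lambda_0-\lambda)$ then yields exactly $-\tfrac{\|{\bf P}\bar{\bf x}\|_2}{\|{\bf P}\bar{\bf x}^*\|_2}m\lambda$, matching the displayed formula on $\mathcal{U}_2$.

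Finally, I need to verify that $\bar g$ attains its maximum on $\mathcal{U}_1$ in case (a) (an open-boundary set). The function $\bar g$ is concave in $(u_1,u_2)$ as an infimum of affine functions; I establish coercivity by showing $\bar g\to-\infty$ as $(u_1,u_2)$ approaches any boundary or goes to infinity: as $u_1\downarrow 0$ with ${\bf P}\bar{\bf x}+u_2{\bf P}\bar{\bf x}^*\neq 0$ (guaranteed in case (a)), the $-\tfrac{1}{2u_1}\|\cdot\|_2^2$ term diverges; as $u_1\to\infty$, the $-\tfrac{u_1}{2}r^2$ term dominates; as $u_2\to\infty$, the $-\tfrac{u_2^2}{2u_1}\|{\bf P}\bar{\bf x}^*\|_2^2$ growth overcomes the linear gain $u_2 m(\lambda_0-\lambda)$. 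The main obstacle is the case-(b) bookkeeping: cleanly identifying the correct $u_2$ value in $\mathcal{U}_2$ and simplifying $\langle\theta^*_{\lambda_0},\bar{\bf x}\rangle$ using the antiparallelism to reproduce the stated closed form; the rest is routine Lagrangian calculation, relying on Lemma~\ref{lemma:strong_duality_bound} to ensure the dual value equals the primal.
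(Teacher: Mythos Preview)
Your approach is essentially the same as the paper's: form the Lagrangian, minimize over $\theta$ (equivalently over your $\delta$), then eliminate $v$ to get the projection ${\bf P}$, and finally split on whether ${\bf P}\bar{\bf x}+u_2{\bf P}\bar{\bf x}^*$ can vanish for some $u_2\ge 0$ to distinguish cases (a) and (b). Your case-(b) simplification via $\langle\theta^*_{\lambda_0},\bar{\bf x}\rangle=\langle\theta^*_{\lambda_0},{\bf P}\bar{\bf x}\rangle$ is exactly what the paper does later.

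The one place where you diverge is the attainment claim in case (a). You argue coercivity of $\bar g$ on $\mathcal{U}_1$, but your boundary analysis treats the three limits $u_1\downarrow 0$, $u_1\to\infty$, $u_2\to\infty$ separately and does not handle coupled escapes (e.g.\ $u_1,u_2\to\infty$ with $u_1\sim c\,u_2$, where the quadratic-in-$u_2$ term becomes merely linear); you also implicitly use $r>0$ without noting it. The paper bypasses this entirely: since Slater's condition holds for (\ref{prob:bound1}) (Lemma~\ref{lemma:strong_duality_bound}), strong duality guarantees that dual optimal multipliers $(u_1^*,u_2^*,v^*)$ exist with $u_1^*\ge 0$; combined with your own observation that the dual value is $-\infty$ whenever $u_1=0$ in case (a), this forces $u_1^*>0$, i.e.\ the maximum is attained in $\mathcal{U}_1$. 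That one-line argument is both shorter and airtight, and I'd recommend swapping it in for your coercivity sketch.
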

We can now solve problem (\ref{prob:bound1}) in the following theorem.
\begin{theorem}\label{thm:bound}
Let $\lambda_{max}\geq\lambda_0>\lambda>0$, $d=\frac{m(\lambda_0-\lambda)}{r\|{\bf P}\bar{x}^*\|_2}$ and assume $\theta^*_{\lambda_0}$ is known. For $j\in\{1,\ldots,p\}$ and ${\bf P}\bar{\bf x}^j\neq0$, let $\bar{\bf x}=-\xi\bar{\bf x}^j$.
\begin{enumerate}
\item[a).] If
$
\frac{\langle{\bf P}\bar{\bf x}, {\bf P}\bar{\bf x}^*\rangle}{\|{\bf P}\bar{\bf x}\|_2\|{\bf P}\bar{x}^*\|_2}\geq d,
$
then
\begin{equation}\label{eqn:bound1}
T_{\xi}(\theta^*_{\lambda},\bar{\bf x}^{j};\theta^*_{\lambda_0})=r\|{\bf P}\bar{\bf x}\|_2-\langle\theta^*_{\lambda_0},\bar{\bf x}\rangle;
\end{equation}
\item[b).] If
$
\frac{\langle{\bf P}\bar{\bf x}, {\bf P}\bar{\bf x}^*\rangle}{\|{\bf P}\bar{\bf x}\|_2\|{\bf P}\bar{x}^*\|_2}<d,
$
then
\begin{equation}\label{eqn:bound2}
T_{\xi}(\theta^*_{\lambda},\bar{\bf x}^{j};\theta^*_{\lambda_0}) = r\|{\bf P}\bar{\bf x}+u_2^*{\bf P}\bar{\bf x}^*\|_2-u_2^*m(\lambda_0-\lambda)-\langle\theta^*_{\lambda_0}, \bar{\bf x}\rangle,
\end{equation}
where
\begin{align}\label{eqn:quadratic_sol}
u_2^*&=\frac{-a_1+\sqrt{\Delta}}{2a_2},\,\,\\ \nonumber
a_2&=\|{\bf P}\bar{\bf x}^*\|_2^4(1-d^2),\\  \nonumber
a_1&=2\langle{\bf P}\bar{\bf x}, {\bf P}\bar{\bf x}^*\rangle\|{\bf P}\bar{\bf x}^*\|_2^2(1-d^2),\,\, \\ \nonumber
a_0&=\langle{\bf P}\bar{\bf x}, {\bf P}\bar{\bf x}^*\rangle^2-d^2\|{\bf P}\bar{\bf x}\|_2^2\|{\bf P}\bar{\bf x}^*\|_2^2,\\ \nonumber
\Delta &= a_1^2-4a_2a_0 = 4d^2(1-d^2)\|{\bf P}\bar{\bf x}^*\|_2^4(\|{\bf P}\bar{\bf x}\|_2^2\|{\bf P}\bar{\bf x}^*\|_2^2-\langle{\bf P}\bar{\bf x}, {\bf P}\bar{\bf x}^*\rangle^2).
\end{align}
\end{enumerate}
\end{theorem}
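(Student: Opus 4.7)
The plan is to combine Lemma~\ref{lemma:strong_duality_bound} (strong duality) with Lemma~\ref{lemma:prob_ubd} (explicit dual form): strong duality lets me work entirely with the concave dual $\bar g$, so the task reduces to finding its maximizer in closed form. I would first handle the generic case~(a) of Lemma~\ref{lemma:prob_ubd} and eliminate $u_1$. For each fixed $u_2\ge 0$ with $\mathbf{P}\bar{\mathbf x}+u_2\mathbf{P}\bar{\mathbf x}^*\neq 0$, $\bar g$ is strictly concave in $u_1>0$ and $\partial\bar g/\partial u_1=0$ yields the unique positive root $u_1^*(u_2)=\|\mathbf{P}\bar{\mathbf x}+u_2\mathbf{P}\bar{\mathbf x}^*\|_2/r$. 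Back-substituting collapses the dual to
\[
h(u_2):=-r\,\|\mathbf{P}\bar{\mathbf x}+u_2\mathbf{P}\bar{\mathbf x}^*\|_2+u_2\,m(\lambda_0-\lambda)+\langle\theta^*_{\lambda_0},\bar{\mathbf x}\rangle,
\]
a concave one-variable maximization over $u_2\in[0,\infty)$.

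Because $h$ is concave and its feasible set has a single boundary at $u_2=0$, the constrained maximum is attained at $u_2=0$ precisely when $h'(0)\le 0$. A short calculation shows this inequality reduces exactly to the case~(a) hypothesis $\langle\mathbf{P}\bar{\mathbf x},\mathbf{P}\bar{\mathbf x}^*\rangle/(\|\mathbf{P}\bar{\mathbf x}\|_2\|\mathbf{P}\bar{\mathbf x}^*\|_2)\ge d$; evaluating $h(0)$ and using $T_\xi=-h(u_2^*)$ then gives \eqref{eqn:bound1}. Otherwise the maximizer is interior with $u_2^*>0$ and satisfies the unsquared first-order equation
\[
r\,\langle\mathbf{P}\bar{\mathbf x}+u_2^*\mathbf{P}\bar{\mathbf x}^*,\mathbf{P}\bar{\mathbf x}^*\rangle \;=\; m(\lambda_0-\lambda)\,\|\mathbf{P}\bar{\mathbf x}+u_2^*\mathbf{P}\bar{\mathbf x}^*\|_2.
\]
Squaring this identity clears the norm, and collecting powers of $u_2^*$ produces exactly the quadratic $a_2(u_2^*)^2+a_1 u_2^*+a_0=0$ with the coefficients of \eqref{eqn:quadratic_sol}; the bound $\Delta\ge 0$ is then a direct instance of Cauchy--Schwarz on $\mathbf{P}\bar{\mathbf x}$ and $\mathbf{P}\bar{\mathbf x}^*$. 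Substituting the selected root back into $h$ and using $T_\xi=-h(u_2^*)$ yields \eqref{eqn:bound2}.

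The main obstacle will be root-selection: squaring can introduce a spurious solution, so I would need to certify that the closed-form root $u_2^*=(-a_1+\sqrt\Delta)/(2a_2)$ is really the interior maximizer of $h$. I would handle this in two steps: (i) show positivity of the chosen root using the signs of $a_0=\langle\mathbf{P}\bar{\mathbf x},\mathbf{P}\bar{\mathbf x}^*\rangle^2-d^2\|\mathbf{P}\bar{\mathbf x}\|_2^2\|\mathbf{P}\bar{\mathbf x}^*\|_2^2$ and $a_2=\|\mathbf{P}\bar{\mathbf x}^*\|_2^4(1-d^2)$ under the case~(b) hypothesis, and (ii) check that this root satisfies the unsquared first-order equation via a sign check on $\langle\mathbf{P}\bar{\mathbf x}+u_2^*\mathbf{P}\bar{\mathbf x}^*,\mathbf{P}\bar{\mathbf x}^*\rangle$; the other quadratic root is then ruled out by strict concavity of $h$, which forces the interior maximizer to be unique. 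Finally, the Lemma~\ref{lemma:prob_ubd}(b) branch (cosine $=-1$) is handled by comparing the $\mathcal{U}_1$ supremum with the single value on $\mathcal{U}_2$: the latter coincides with the $u_2\to\infty$ asymptote of $\bar g$ along the $\mathcal{U}_1$ branch, so it introduces no new maximizer and the same closed forms apply.
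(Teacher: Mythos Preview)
Your proposal is correct and follows essentially the same route as the paper: both invoke strong duality (Lemma~\ref{lemma:strong_duality_bound}) and the explicit dual of Lemma~\ref{lemma:prob_ubd}, eliminate $u_1$ via $u_1^*=\|{\bf P}\bar{\bf x}+u_2{\bf P}\bar{\bf x}^*\|_2/r$, and reduce to a one-variable problem in $u_2$ whose boundary-versus-interior split is exactly the cosine~$\ge d$ dichotomy and whose interior stationarity condition squares to the quadratic~\eqref{eqn:quadratic_sol}. The only cosmetic difference is that the paper phrases the $u_2$ analysis through KKT slack variables and the monotonicity of $\varphi(u_2)=\langle{\bf P}\bar{\bf x}+u_2{\bf P}\bar{\bf x}^*,{\bf P}\bar{\bf x}^*\rangle\big/\bigl(\|{\bf P}\bar{\bf x}+u_2{\bf P}\bar{\bf x}^*\|_2\|{\bf P}\bar{\bf x}^*\|_2\bigr)$ rather than the concavity of your profile $h$, and these are equivalent since $h'(u_2)=-r\|{\bf P}\bar{\bf x}^*\|_2\bigl(\varphi(u_2)-d\bigr)$; one small slip is that in the cosine~$=-1$ branch the $\mathcal{U}_2$ value matches the limit of $\bar g$ as $(u_1,u_2)\to\bigl(0,\,-\langle{\bf P}\bar{\bf x},{\bf P}\bar{\bf x}^*\rangle/\|{\bf P}\bar{\bf x}^*\|_2^2\bigr)$ rather than a $u_2\to\infty$ asymptote, but your conclusion there is correct.
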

Notice that, although the dual problems of (\ref{prob:bound1}) in Lemma \ref{lemma:prob_ubd} are different, the resulting upper bound $T_{\xi}(\theta^*_{\lambda},\bar{\bf x}^{j};\theta^*_{\lambda_0})$ can be given by Theorem \ref{thm:bound} in a uniform way. The tricky part is how to deal with the extremal cases in which $\frac{\langle{\bf P}\bar{\bf x}, {\bf P}\bar{\bf x}^*\rangle}{\|{\bf P}\bar{\bf x}\|_2\|{\bf P}\bar{x}^*\|_2}\in\{-1,+1\}$. To avoid the lengthy discussion of Theorem \ref{thm:bound}, we omit the proof in the main text and include the details in the supplement.

\section{ The proposed Slores Rule for $\ell_1$ Regularized Logistic Regression}\label{section:algorithm}
Using (\ref{rule:KKT1}), we are now ready to construct the screening rules for the $\ell_1$ Regularized Logistic Regression. By Corollary \ref{corollary:trivial_feature}, we can see that the orthogonality between the $j^{th}$ feature and the response vector ${\bf b}$ implies the absence of $\bar{\bf x}^j$ from the resulting model. For the general case in which ${\bf P}\bar{\bf x}^j\neq0$, (\ref{rule:KKT1}) implies that if
$T(\theta^*_{\lambda},\bar{\bf x}^{j};\theta^*_{\lambda_0})=\max\{T_{+}(\theta^*_{\lambda},\bar{\bf x}^{j};\theta^*_{\lambda_0}), T_{-}(\theta^*_{\lambda},\bar{\bf x}^{j};\theta^*_{\lambda_0})\}<m\lambda,
$
$\,\,$ then the $j^{th}$ feature can be discarded from the optimization of (\ref{prob:logistic}).
Notice that, letting $\xi=\pm1$, $T_{+}(\theta^*_{\lambda},\bar{\bf x}^{j};\theta^*_{\lambda_0})$ and  $T_{-}(\theta^*_{\lambda},\bar{\bf x}^{j};\theta^*_{\lambda_0})$ have been solved by Theorem \ref{thm:bound}. Rigorously, we have the following theorem.
\begin{theorem}[Slores]\label{thm:slores}
Let $\lambda_0>\lambda>0$ and assume $\theta^*_{\lambda_0}$ is known.
\begin{enumerate}
\item If $\lambda\geq\lambda_{max}$, then $\beta^*_{\lambda}=0$;
\item If $\lambda_{max}\geq\lambda_0>\lambda>0$ and either of the following holds:
	\begin{enumerate}
	\item ${\bf P}\bar{\bf x}^j=0$,
	\item $\max\{T_{\xi}(\theta^*_{\lambda},\bar{\bf x}^{j};\theta^*_{\lambda_0}):\xi=\pm1\}<m\lambda$,
	\end{enumerate}
then $[\beta^*_{\lambda}]_j=0$.
\end{enumerate}
\end{theorem}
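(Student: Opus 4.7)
The plan is that this theorem is essentially an assembly of machinery that has already been built up, so the proof should be short and will mostly consist of routing each case to the appropriate earlier result. I would structure the argument by cases corresponding exactly to the three sub-claims.

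For Case 1, where $\lambda \geq \lambda_{max}$, I would simply invoke the result from Koh et al.\ \cite{Koh2007} recorded immediately after Eq.\ (\ref{eqn:thetamx}), which states that $\beta^*_{\lambda}=0$ and $\theta^*_{\lambda}=\theta^*_{\lambda_{max}}$ whenever $\lambda\geq\lambda_{max}$. Nothing further is needed here.

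For Case 2(a), the condition ${\bf P}\bar{\bf x}^j=0$ falls under the hypothesis of Corollary \ref{corollary:trivial_feature}, which directly concludes $[\beta^*_{\lambda}]_j=0$. So I would just cite it.

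For Case 2(b), this is where the main chain of reasoning lives, but each link is already forged. The argument is: (i) by the derivation preceding \eqref{prob:bound}, we have $\theta^*_{\lambda}\in\mathcal{A}_{\lambda_0}^{\lambda}$, so $|\langle\theta^*_{\lambda},\bar{\bf x}^j\rangle|\leq T(\theta^*_{\lambda},\bar{\bf x}^{j};\theta^*_{\lambda_0})$; (ii) by identity \eqref{eqn:bound}, $T(\theta^*_{\lambda},\bar{\bf x}^{j};\theta^*_{\lambda_0})=\max\{T_{+},T_{-}\}$; (iii) the hypothesis says this maximum is strictly less than $m\lambda$; (iv) therefore the relaxed KKT screening rule (\ref{rule:KKT1}) applies and yields $[\beta^*_{\lambda}]_j=0$. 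I would also remark that the values of $T_{\xi}$ themselves are given in closed form by Theorem \ref{thm:bound}, so the condition is checkable in practice.

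There is no genuine obstacle in this proof since every technical ingredient---the strong convexity bound on the ball containing $\theta^*_{\lambda}$ (Theorem \ref{thm:ball_constraint}), the additional hyperplane and half-space constraints defining $\mathcal{A}_{\lambda_0}^{\lambda}$, the closed-form maximization in Theorem \ref{thm:bound}, and the trivial-feature corollary---has already been established. The only care needed is to verify that we are in the regime $\lambda_{max}\geq\lambda_0>\lambda>0$ before invoking Theorem \ref{thm:ball_constraint}, which is exactly the hypothesis of Case 2, and to note that the strict inequality $<m\lambda$ matches the strict inequality in (\ref{rule:KKT1}) so that the screening conclusion is valid.
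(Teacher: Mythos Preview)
Your proposal is correct and follows exactly the approach the paper takes: the paper does not provide a separate formal proof of Theorem~\ref{thm:slores}, but the paragraph immediately preceding it in Section~\ref{section:algorithm} lays out precisely the same three-case argument you describe---invoking \cite{Koh2007} for $\lambda\geq\lambda_{max}$, Corollary~\ref{corollary:trivial_feature} for ${\bf P}\bar{\bf x}^j=0$, and the chain $\theta^*_{\lambda}\in\mathcal{A}_{\lambda_0}^{\lambda}\Rightarrow|\langle\theta^*_{\lambda},\bar{\bf x}^j\rangle|\leq T=\max\{T_+,T_-\}<m\lambda\Rightarrow[\beta^*_{\lambda}]_j=0$ via (\ref{rule:KKT1}) for the general case.
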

Based on Theorem \ref{thm:slores}, we construct the Slores rule as summarized below in Algorithm \ref{alg:slores_g}.
  \begin{wrapfigure}{r}{0.5\textwidth}
  \vspace{-0.2in}
  \begin{scriptsize}
    \begin{minipage}{0.5\textwidth}
        \begin{algorithm}[H]
        \caption{$\mathcal{R}=\mbox{Slores}(\overline{{\bf X}},{\bf b}, \lambda, \lambda_0, \theta^*_{\lambda_0})$}\label{alg:slores_g}
        \begin{algorithmic} \small
        \STATE {\bf Initialize} $\mathcal{R}:=\{1,\ldots,p\}$;
            \IF{$\lambda\geq\lambda_{max}$}
   		       \STATE {set} $\mathcal{R}=\emptyset$;
            \ELSE
   		       \FOR{$j=1$ {\bfseries to} $p$}\vspace{-3mm}
    			\STATE
    				\IF{${\bf P}\bar{\bf x}^j=0$}
    					\STATE {remove $j$ from $\mathcal{R}$};
    				\ELSIF{$\max\{T_{\xi}(\theta^*_{\lambda},\bar{\bf x}^{j};\theta^*_{\lambda_0}):		 	     \xi=\pm1\}<m\lambda$}
    					\STATE {remove $j$ from $\mathcal{R}$};
    				\ENDIF
    			
   		       \ENDFOR
            \ENDIF
        \STATE {\bfseries Return:} $\mathcal{R}$
        \end{algorithmic}
        \end{algorithm}
    \end{minipage}
     \end{scriptsize}
    \vspace{0.in}
  \end{wrapfigure}

Notice that, the output $\mathcal{R}$ of Slores is the indices of the features that need to be entered to the optimization. As a result, suppose the output of Algorithm \ref{alg:slores_g} is $\mathcal{R}=\{j_1,\ldots,j_k\}$, we can substitute the full matrix $\overline{\bf X}$ in problem (\ref{prob:logistic}) with the sub-matrix $\overline{\bf X}_{\mathcal{R}}=(\bar{\bf x}^{j_1},\ldots,\bar{\bf x}^{j_k})$ and just solve for $[\beta^*_{\lambda}]_{\mathcal{R}}$ and $c^*_{\lambda}$.

On the other hand, Algorithm \ref{alg:slores_g} implies that Slores needs five inputs. Since $\overline{\bf X}$ and ${\bf b}$ come with the data and $\lambda$ is chosen by the user, we only need to specify $\theta^*_{\lambda_0}$ and $\lambda_0$.
In other words, we need to provide Slores with an dual optimal solution of problem (\ref{prob:dual_logistic}) for an arbitrary parameter.
A natural choice is by setting $\lambda_0=\lambda_{max}$ and $\theta^*_{\lambda_0}=\theta^*_{\lambda_{max}}$ given in \eqref{eqn:lambdamx} and \eqref{eqn:thetamx}.

\section{Experiments}\label{sec:experiments}

We evaluate our screening rules using the newgroup data set \cite{Koh2007} and Yahoo web pages data sets \cite{ueda2002parametric}. The newgroup data set is cultured from the data by Koh et al. \cite{Koh2007}. The Yahoo data sets include 11 top-level categories, each of which is further divided into a set of subcategories. In our experiment we construct five balanced binary classification datasets from the topics of Computers, Education, Health, Recreation, and Science. For each topic, we choose samples from one subcategory as the positive class and randomly sample an equal number of samples from the rest of subcategories as the negative class. The statistics of the data sets are given in Table \ref{table:statistics}.

\begin{wraptable}{r}{0.5\textwidth}\small
\begin{minipage}{1\linewidth}
\begin{center}
\caption{Statistics of the test data sets. }\label{table:statistics}
\begin{tabular}{|c||c|c|c|}
\hline
Data set & $m$ & $p$ & no. nonzeros  \\
\hline\hline
newsgroup & 11269 & 61188 & 1467345  \\
\hline
Computers & 216 & 25259 & 23181  \\
\hline
Education & 254 & 20782 & 28287  \\
\hline
Health & 228 & 18430 & 40145  \\
\hline
Recreation  & 370 & 25095 & 49986 \\
\hline
Science  & 222 & 24002 & 37227 \\
\hline
\end{tabular}

\vspace{0.1in}

\caption{Running time (in seconds) of Slores, strong rule, SAFE and the solver. }\label{table:time_safe}
\begin{tabular}{|c|c|c||c|}
\hline
Slores & Strong Rule & SAFE & Solver  \\
\hline\hline
 0.37 & 0.33 & 1128.65 & 10.56 \\
\hline
\end{tabular}
\end{center}
\end{minipage}
\end{wraptable}

We compare the performance of Slores and the strong rule which achieves state-of-the-art performance for $\ell_1$ regularized LR. We do not include SAFE because it is less effective in discarding features than strong rules and requires much higher computational time \cite{Tibshirani11}. \figref{fig:bsc_introduction} has shown the performance of Slores, strong rule and SAFE. We compare the efficiency of the three screening rules on the same prostate cancer data set in Table \ref{table:time_safe}. All of the screening rules are tested along a sequence of $86$ parameter values equally spaced on the $\lambda/\lambda_{max}$ scale from $0.1$ to $0.95$. We repeat the procedure $100$ times and during each time we undersample $80\%$ of the data. We report the total running time of the three screening rules over the $86$ values of $\lambda/\lambda_{max}$ in Table \ref{table:time_safe}. For reference, we also report the total running time of the solver\footnote{In this paper, the ground truth is computed by SLEP \cite{Liu:2009:SLEP:manual}.}. We observe that the running time of Slores and strong rule is negligible compared to that of the solver. However, SAFE takes much longer time even than the solver.

In Section \ref{subsection:rejection_ratio}, we evaluate the performance of Slores and strong rule. Recall that we use the rejection ratio, i.e., the ratio between the number of features discarded by the screening rules and the number of features with $0$ coefficients in the solution, to measure the performance of screening rules. Note that, because no features with non-zero coefficients in the solution would be mistakenly discarded by Slores, its rejection ratio is no larger than one. We then compare the efficiency of Slores and strong rule in Section \ref{subsection:efficiency}.

The experiment settings are as follows. For each data set, we undersample $80\%$ of the date and run Slores and strong rules along a sequence of $86$ parameter values equally spaced on the $\lambda/\lambda_{max}$ scale from $0.1$ to $0.95$. We repeat the procedure $100$ times and report the average performance and running time at each of the $86$ values of $\lambda/\lambda_{max}$. Slores, strong rules and SAFE are all implemented in Matlab. All of the experiments are carried out on a Intel(R) (i7-2600) 3.4Ghz processor.

\subsection{Comparison of Performance}\label{subsection:rejection_ratio}

In this experiment, we evaluate the performance of the Slores and the strong rule via the rejection ratio.  \figref{fig:performance} shows the rejection ratio of Slores and strong rule on six real data sets. When $\lambda/\lambda_{max}>0.5$, we can see that both Slores and strong rule are able to identify almost $100\%$ of the inactive features, i.e., features with $0$ coefficients in the solution vector. However, when $\lambda/\lambda_{max}\leq0.5$, strong rule can not detect the inactive features.
In contrast, we observe that Slores exhibits much stronger capability in discarding inactive features for small $\lambda$, even when $\lambda/\lambda_{max}$ is close to $0.1$. Taking the data point at which $\lambda/\lambda_{max}=0.1$ for example, Slores discards about $99\%$ inactive features for the newsgroup data set. For the other data sets, more than $80\%$ inactive features are identified by Slores. Therefore, in terms of rejection ratio, Slores significantly outperforms the strong rule. It is also worthwhile to mention that the discarded features by Slores are guaranteed to have $0$ coefficients in the solution. But strong rule may mistakenly discard features which have non-zero coefficients in the solution.
\begin{figure}[h!]
\centering{
\subfigure[newsgroup] { \label{fig:a}
\includegraphics[width=0.3\columnwidth]{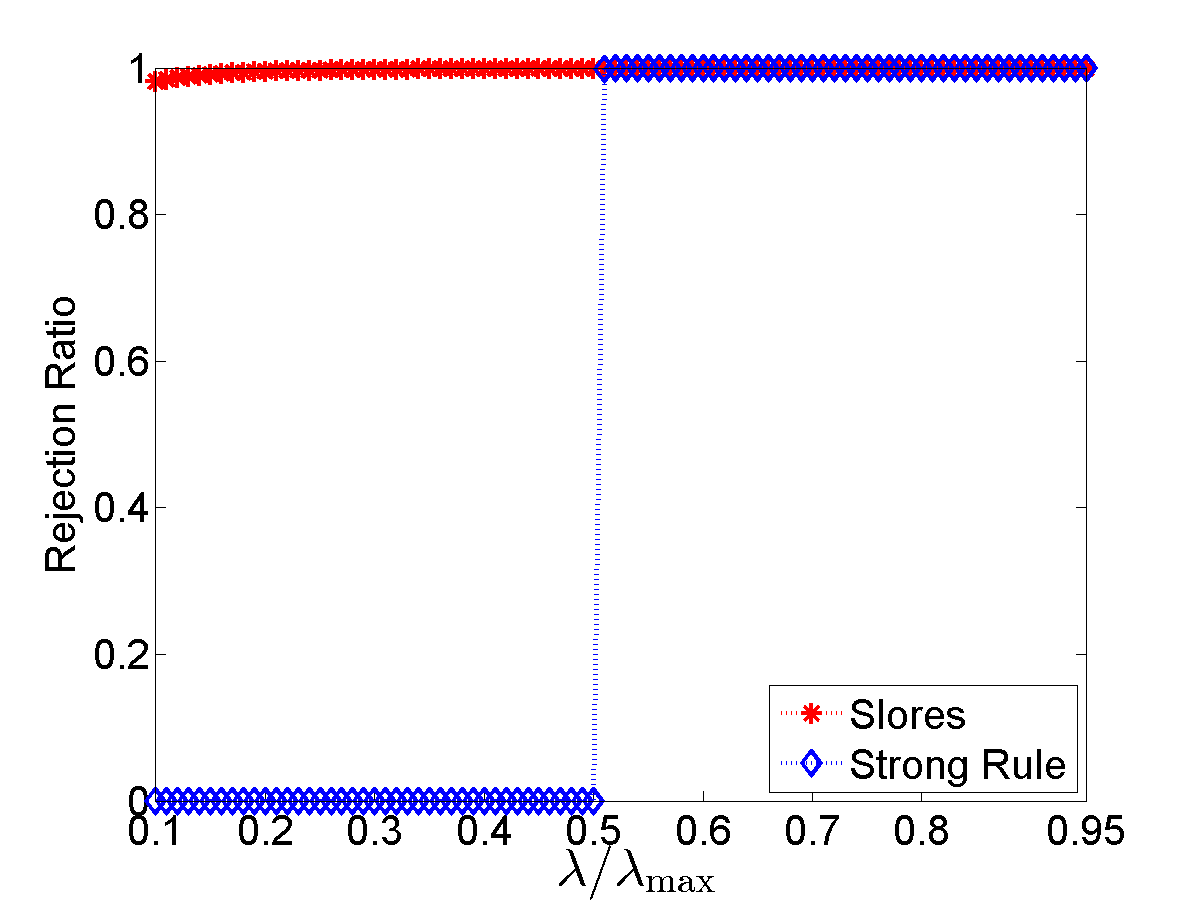}
}
\subfigure[Computers] { \label{fig:b}
\includegraphics[width=0.3\columnwidth]{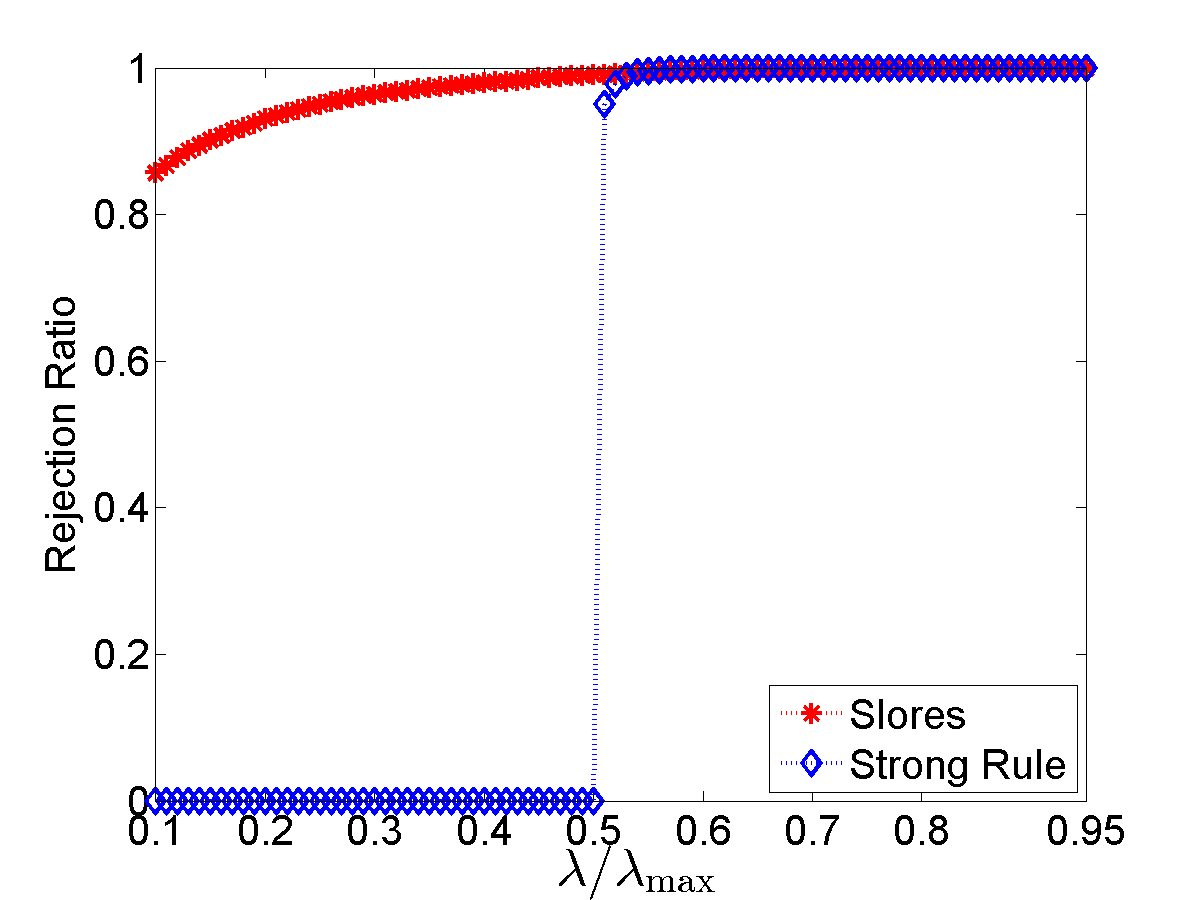}
}
\subfigure[Education] { \label{fig:c}
\includegraphics[width=0.3\columnwidth]{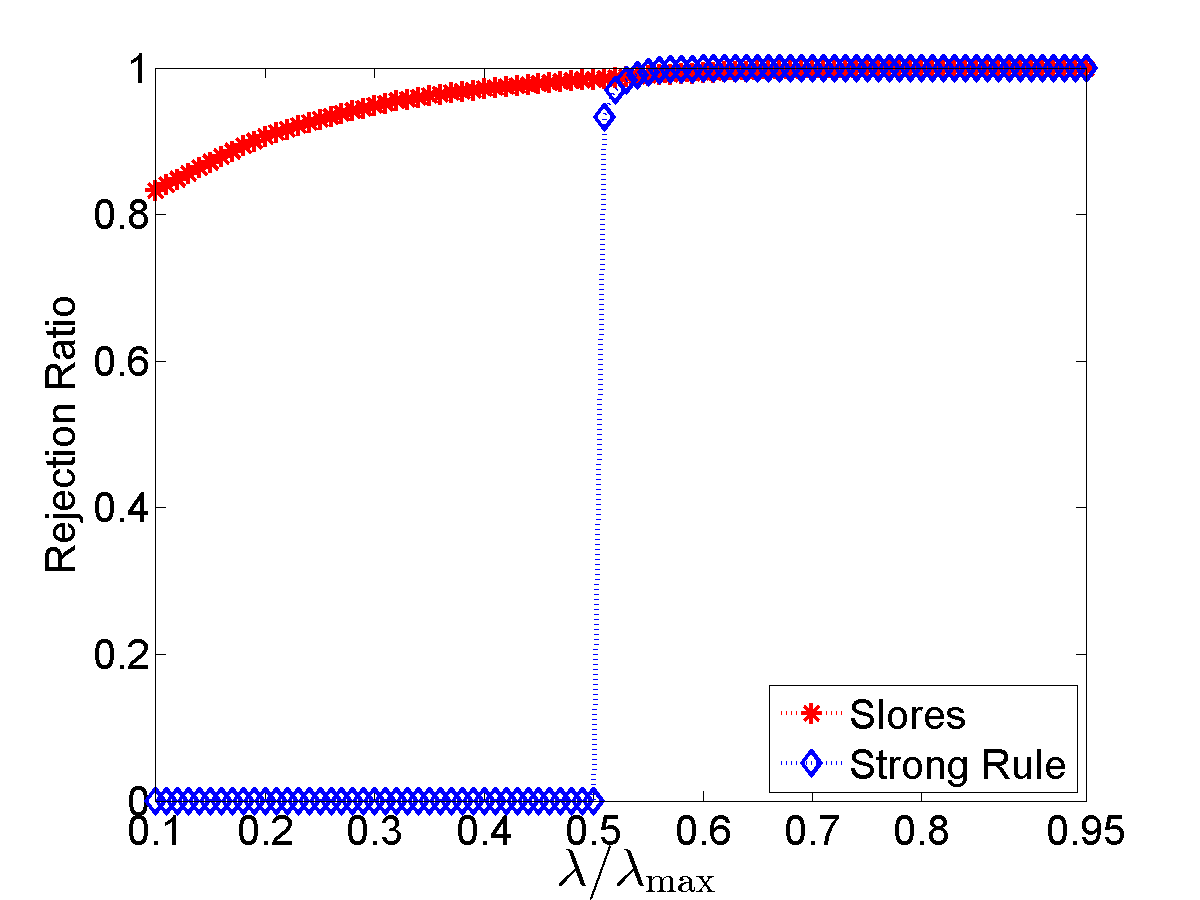}
}}\\ \vspace{-0.1in}
\centering{
\subfigure[Health] { \label{fig:a}
\includegraphics[width=0.3\columnwidth]{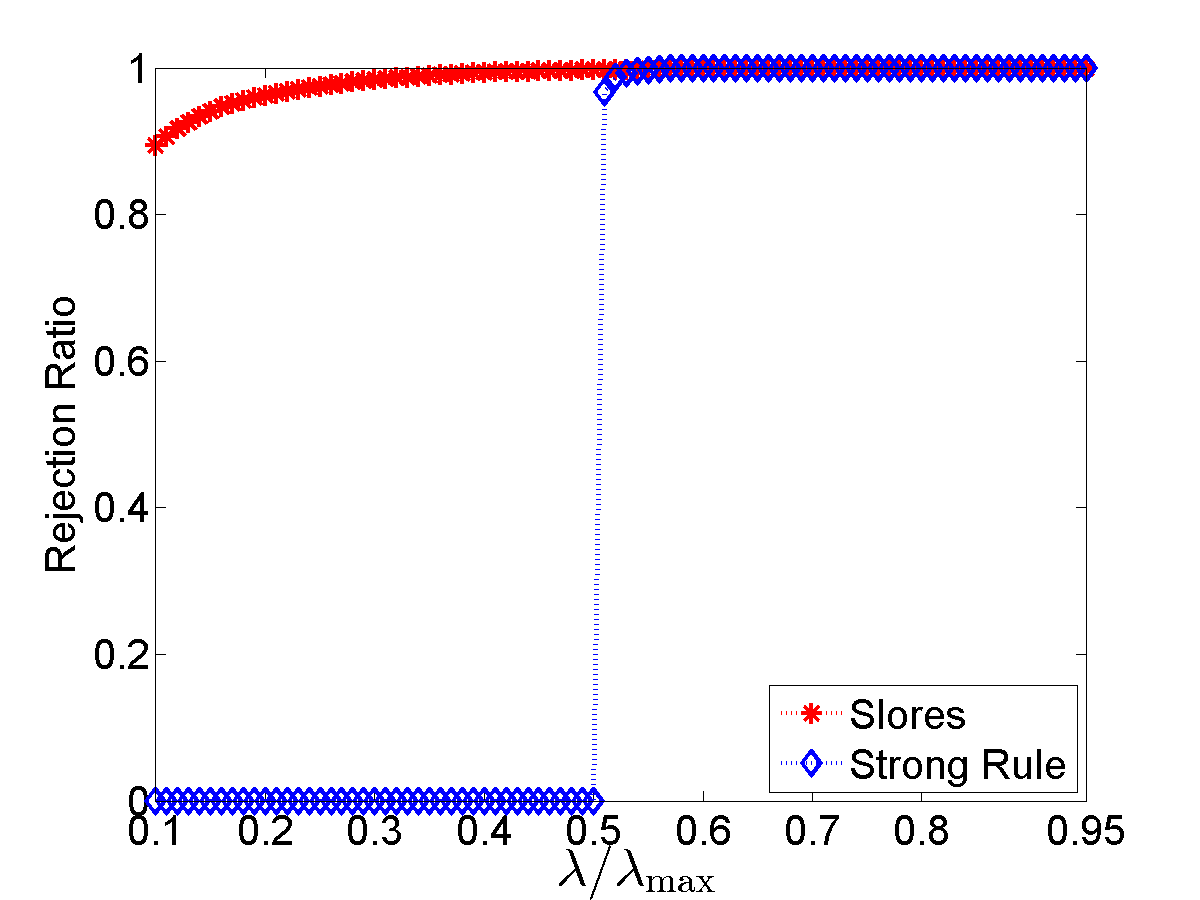}
}
\subfigure[Recreation] { \label{fig:b}
\includegraphics[width=0.3\columnwidth]{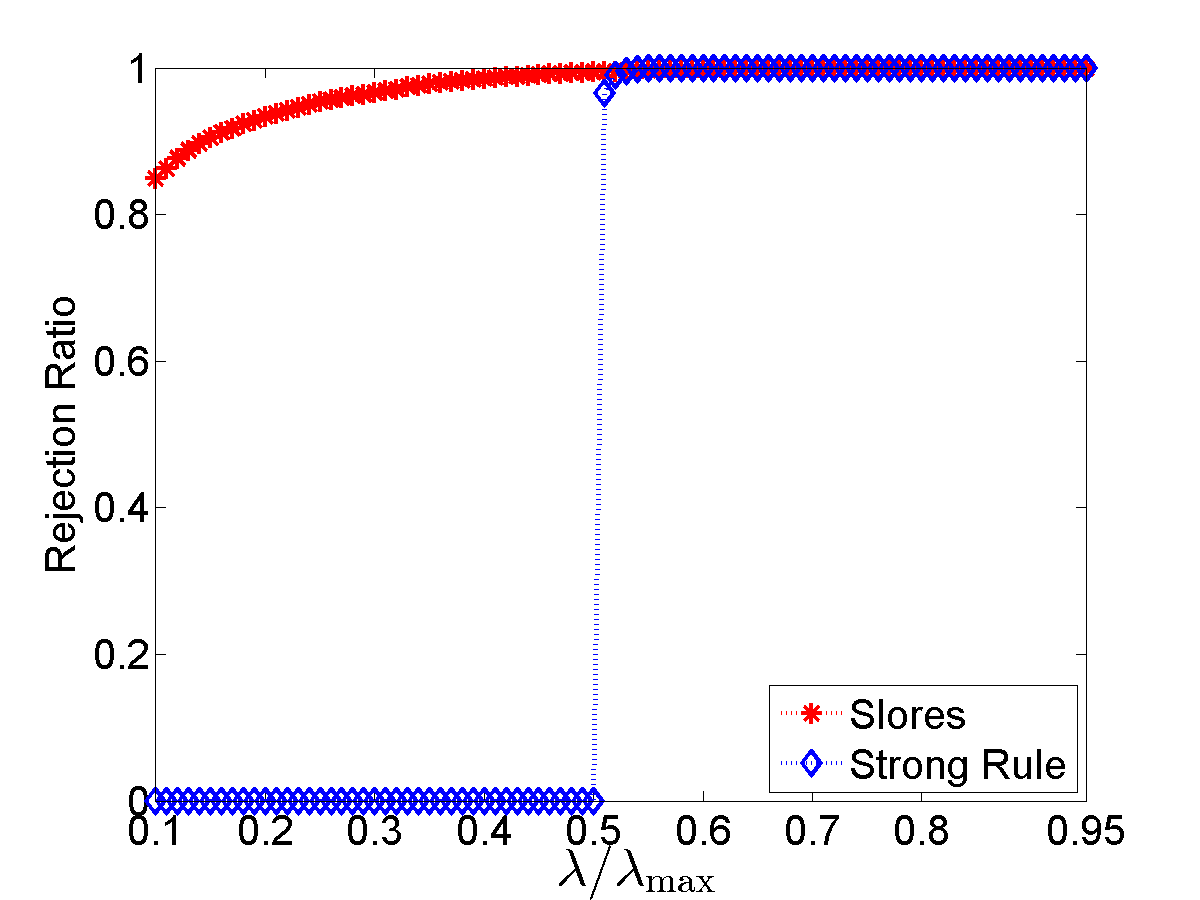}
}
\subfigure[Science] { \label{fig:c}
\includegraphics[width=0.3\columnwidth]{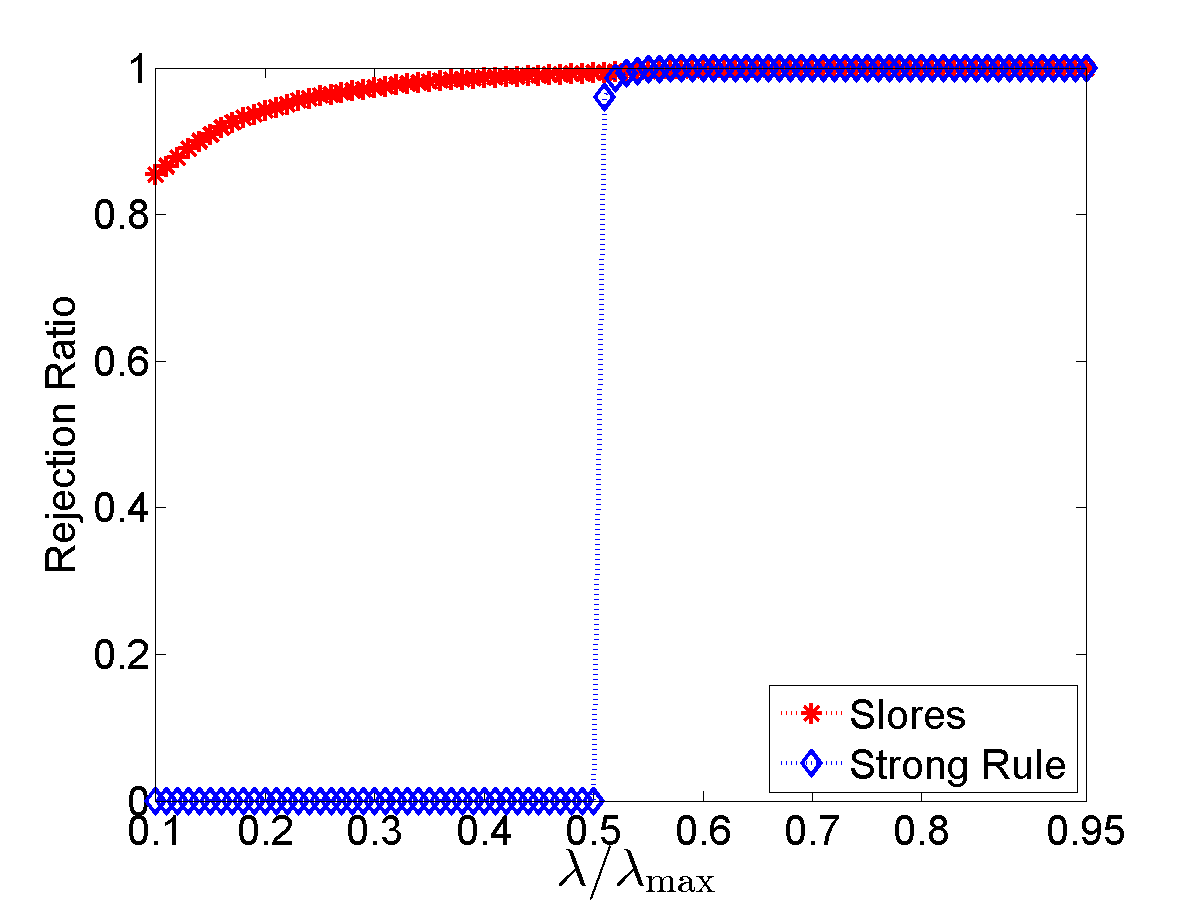}
}}\\[-0.2cm]
\caption{Comparison of the performance of Slores and strong rules on six real data sets.}
\label{fig:performance}
\end{figure}
\vspace{-0.in}
\subsection{Comparison of Efficiency}\label{subsection:efficiency}

\begin{figure}[h!]
\vspace{-0.1in}
\centering{
\subfigure[newsgroup] { \label{fig:a}
\includegraphics[width=0.3\columnwidth]{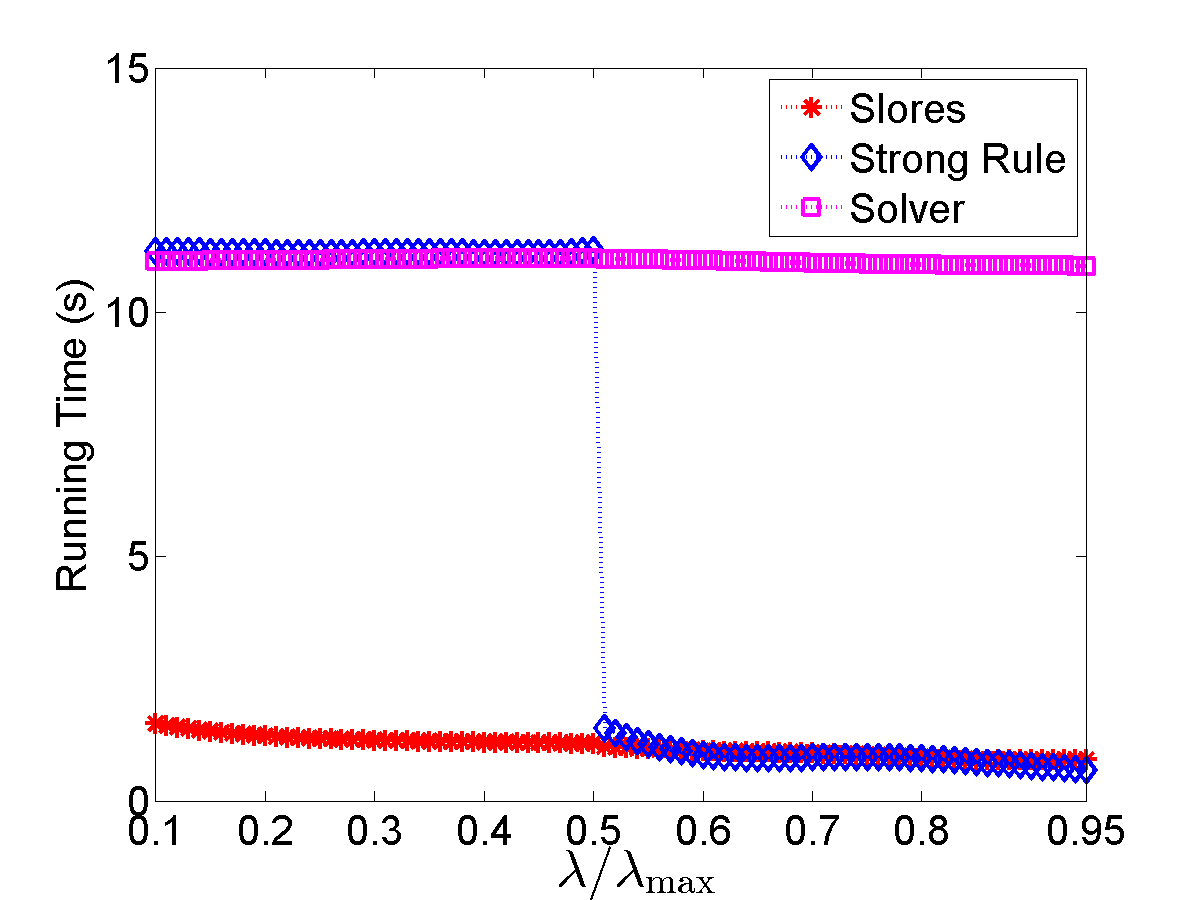}
}
\subfigure[Computers] { \label{fig:b}
\includegraphics[width=0.3\columnwidth]{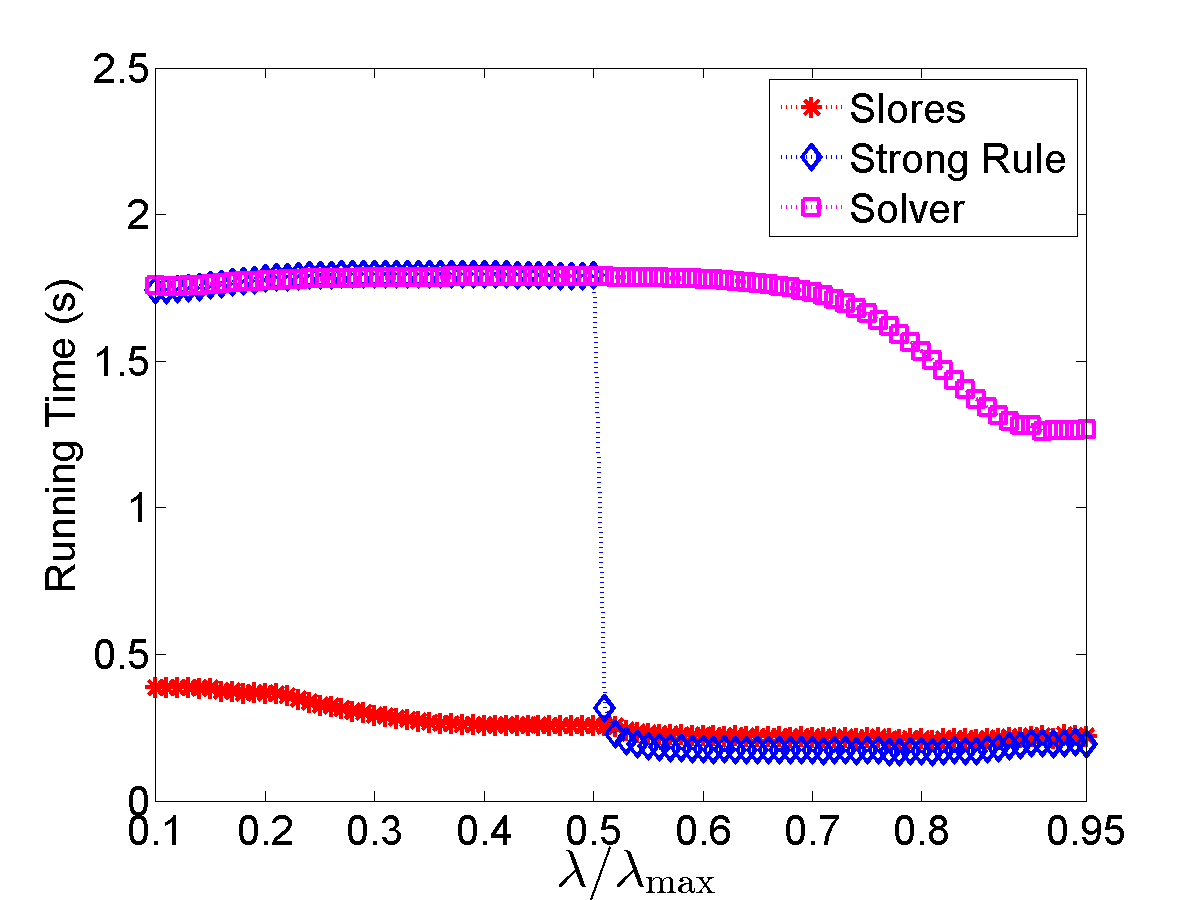}
}
\subfigure[Education] { \label{fig:c}
\includegraphics[width=0.3\columnwidth]{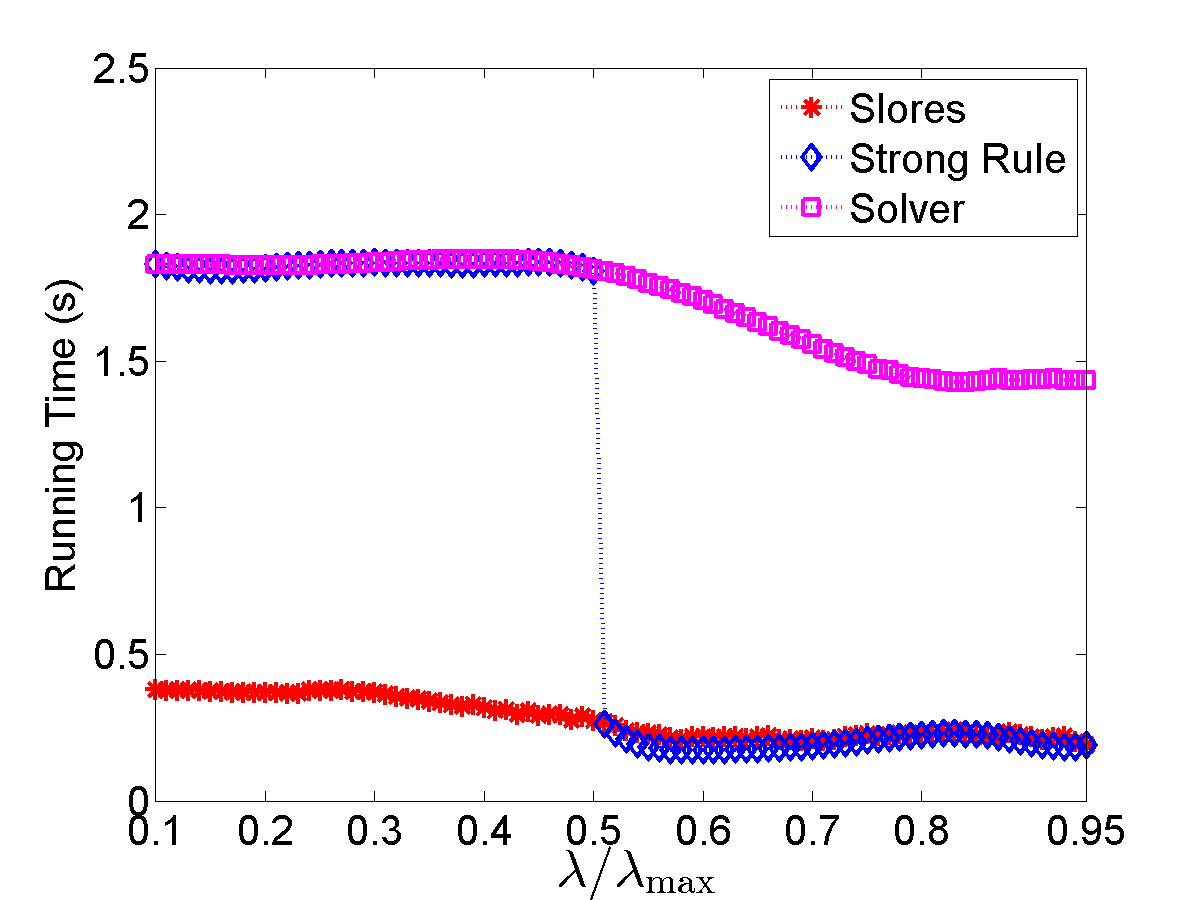}
}}\\ \vspace{-0.1in}
\centering{
\subfigure[Health] { \label{fig:a}
\includegraphics[width=0.3\columnwidth]{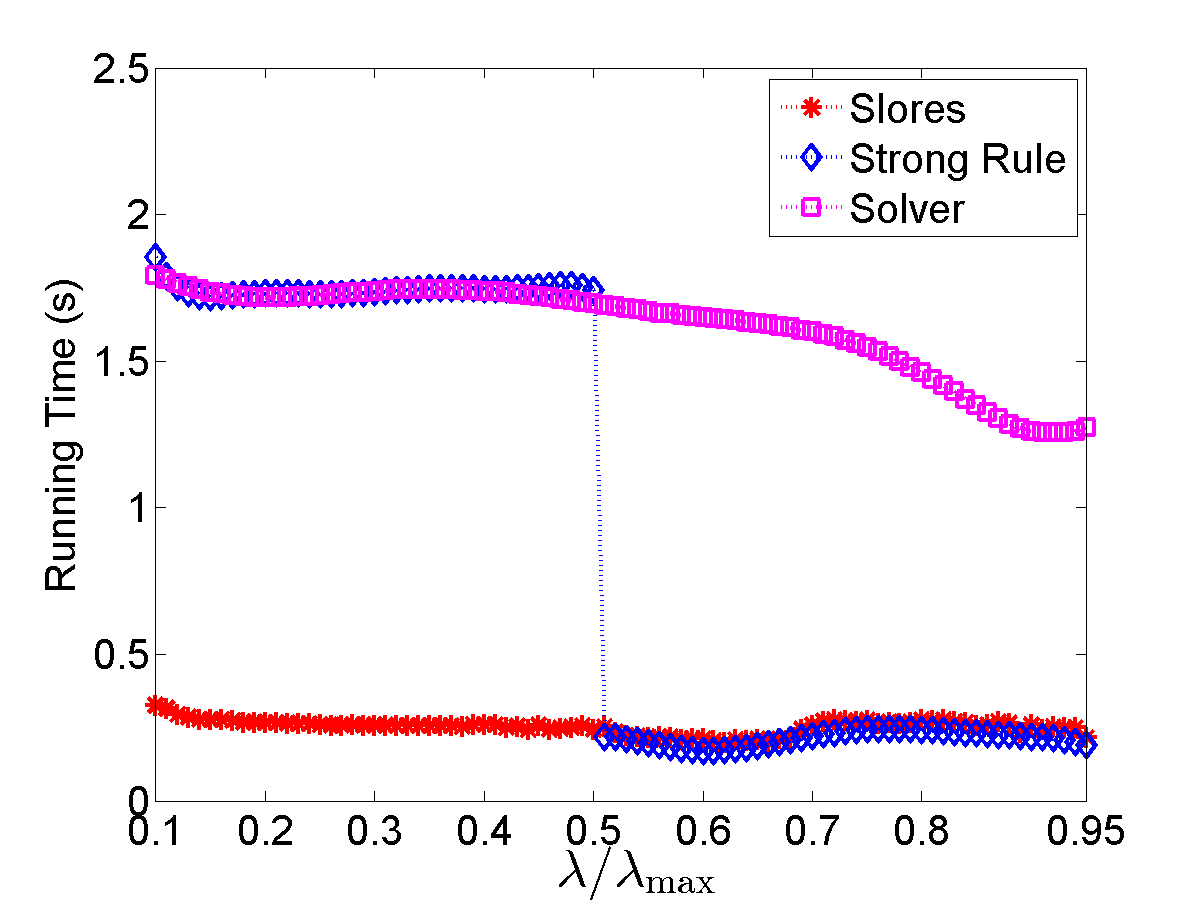}
}
\subfigure[Recreation] { \label{fig:b}
\includegraphics[width=0.3\columnwidth]{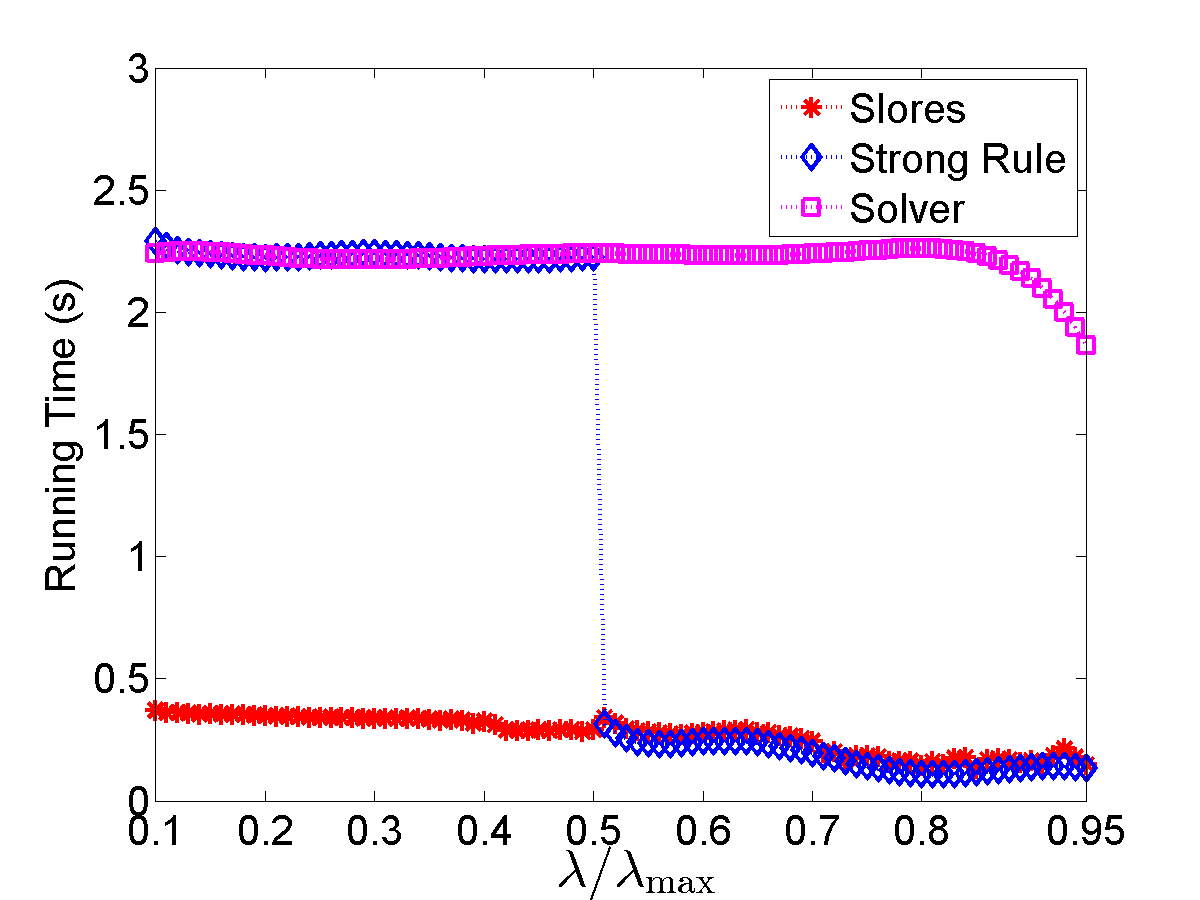}
}
\subfigure[Science] { \label{fig:c}
\includegraphics[width=0.3\columnwidth]{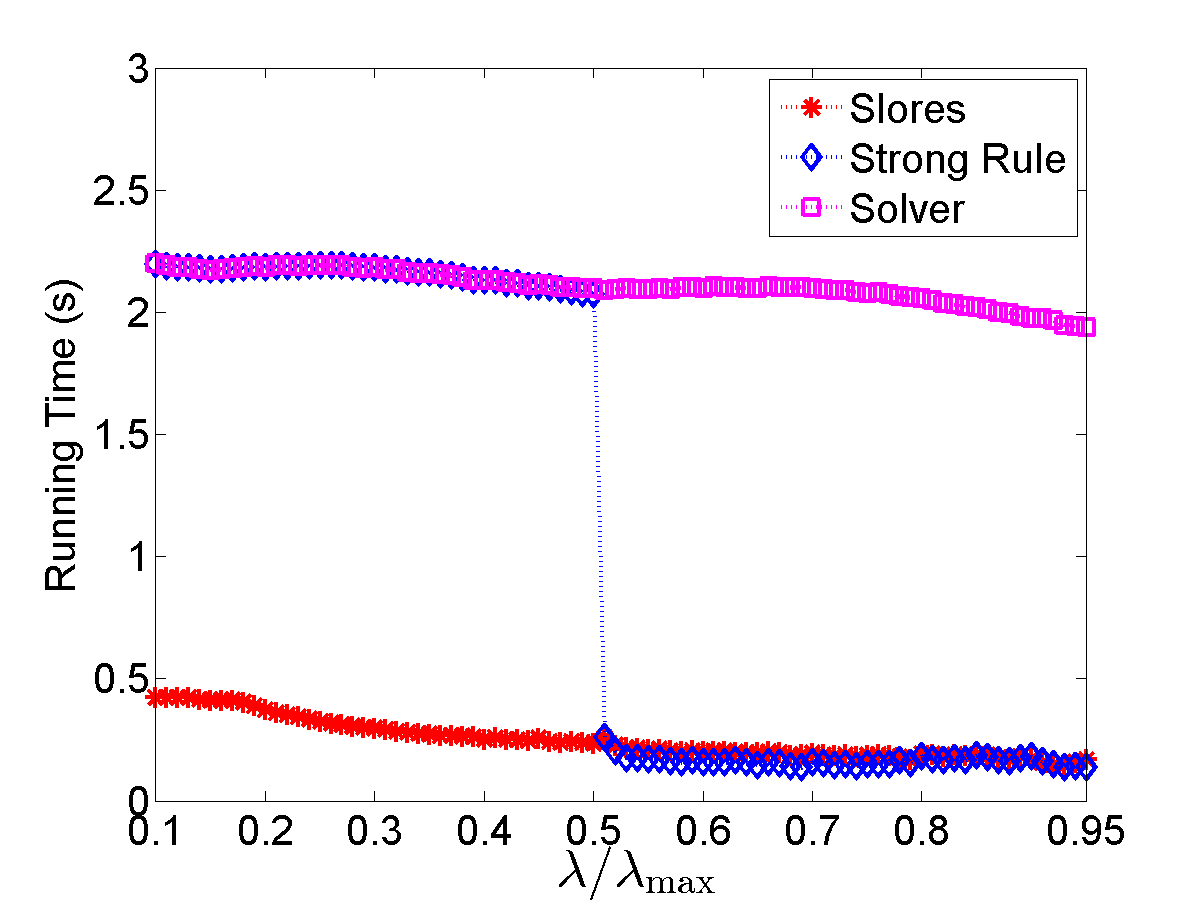}
}}\\[-0.2cm]
\caption{Comparison of the efficiency of Slores and strong rule on six real data sets.}
\label{fig:efficiency}
\vspace{-0.1in}
\end{figure}
We compare efficiency of Slores and the strong rule in this experiment. The data sets for evaluating the rules are the same as Section \ref{subsection:rejection_ratio}. The running time of the screening rules reported in \figref{fig:efficiency} includes the computational cost of the rules themselves and that of the solver after screening. We plot the running time of the screening rules against that of the solver without screening. As indicated by \figref{fig:performance}, when $\lambda/\lambda_{max}>0.5$, Slores and strong rule discards almost $100\%$ of the inactive features. As a result, the size of the feature matrix involved in the optimization of problem (\ref{prob:logistic}) is greatly reduced. From \figref{fig:efficiency}, we can observe that the efficiency is improved by about one magnitude on average compared to that of the solver without screening. However, when $\lambda/\lambda_{max}<0.5$, strong rule can not identify any inactive features and thus the running time is almost the same as that of the solver without screening. In contrast, Slores is still able to identify more than $80\%$ of the inactive features for the data sets cultured from the Yahoo web pages data sets and thus the efficiency is improved by roughly $5$ times. For the newgroup data set, about $99\%$ inactive features are identified by Slores which leads to about $10$ times savings in running time. These results demonstrate the power of the proposed Slores rule in improving the efficiency of solving the $\ell_1$ regularized LR.

\section{Conclusions}\label{sec:conclusion}

In this paper, we propose novel screening rules to effectively discard features for $\ell_1$ regularized LR. Extensive numerical experiments on real data demonstrate that Slores outperforms the existing state-of-the-art screening rules. We plan to extend the framework of Slores to more general sparse formulations, including convex ones, like group Lasso, fused Lasso, $\ell_1$ regularized SVM, and non-convex ones, like  $\ell_p$ regularized problems where $0<p<1$.

\clearpage
\newpage
{
\bibliographystyle{plainnat}
\bibliography{refs}

\begin{thebibliography}{31}
\providecommand{\natexlab}[1]{#1}
\providecommand{\url}[1]{\texttt{#1}}
\expandafter\ifx\csname urlstyle\endcsname\relax
  \providecommand{\doi}[1]{doi: #1}\else
  \providecommand{\doi}{doi: \begingroup \urlstyle{rm}\Url}\fi

\bibitem[Asgary et~al.(2007)Asgary, Jahandideh, Abdolmaleki, and
  Kazemnejad]{Asgary2007}
M.~Asgary, S.~Jahandideh, P.~Abdolmaleki, and A.~Kazemnejad.
\newblock Analysis and identification of $\beta$-turn types using multinomial
  logistic regression and artificial neural network.
\newblock \emph{Bioinformatics}, 23\penalty0 (23):\penalty0 3125--3130, 2007.
\newblock URL
  \url{http://dblp.uni-trier.de/db/journals/bioinformatics/bioinformatics23.html#AsgaryJAK07}.

\bibitem[Boyd et~al.(1987)Boyd, Tolson, and Copes]{Boyd1987}
C.~Boyd, M.~Tolson, and W.~Copes.
\newblock Evaluating trauma care: The {TRISS} method, trauma score and the
  injury severity score.
\newblock \emph{Journal of Trauma}, 27:\penalty0 370--378, 1987.

\bibitem[Boyd and Vandenberghe(2004)]{Boyd04}
S.~Boyd and L.~Vandenberghe.
\newblock \emph{Convex Optimization}.
\newblock Cambridge University Press, 2004.

\bibitem[Brzezinski and Knafl(1999)]{BrzezinskiK99}
Jack~R. Brzezinski and George~J. Knafl.
\newblock Logistic regression modeling for context-based classification.
\newblock In \emph{DEXA Workshop}, pages 755--759, 1999.
\newblock URL
  \url{http://dblp.uni-trier.de/db/conf/dexaw/dexaw99.html#BrzezinskiK99}.

\bibitem[Chaudhuri and Monteleoni(2008)]{Chaudhuri2008}
K.~Chaudhuri and C.~Monteleoni.
\newblock Privacy-preserving logistic regression.
\newblock In \emph{NIPS}, 2008.

\bibitem[Efron et~al.(2004)Efron, Hastie, Johnstone, and Tibshirani]{Efron04}
B.~Efron, T.~Hastie, I.~Johnstone, and R.~Tibshirani.
\newblock Least angle regression.
\newblock \emph{Ann. Statist.}, 32:\penalty0 407--499, 2004.

\bibitem[{El Ghaoui} et~al.(){El Ghaoui}, Viallon, and Rabbani]{ElGhaoui2011}
L.~{El Ghaoui}, V.~Viallon, and T.~Rabbani.
\newblock Safe feature elimination for the lasso and sparse supervised learning
  problems.
\newblock arXiv:1009.4219v2.

\bibitem[Friedman et~al.(2000)Friedman, Hastie, and Tibshirani]{Friedman2000}
J.~Friedman, T.~Hastie, and R.~Tibshirani.
\newblock Additive logistic regression: a statistical view of boosting.
\newblock \emph{The Annals of Statistics}, 38\penalty0 (2), 2000.

\bibitem[Genkin et~al.(2007)Genkin, Lewis, and Madigan]{Genkin2007}
A.~Genkin, D.~Lewis, and D.~Madigan.
\newblock Large-scale bayesian logistic regression for text categorization.
\newblock \emph{Technometrics}, 49:\penalty0 291--304(14), 2007.
\newblock \doi{doi:10.1198/004017007000000245}.
\newblock URL
  \url{http://www.ingentaconnect.com/content/asa/tech/2007/00000049/00000003/art00007}.

\bibitem[Gould et~al.(2008)Gould, Rodgers, Cohen, Elidan, and
  Koller]{Gould2008}
S.~Gould, J.~Rodgers, D.~Cohen, G.~Elidan, and D.~Koller.
\newblock Multi-class segmentation with relative location prior.
\newblock \emph{International Journal of Computer Vision}, 80\penalty0
  (3):\penalty0 300--316, 2008.
\newblock URL
  \url{http://dblp.uni-trier.de/db/journals/ijcv/ijcv80.html#GouldRCEK08}.

\bibitem[G\"{u}ler(2010)]{Guler2010}
O.~G\"{u}ler.
\newblock \emph{Foundations of Optimization}.
\newblock Springer, 2010.

\bibitem[Koh et~al.(2007)Koh, Kim, and Boyd]{Koh2007}
K.~Koh, S.~J. Kim, and S.~Boyd.
\newblock An interior-point method for large scale l1-regularized logistic
  regression.
\newblock \emph{J. Mach. Learn. Res.}, 8:\penalty0 1519--1555, 2007.

\bibitem[Krishnapuram et~al.(2005)Krishnapuram, Carin, Figueiredo, and
  Hartemink]{Krishnapuram2005}
B.~Krishnapuram, L.~Carin, M.~Figueiredo, and A.~Hartemink.
\newblock Sparse multinomial logistic regression: Fast algorithms and
  generalization bounds.
\newblock \emph{IEEE Trans. Pattern Anal. Mach. Intell.}, 27:\penalty0
  957--968, 2005.

\bibitem[Lee et~al.(2006)Lee, Lee, Abbeel, and Ng]{Lee06}
S.~Lee, H.~Lee, P.~Abbeel, and A.~Ng.
\newblock Efficient l1 regularized logistic regression.
\newblock In \emph{In AAAI-06}, 2006.

\bibitem[Liao and Chin(2007)]{Liao07}
J.~Liao and K.~Chin.
\newblock Logistic regression for disease classification using microarray data:
  model selection in a large p and small n case.
\newblock \emph{Bioinformatics}, 23\penalty0 (15):\penalty0 1945--1951, 2007.
\newblock URL
  \url{http://dblp.uni-trier.de/db/journals/bioinformatics/bioinformatics23.html#LiaoC07}.

\bibitem[Liu et~al.(2009)Liu, Ji, and Ye]{Liu:2009:SLEP:manual}
J.~Liu, S.~Ji, and J.~Ye.
\newblock \emph{SLEP: Sparse Learning with Efficient Projections}.
\newblock Arizona State University, 2009.
\newblock URL \url{http://www.public.asu.edu/~jye02/Software/SLEP}.

\bibitem[Martins et~al.(2007)Martins, Sousa, and Martins]{MartinsSM07}
S.~Martins, L.~Sousa, and J.~Martins.
\newblock Additive logistic regression applied to retina modelling.
\newblock In \emph{ICIP (3)}, pages 309--312. IEEE, 2007.
\newblock URL
  \url{http://dblp.uni-trier.de/db/conf/icip/icip2007-3.html#MartinsSM07}.

\bibitem[Nesterov(2004)]{Nesterov2004}
Y.~Nesterov.
\newblock \emph{Introductory Lectures on Convex Optimization: A Basic Course}.
\newblock Springer, 2004.

\bibitem[Palei and Das(2009)]{Palei2009}
S.~Palei and S.~Das.
\newblock Logistic regression model for prediction of roof fall risks in bord
  and pillar workings in coal mines: An approach.
\newblock \emph{Safety Science}, 47:\penalty0 88--96, 2009.

\bibitem[Park and Hastie(2007)]{Park2007a}
M.~Park and T.~Hastie.
\newblock $\ell_1$ regularized path algorithm for generalized linear models.
\newblock \emph{J. R. Statist. Soc. B}, 69:\penalty0 659--677, 2007.

\bibitem[Ruszczy\'{n}ski(2006)]{Ruszczynski2006}
A.~Ruszczy\'{n}ski.
\newblock \emph{Nonlinear Optimization}.
\newblock Princeton university Press, 2006.

\bibitem[Sartor et~al.(2009)Sartor, Leikauf, and Medvedovic]{Sartor09}
M.~Sartor, G.~Leikauf, and M.~Medvedovic.
\newblock {LR}path: a logistic regression approach for identifying enriched
  biological groups in gene expression data.
\newblock \emph{Bioinformatics}, 25\penalty0 (2):\penalty0 211--217, 2009.
\newblock URL
  \url{http://dblp.uni-trier.de/db/journals/bioinformatics/bioinformatics25.html#SartorLM09}.

\bibitem[Sun et~al.(2009)Sun, Erp, Thompson, Bearden, Daley, Kushan, Hardt,
  Nuechterlein, Toga, and Cannon]{Sun2009}
D.~Sun, T.~Erp, P.~Thompson, C.~Bearden, M.~Daley, L.~Kushan, M.~Hardt,
  K.~Nuechterlein, A.~Toga, and T.~Cannon.
\newblock Elucidating a magnetic resonance imaging-based neuroanatomic
  biomarker for psychosis: classification analysis using probabilistic brain
  atlas and machine learning algorithms.
\newblock \emph{Biological Psychiatry}, 66:\penalty0 1055--1--60, 2009.

\bibitem[Tibshirani et~al.(2012)Tibshirani, Bien, Friedman, Hastie, Simon,
  Taylor, and Tibshirani]{Tibshirani11}
R.~Tibshirani, J.~Bien, J.~Friedman, T.~Hastie, N.~Simon, J.~Taylor, and
  R.~Tibshirani.
\newblock Strong rules for discarding predictors in lasso-type problems.
\newblock \emph{J. R. Statist. Soc. B}, 74:\penalty0 245--266, 2012.

\bibitem[Tibshirani(1996)]{Tibshirani1996}
Robert Tibshirani.
\newblock Regression shringkage and selection via the lasso.
\newblock \emph{J. R. Statist. Soc. B}, 58:\penalty0 267--288, 1996.

\bibitem[Ueda and Saito(2002)]{ueda2002parametric}
Naonori Ueda and Kazumi Saito.
\newblock Parametric mixture models for multi-labeled text.
\newblock \emph{Advances in neural information processing systems},
  15:\penalty0 721--728, 2002.

\bibitem[Wu et~al.(2009)Wu, Chen, Hastie, Sobel, and Lange]{Wu2009}
T.~T. Wu, Y.~F. Chen, T.~Hastie, E.~Sobel, and K.~Lange.
\newblock Genome-wide association analysis by lasso penalized logistic
  regression.
\newblock \emph{Bioinformatics}, 25:\penalty0 714--721, 2009.

\bibitem[Xiang and Ramadge(2012)]{Xiang2012}
Z.~J. Xiang and P.~J. Ramadge.
\newblock Fast lasso screening tests based on correlations.
\newblock In \emph{IEEE ICASSP}, 2012.

\bibitem[Xiang et~al.(2011)Xiang, Xu, and Ramadge]{Xiang2011}
Z.~J. Xiang, H.~Xu, and P.~J. Ramadge.
\newblock Learning sparse representation of high dimensional data on large
  scale dictionaries.
\newblock In \emph{NIPS}, 2011.

\bibitem[Zhu and Hastie(2001)]{Zhu01}
J.~Zhu and T.~Hastie.
\newblock Kernel logistic regression and the import vector machine.
\newblock In Thomas~G. Dietterich, Suzanna Becker, and Zoubin Ghahramani,
  editors, \emph{NIPS}, pages 1081--1088. MIT Press, 2001.
\newblock URL \url{http://dblp.uni-trier.de/db/conf/nips/nips2001.html#ZhuH01}.

\bibitem[Zhu and Hastie(2004)]{Zhu2004}
J.~Zhu and T.~Hastie.
\newblock Classification of gene microarrays by penalized logistic regression.
\newblock \emph{Biostatistics}, 5:\penalty0 427--443, 2004.

\end{thebibliography}


%

\clearpage
\newpage
\onecolumn


\section*{Appendix}

In this appendix, we will provide detailed proofs of the theorems, lemmas and corollaries in the main text.

\appendix

\section{Deviation of the Dual Problem of the Sparse Logistic Regression}\label{section:lasso}

Suppose we are given a set of training samples $\{{\bf x}_i\}_{i=1}^m$ and the associate labels ${\bf b}\in\Re^m$, where ${\bf x}_i\in\Re^p$ and $b_i\in\{1,-1\}$ for all $i\in\{1,\ldots,m\}$. The logistic regression problem takes the form as follows:
\begin{equation*}
\min_{\beta,c}\,\,\frac{1}{m}\sum_{i=1}^m\log (1+\exp(-\langle\beta,\bar{\bf{x}}_i \rangle-b_ic))+\lambda\|\beta\|_1,
\end{equation*}
where $\beta\in\Re^p$ and $c\in\Re$ are the model parameters to be estimated, $\bar{\bf x}_i=b_i{\bf x}_i$ and $\lambda>0$. Let $\bar{\bf X}$ denote the data matrix whose rows consist of $\bar{\bf x}_i$. Denote the columns of $\bar{\bf X}$ as $\bar{\bf x}^j$, $j\in\{1,\ldots,p\}$.

\subsection{Dual Formulation}
By introducing the slack variables $q_i=\frac{1}{m}(-\langle \beta,\bar{x}_i \rangle - b_ic)$ for all $i\in\{1,\ldots,m\}$, problem (\ref{prob:logistic}) can be formulated as:
\begin{align}
\min_{\beta,c}\,\,&\frac{1}{m}\sum_{i=1}^m\log \big[1+\exp(mq_i)\big]+\lambda\|\beta\|_1,\\ \nonumber
\mbox{s.t.}\,\,&q_i=\frac{1}{m}(-\langle \beta,\bar{{\bf x}}_i \rangle - b_ic),\,i\in\{1,\ldots,m\}.
\end{align}
The Lagrangian is
\begin{align}
L({\bf q},\beta,c;\theta)&=\frac{1}{m}\sum_{i=1}^m\log \big[1+\exp(mq_i)\big]+\lambda\|\beta\|_1+\sum_{i=1}^m\theta_i\big[\frac{1}{m}(-\langle \beta,\bar{{\bf x}}_i\rangle-b_ic)-q_i\big]\\ \nonumber
&=\frac{1}{m}\sum_{i=1}^m\log\big[1+\exp(mq_i)\big]-\langle\theta,{\bf q} \rangle+\lambda\|\beta\|_1-\frac{1}{m}\langle\sum_{i=1}^m\theta_i\bar{\bf x}_i,\beta \rangle-\frac{c}{m}\langle \theta,{\bf b} \rangle
\end{align}
In order to find the dual function, we need to solve the following subproblems:
\begin{align}
&\min_{\bf q}f_1({\bf q})=\frac{1}{m}\sum_{i=1}^m\log\big[1+\exp(mq_i)\big]-\langle\theta,{\bf q} \rangle,\\
&\min_{\beta}f_2(\beta)=\lambda\|\beta\|_1-\frac{1}{m}\langle\sum_{i=1}^m\theta_i\bar{\bf x}_i,\beta \rangle,\\
&\min_{c}f_3(c)=-\frac{c}{m}\langle \theta,{\bf b} \rangle.
\end{align}
Consider $f_1({\bf q})$. It is easy to see
\begin{align*}
[\nabla f_1({\bf q})]_i=\frac{\exp(mq_i)}{1+\exp(mq_i)}-\theta_i.
\end{align*}
By setting $[\nabla f_1({\bf q})]_i=0$, we get
\begin{align*}
\exp(mq_i')=\frac{\theta_i}{1-\theta_i},\,\,q'_i=\frac{1}{m}\log(\frac{\theta_i}{1-\theta_i}).
\end{align*}
Clearly, we can see that $\theta_i\in(0,1)$ for all $i\in\{1,\ldots,m\}$.
Therefore,
\begin{align*}
\min_{\bf q}f_1({\bf q})=f_1({\bf q}')=-\frac{1}{m}\sum_{i=1}^m\big[\theta_i\log(\theta_i)+(1-\theta_i)\log(1-\theta_i)\big].
\end{align*}
Consider $f_2(\beta)$ and let $\beta'=\argmin_{\beta}f_2(\beta)$. The optimality condition is
\begin{align*}
0\in\lambda{\bf v}-\frac{1}{m}\theta_i\bar{\bf x}_i,\,\, \|{\bf v}\|_{\infty}\leq 1,\,\,\langle {\bf v},\beta'\rangle=\|\beta'\|_1.
\end{align*}
It is easy to see that
\begin{align*}
\langle\frac{1}{m}\sum_{i=1}^m\theta_i\bar{\bf x}_i,\beta' \rangle=\|\beta'\|_1,
\end{align*}
and thus
\begin{align*}
\langle \theta,\bar{\bf x}^j\rangle\in
\begin{cases}
m\lambda,\hspace{15mm}\mbox{if }\beta'_j>0,\\
-m\lambda,\hspace{12.5mm}\mbox{if }\beta'_j<0,\\
[-m\lambda,m\lambda],\hspace{4mm}\mbox{if }\beta'_j=0.
\end{cases}
\end{align*}
Moreover, it follows that
\begin{align*}
\min_{\beta}f_2(\beta)=f_2(\beta')=0.
\end{align*}
For $f_3(c)$, we can see that
\begin{align*}
f'_3(c)=-\frac{1}{m}\langle \theta,{\bf b}\rangle.
\end{align*}
Therefore, we have
\begin{align*}
\langle\theta,{\bf b} \rangle=0,
\end{align*}
since otherwise $\inf_{c}f_3(c)=-\infty$ and the dual problem is infeasible. Clearly, $\min_cf_3(c)=0$.

All together, the dual problem is
\begin{align}\tag{LRD$_{\lambda}$}\label{prob:dual_logistic}
\min_{\theta}\,\,&g(\theta)=\frac{1}{m}\sum_{i=1}^m\big[\theta_i\log(\theta_i)
+(1-\theta_i)\log(1-\theta_i)\big],\\ \nonumber
\mbox{s.t.}\,\,&\|\bar{\bf X}^T\theta\|_{\infty}\leq m\lambda,\\ \nonumber
&\langle\theta, {\bf b} \rangle = 0,\\ \nonumber
&\theta\in\mathcal{C},
\end{align}
where $\mathcal{C}=\{\theta\in\Re^m:\theta_i\in(0,1),\,\,i=1,\ldots,m\}$.


\section{Proof of the Existence of the Optimal Solution of (LRD$_{\lambda}$)}

In this section, we prove that problem (\ref{prob:dual_logistic}) has a unique optimal solution for all $\lambda>0$.

Therefore, in Lemma A, we first show that problem (\ref{prob:dual_logistic}) is feasible for all $\lambda>0$. Then Lemma B confirms the existence of the dual optimal solution $\theta^*_{\lambda}$.
\begin{lemma}{A}{}\label{lemma:feasibility}
1. For $\lambda>0$, problem (\ref{prob:dual_logistic}) is feasible, i.e., $\mathcal{F}_{\lambda}\neq\emptyset$.

\hspace{17mm}2. The Slater's condition holds for problem (\ref{prob:dual_logistic}) in which $\lambda>0$.
\end{lemma}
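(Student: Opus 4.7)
The plan is to exhibit explicit members of $\mathcal{F}_\lambda$ rather than invoke any abstract existence theorem. For each $\lambda>0$ I will try
\[
\tilde{\theta}^\lambda = \min\{1,\lambda/\lambda_{max}\}\,\theta^*_{\lambda_{max}},
\]
where $\theta^*_{\lambda_{max}}$ is the closed-form vector from \eqref{eqn:thetamx}. I then need to verify the three defining conditions of $\mathcal{F}_\lambda$. First, $\tilde{\theta}^\lambda\in\mathcal{C}$ holds because the components $m^\pm/m$ of $\theta^*_{\lambda_{max}}$ lie strictly in $(0,1)$ and the positive scalar $\min\{1,\lambda/\lambda_{max}\}\in(0,1]$ preserves this. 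Second, $\langle\tilde{\theta}^\lambda,\mathbf{b}\rangle=0$ reduces to the direct computation $(m^-/m)\cdot m^+ - (m^+/m)\cdot m^- = 0$. Third, by \eqref{eqn:lambdamx}, if $\lambda\le\lambda_{max}$ the scaling gives $\|\bar{\mathbf{X}}^T\tilde{\theta}^\lambda\|_\infty = (\lambda/\lambda_{max})\cdot m\lambda_{max}=m\lambda$, while if $\lambda>\lambda_{max}$ we take $\tilde\theta^\lambda=\theta^*_{\lambda_{max}}$ and the bound is strict. This establishes part 1.

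For part 2 the plan is to appeal to the refined form of Slater's condition for convex programs whose inequality constraints are all affine. Unfolding $\|\bar{\mathbf{X}}^T\theta\|_\infty\le m\lambda$ into the $2p$ linear inequalities $\pm\langle\theta,\bar{\mathbf{x}}^j\rangle\le m\lambda$ makes every inequality constraint of (\ref{prob:dual_logistic}) affine, and the single equality constraint $\langle\theta,\mathbf{b}\rangle=0$ is also affine. The refined Slater condition then only requires a feasible point lying in the relative interior of $\mathrm{dom}(g)$. Since $\mathrm{dom}(g)=\mathcal{C}=(0,1)^m$ is open it equals its own relative interior, and the vector $\tilde\theta^\lambda$ already constructed serves as the witness for every $\lambda>0$.

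The main potential obstacle, and really the only one, is simultaneously keeping $\tilde\theta^\lambda$ strictly in the open box $(0,1)^m$ while forcing $\|\bar{\mathbf{X}}^T\tilde\theta^\lambda\|_\infty\le m\lambda$ even when $\lambda$ is very small; the choice $\min\{1,\lambda/\lambda_{max}\}$ resolves both at once because scaling $\theta^*_{\lambda_{max}}$ by any factor in $(0,1]$ cannot move its components outside $(0,1)$, whereas it reduces $\|\bar{\mathbf{X}}^T\theta^*_{\lambda_{max}}\|_\infty$ by exactly that factor. Beyond this one-line observation the verification is mechanical, so I do not expect any serious technical difficulty.
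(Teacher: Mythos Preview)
Your proposal is correct and follows essentially the same approach as the paper: exhibit the scaled vector $\frac{\lambda}{\lambda_{max}}\theta^*_{\lambda_{max}}$ (or $\theta^*_{\lambda_{max}}$ itself when $\lambda\ge\lambda_{max}$) as a feasible point, and then invoke the refined Slater condition for problems with only affine constraints. Your packaging via $\min\{1,\lambda/\lambda_{max}\}$ merely unifies the paper's two cases into one formula, and your explicit remark that $\mathcal{C}=(0,1)^m$ is open (hence equals its relative interior) makes the Slater verification a touch more transparent than the paper's one-line reduction to feasibility, but the substance is identical.
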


\begin{proof}
\begin{enumerate}
\item
When $\lambda\geq\lambda_{max}$, the feasibility of (LRD$_{\lambda}$) is trivial because of the existence of $\theta^*_{\lambda_{max}}$. We focus on the case in which $\lambda\in(0,\lambda_{max}]$ below.

Recall that \cite{Koh2007} $\lambda_{max}$ is the smallest tuning parameter such that $\beta^*_{\lambda}=0$ and $\theta^*_{\lambda}=\theta^*_{\lambda_{max}}$ whenever $\lambda\geq\lambda_{max}$. For convenience, we rewrite the definition of $\lambda_{max}$ and $\theta^*_{\lambda_{max}}$ as follows.
\begin{align*}
\lambda_{max}&=\max_{j\in\{1,\ldots,p\}}\frac{1}{m}|\langle\theta^*_{\lambda_{max}}, \bar{\bf x}^j\rangle|,\\
[\theta^*_{\lambda_{max}}]_i&=
\begin{cases}\vspace{3mm}
\frac{m^-}{m},\,\,\mbox{if }i\in\mathcal{P},\\
\frac{m^+}{m},\,\,\mbox{if }i\in\mathcal{N},
\end{cases}
i=1,\ldots,m.
\end{align*}
Clearly, we have $\theta^*_{\lambda_{max}}\in\mathcal{F}_{\lambda_{max}}$, i.e., $\theta^*_{\lambda_{max}}\in\mathcal{C}$, $\|\bar{\bf X}^T\theta^*_{\lambda_{max}}\|_{\infty}\leq m\lambda_{max}$ and $\langle\theta^*_{\lambda_{max}},{\bf b}\rangle=0$.

Let us define:
\begin{align*}
\theta_{\lambda}=\frac{\lambda}{\lambda_{max}}\theta^*_{\lambda_{max}}.
\end{align*}
Since $0<\frac{\lambda}{\lambda_{max}}\leq 1$, we can see that $\theta_{\lambda}\in\mathcal{C}$. Moreover, it is easy to see that
$$
\|\bar{\bf X}^T\theta_{\lambda}\|_{\infty}=\frac{\lambda}{\lambda_{max}}\|\bar{\bf X}^T\theta^*_{\lambda_{max}}\|_{\infty}\leq m\lambda,
$$
and
$$
\langle\theta_{\lambda}, {\bf b}\rangle=\frac{\lambda}{\lambda_{max}}\langle\theta^*_{\lambda_{max}},{\bf b}\rangle=0.
$$
Therefore, $\theta_{\lambda}\in\mathcal{F}_{\lambda}$, i.e., $\mathcal{F}_{\lambda}$ is not empty and thus (LRD$_{\lambda}$) is feasible.

\item The constraints of (\ref{prob:dual_logistic}) are all affine. Therefore, the Slater's condition reduces to the feasibility of (\ref{prob:dual_logistic}) \cite{Boyd04}. When $\lambda\geq\lambda_{max}$, $\theta^*_{\lambda_{max}}$ is clearly a feasible solution of (\ref{prob:dual_logistic}). When $\lambda\in(0,\lambda_{max}]$, we have shown the feasibility of (\ref{prob:dual_logistic}) in part 1. Therefore, the Slater's condition always holds for (\ref{prob:dual_logistic}) in which $\lambda>0$.
\end{enumerate}
\end{proof}

\begin{lemma}{B}{}\label{lemma:optimality}
Given $\lambda\in(0,\lambda_{max}]$, problem (\ref{prob:dual_logistic}) has a unique optimal solution, i.e., there exists a unique $\theta^*_{\lambda}\in\mathcal{F}_{\lambda}$ such that $g(\theta)$ achieves its minimum over $\mathcal{F}_{\lambda}$ at $\theta^*_{\lambda}$.
\end{lemma}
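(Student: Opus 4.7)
The plan is to combine compactness with the steep behavior of $f$ at the endpoints $0$ and $1$, followed by a strong-convexity argument for uniqueness.

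First I would extend $g$ continuously to the closure $[0,1]^m$ of $\mathcal{C}$ by setting $f(0)=f(1)=0$ (under the convention $0\log 0=0$), and correspondingly extend the feasible set to
\[
\tilde{\mathcal{F}}_\lambda=\{\theta\in[0,1]^m:\|\bar{\mathbf{X}}^T\theta\|_\infty\leq m\lambda,\,\langle\theta,\mathbf{b}\rangle=0\}.
\]
This set is bounded, closed and nonempty (Lemma A gives $\theta_\lambda=(\lambda/\lambda_{max})\theta^*_{\lambda_{max}}\in\mathcal{F}_\lambda\subseteq\tilde{\mathcal{F}}_\lambda$), hence compact. Since the extended $g$ is continuous on $[0,1]^m$, it attains its minimum on $\tilde{\mathcal{F}}_\lambda$ at some point $\theta^\star$.

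The main obstacle is to rule out the possibility that $\theta^\star$ lies on the boundary of $[0,1]^m$, i.e.\ has some coordinate equal to $0$ or $1$, so as to conclude $\theta^\star\in\mathcal{C}$ and hence $\theta^\star\in\mathcal{F}_\lambda$. For this I would use the steep-derivative trick: note that $\theta_\lambda\in\mathcal{C}$ is an \emph{interior} feasible point, and for any candidate boundary minimizer $\theta^\star$ consider the segment $\theta_t=(1-t)\theta^\star+t\theta_\lambda$, $t\in[0,1]$. Convexity of $\tilde{\mathcal{F}}_\lambda$ gives $\theta_t\in\tilde{\mathcal{F}}_\lambda$, and a direct check coordinate-by-coordinate shows that for every $t\in(0,1]$ each coordinate $(\theta_t)_i$ lies in $(0,1)$, so in fact $\theta_t\in\mathcal{F}_\lambda$. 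Now I would compute the right derivative
\[
\frac{d}{dt}g(\theta_t)\Big|_{t=0^+}=\frac{1}{m}\sum_{i=1}^{m}(\theta_{\lambda,i}-\theta^\star_i)\,\log\!\frac{\theta^\star_i}{1-\theta^\star_i},
\]
and observe that whenever $\theta^\star_i=0$ the corresponding term is $\theta_{\lambda,i}\cdot(-\infty)=-\infty$ (since $\theta_{\lambda,i}>0$), and similarly whenever $\theta^\star_i=1$ the term is $(\theta_{\lambda,i}-1)\cdot(+\infty)=-\infty$. All remaining terms are finite, so the right derivative equals $-\infty$ and hence $g(\theta_t)<g(\theta^\star)$ for all sufficiently small $t>0$, contradicting optimality of $\theta^\star$. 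Thus $\theta^\star\in\mathcal{C}$, and therefore $\theta^\star$ is a minimizer of the original problem (\ref{prob:dual_logistic}).

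Finally, uniqueness follows immediately from Lemma~\ref{lemma:modulus}(b): if $\theta_1^\star$ and $\theta_2^\star$ were two distinct optimal solutions, both lie in the convex set $\mathcal{F}_\lambda$, so their midpoint $\bar\theta=(\theta_1^\star+\theta_2^\star)/2$ also lies in $\mathcal{F}_\lambda$; strict strong convexity then gives $g(\bar\theta)<\tfrac12 g(\theta_1^\star)+\tfrac12 g(\theta_2^\star)=g(\theta_1^\star)$, contradicting optimality. Hence problem (\ref{prob:dual_logistic}) has a unique optimal solution $\theta^\star_\lambda$, which I expect to be the cleanest step once the boundary-exclusion argument above is in place.
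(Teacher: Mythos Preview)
Your proposal is correct and follows the same overall skeleton as the paper's proof: extend $g$ continuously to the closed cube, use compactness of $\tilde{\mathcal{F}}_\lambda$ to get a minimizer, rule out the boundary, and finish with strict convexity for uniqueness. The one substantive difference is in the boundary-exclusion step. The paper invokes the KKT conditions for the extended problem (\ref{prob:e_dual_logistic}): since all constraints are linear, Slater's condition holds and there must exist finite Lagrange multipliers satisfying $\nabla\tilde g(\theta^*_\lambda)+\cdots=0$; if some coordinate of $\theta^*_\lambda$ were $0$ or $1$ the corresponding component of $\nabla\tilde g$ would be infinite, so no such multipliers can exist. Your argument is instead a direct first-variation test along the segment toward the interior feasible point $\theta_\lambda$, showing the one-sided derivative is $-\infty$. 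Both exploit the same fact (that $f'(y)\to\pm\infty$ at the endpoints), but your version is more elementary---it avoids citing Lagrange multiplier existence theorems and sidesteps the slight awkwardness of writing a KKT stationarity equation at a point where $\tilde g$ is not differentiable. The paper's route, on the other hand, plugs into standard optimality-condition machinery already used elsewhere in the paper.
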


\begin{proof}
Let $\widetilde{\mathcal{C}}:=\{\theta\in\Re^m:\theta_i\in[0,1], i=1,\ldots,m\}$ and $$\widetilde{\mathcal{F}}_{\lambda}:=\{\theta\in\Re^m:\theta\in\widetilde{\mathcal{C}}, \|\bar{\bf X}^T\theta\|_{\infty}\leq m\lambda,\langle\theta,{\bf b}\rangle=0\}.$$ Clearly, $\mathcal{F}_{\lambda}\subseteq\widetilde{\mathcal{F}}_{\lambda}$ and $\widetilde{\mathcal{F}}_{\lambda}$ is compact.

We show that $g(\theta)$ can be continuously extended to $\widetilde{F}_{\lambda}$. For $y\in(0,1)$, define
$$f(y)=y\log(y)+(1-y)\log(1-y).$$
By $\mathrm{L'H\hat{o}pital's}$ rule, it is easy to see that $\lim_{y\downarrow0}f(y)=\lim_{y\uparrow1}f(y)=0$. Therefore, let
\begin{align*}
\widetilde{f}(y)=
\begin{cases}
f(y),\hspace{3mm}y\in(0,1),\\
0,\hspace{8mm}y\in\{0,1\},
\end{cases}
\end{align*}
and
\begin{align*}
\widetilde{g}(\theta)=\frac{1}{m}\sum_{i=1}^m\widetilde{f}(\theta_i).
\end{align*}
Clearly, when $\theta\in\mathcal{F}_{\lambda}$, we have $\widetilde{g}(\theta)=g(\theta)$, i.e., $g(\theta)$ is the restriction of $\widetilde{g}(\theta)$ over $\mathcal{F}_{\lambda}$.

On the other hand, the definition of $\widetilde{g}(\theta)$ implies that $\widetilde{g}(\theta)$ is continuous over $\widetilde{\mathcal{F}}_{\lambda}$. Together with the fact that $\widetilde{\mathcal{F}}_{\lambda}$ is compact, we can see that there exists $\theta^*_{\lambda}\in\widetilde{\mathcal{F}}_{\lambda}$ and $\widetilde{g}(\theta^*_{\lambda})=\min_{\theta\in\widetilde{\mathcal{F}}_{\lambda}}\widetilde{g}(\theta)$.

Consider the optimization problem
\begin{align}\tag{LRD$'_{\lambda}$}\label{prob:e_dual_logistic}
\min_{\theta\in\widetilde{\mathcal{F}}_{\lambda}}\widetilde{g}(\theta)
\end{align}
Because of Lemma A, we know that $\mathcal{F}_{\lambda}\neq\emptyset$ and thus $\widetilde{\mathcal{F}}_{\lambda}\neq\emptyset$ either. Therefore, problem (\ref{prob:e_dual_logistic}) is feasible. By noting that the constraints of problem (\ref{prob:e_dual_logistic}) are all linear, the Slater's condition is satisfied. Hence, there exists a set of Lagrangian multipliers $\eta^+,\eta^-\in\Re^p_+$, $\xi^+,\xi^-\in\Re^m_+$ and $\gamma\in\Re$ such that
\begin{align}\label{eqn:kkt_dual}
\nabla \widetilde{g}(\theta^*_{\lambda})+\sum_{i=1}^m\eta_i^+\bar{\bf x}^i+\sum_{i=1}^m\eta_i^-(-\bar{\bf x}^i)+\xi^+-\xi^-+\gamma{\bf b}=0.
\end{align}
We can see that if there is an $i_0$ such that $[\theta_{\lambda}^*]_{i_0}\in\{0,1\}$, i.e., $\theta^*_{\lambda}\notin\mathcal{F}_{\lambda}$, \eqref{eqn:kkt_dual} does not hold since $\left|[\nabla \widetilde{g}(\theta_{\lambda}^*)]_{i_0}\right|=\infty$. Therefore, we can conclude that $\theta_{\lambda}^*\in\mathcal{F}_{\lambda}$. Moreover, it is easy to see that $\theta^*_{\lambda}=\argmin_{\theta\in\widetilde{\mathcal{F}}_{\lambda}}\widetilde{g}(\theta)=\argmin_{\theta\in\mathcal{F}_{\lambda}}g(\theta)$, i.e. $\theta^*_{\lambda}$ is a minimum of $g(\theta)$ over $\mathcal{F}_{\lambda}$. The uniqueness of $\theta^*_{\lambda}$ is due to the strict convexity of $g(\theta)$ (strong convexity implies strict convexity), which completes the proof.
\end{proof}

\section{Proof of Lemma 1}

\begin{proof}
Recall that the domain of $g$ is $\mathcal{C}=\{\theta\in\Re^m:[\theta]_i\in(0,1), i=1,\ldots,m\}$.
\begin{enumerate}
\item[a.] It is easy to see that
\begin{align*}
[\nabla g(\theta)]_i=\frac{1}{m}\log(\frac{[\theta]_i}{1-[\theta]_i}), \,\, [\nabla^2 g(\theta)]_{i,i}=\frac{1}{m}\frac{1}{[\theta]_i(1-[\theta]_i)}\geq\frac{4}{m},\,\,[\nabla^2 g(\theta)]_{i,j}=0,\,i\neq j.
\end{align*}
Clearly, $\nabla^2 g(\theta)$ is a diagonal matrix and
\begin{equation}\label{ineqn:modulus}
\nabla^2 g(\theta)\geq \frac{4}{m} I.
\end{equation}
Therefore, $g$ is a strong convex function with convexity parameter $\mu=\frac{4}{m}$ \cite{Nesterov2004}. The claim then follows directly from the definition of strong convex functions.
\item[b.] If $\theta_1\neq\theta_2$, then there exists at least one $i'\in\{1,\ldots,m\}$ such that $[\theta_1]_{i'}\neq[\theta_2]_{i'}$. Moreover, at most one of $[\theta_1]_{i'}$ and $[\theta_2]_{i'}$  can be $\frac{1}{2}$. Without loss of generality, assume $[\theta_1]_{i'}\neq\frac{1}{2}$.

For $t\in(0,1)$, let $\theta(t)=t\theta_2+(1-t)\theta_1$. Since $[\theta_1]_{i'}\neq\frac{1}{2}$, we can find $t'\in(0,1)$ such that $[\theta(t')]_{i'}\neq\frac{1}{2}$. Therefore, we can see that
\begin{align}\label{eqn:strict_ineqn_modulus}
\langle\nabla^2g(\theta(t'))(\theta_2-\theta_1),\theta_2-\theta_1\rangle
&=\sum_{i=1}^m[\nabla^2g(\theta(t'))]_{i,i}([\theta_2]_i-[\theta_1]_i)^2\\ \nonumber
&=\sum_{\substack{i\in\{1,\ldots,m\},\\i\neq i'}}[\nabla^2g(\theta(t'))]_{i,i}([\theta_2]_i-[\theta_1]_i)^2\\ \nonumber
&\hspace{5mm} + [\nabla^2g(\theta(t'))]_{i',i'}([\theta_2]_{i'}-[\theta_1]_{i'})^2.
\end{align}
Clearly, when $i\in\{1,\ldots,m\}\setminus i'$, we have $[\nabla^2g(\theta(t'))]_{i,i}\geq\frac{4}{m}$ and thus
\begin{align}\label{ineqn:strict_ineqn_modulus1}
\sum_{\substack{i\in\{1,\ldots,m\},\\i\neq i'}}[\nabla^2g(\theta(t'))]_{i,i}([\theta_2]_i-[\theta_1]_i)^2\geq\sum_{\substack{i\in\{1,\ldots,m\},\\i\neq i'}}\frac{4}{m}([\theta_2]_i-[\theta_1]_i)^2.
\end{align}
Since $[\theta(t')]_{i'}\neq\frac{1}{2}$, we have $[\nabla^2g(\theta(t'))]_{i,i}>\frac{4}{m}$ which results in
\begin{align}\label{ineqn:strict_ineqn_modulus2}
[\nabla^2g(\theta(t'))]_{i',i'}([\theta_2]_{i'}-[\theta_1]_{i'})^2>\frac{4}{m}([\theta_2]_{i'}-[\theta_1]_{i'})^2.
\end{align}
Moreover, because $[\theta_1]_{i'}\neq[\theta_2]_{i'}$, we can see that
\begin{equation}\label{ineqn:strict_ineqn_modulus3}
([\theta_2]_{i'}-[\theta_1]_{i'})^2\neq0.
\end{equation}
Therefore, \eqref{eqn:strict_ineqn_modulus} and the inequalities in (\ref{ineqn:strict_ineqn_modulus1}), (\ref{ineqn:strict_ineqn_modulus2}) and (\ref{ineqn:strict_ineqn_modulus3}) imply that
\begin{equation}
\langle\nabla^2g(\theta(t'))(\theta_2-\theta_1),\theta_2-\theta_1\rangle>\frac{4}{m}\|\theta_2-\theta_1\|_2^2.
\end{equation}
Due to the continuity of $\nabla^2g(\theta(t))$, there exist a small number $\epsilon>0$ such that for all $t\in(t'-\frac{1}{2}\epsilon, t'+\frac{1}{2}\epsilon)$, we have
\begin{equation}\label{ineqn:modulus_s2}
\langle\nabla^2g(\theta(t))(\theta_2-\theta_1),\theta_2-\theta_1\rangle>\frac{4}{m}\|\theta_2-\theta_1\|_2^2,
\end{equation}
It follows that
\begin{align}\label{eqn:diff_g}
g(\theta_2)-g(\theta_1)&=\int_0^1\nabla g(\theta(t))(\theta_2-\theta_1)dt\\ \nonumber
&=\int_0^1\left[\nabla g(\theta(0))+\int_0^{t}\nabla^2g(\theta(\tau))(\theta_2-\theta_1) d\tau\right](\theta_2-\theta_1)dt\\ \nonumber
&=\langle\nabla g(\theta_1),\theta_2-\theta_1\rangle+\int_0^1\int_0^t\langle\nabla^2g(\theta(\tau))(\theta_2-\theta_1),\theta_2-\theta_1\rangle d\tau dt.
\end{align}
Let $$\Delta_{t_1}^{t_2}:=\int_{t_1}^{t_2}\int_0^t\langle\nabla^2g(\theta(\tau))(\theta_2-\theta_1),\theta_2-\theta_1\rangle d\tau dt.$$ Clearly, the second term on the right hand side of \eqref{eqn:diff_g} is $\Delta_{0}^1$, which can be written as $$\Delta_0^1=\Delta_0^{t'-\frac{1}{2}\epsilon}+\Delta_{t'-\frac{1}{2}\epsilon}^{t'+\frac{1}{2}\epsilon}+\Delta_{t'+\frac{1}{2}\epsilon}^1.$$
Due to the inequality in (\ref{ineqn:modulus_s2}), we can see that
\begin{align}\label{ineqn:delta1}
\Delta_{t'-\frac{1}{2}\epsilon}^{t'+\frac{1}{2}\epsilon}&=\int_{t'-\frac{1}{2}\epsilon}^{t'+\frac{1}{2}\epsilon}\int_0^t\langle\nabla^2g(\theta(\tau))(\theta_2-\theta_1),\theta_2-\theta_1\rangle d\tau dt\\ \nonumber
&>\int_{t'-\frac{1}{2}\epsilon}^{t'+\frac{1}{2}\epsilon}\frac{4}{m}\|\theta_2-\theta_1\|_2^2tdt\\ \nonumber
=&\frac{4}{m}\|\theta_2-\theta_1\|_2^2\cdot (t'\epsilon).
\end{align}
Due to the inequality in (\ref{ineqn:modulus}), it is easy to see that
\begin{equation}
\langle\nabla^2g(\theta(t))(\theta_2-\theta_1),\theta_2-\theta_1\rangle\geq\frac{4}{m}\|\theta_2-\theta_1\|_2^2
\end{equation}
for all $t\in(0,1)$. Therefore,
\begin{align}\label{ineqn:delta2}
\Delta_{0}^{t'-\frac{1}{2}\epsilon}&=\int_{t'-\frac{1}{2}\epsilon}^{t'+\frac{1}{2}\epsilon}\int_0^t\langle\nabla^2g(\theta(\tau))(\theta_2-\theta_1),\theta_2-\theta_1\rangle d\tau dt\\ \nonumber
&\geq\int_{0}^{t'-\frac{1}{2}\epsilon}\frac{4}{m}\|\theta_2-\theta_1\|_2^2tdt\\ \nonumber
=&\frac{4}{m}\|\theta_2-\theta_1\|_2^2\cdot\left[\frac{1}{2}\left(t'-\frac{1}{2}\epsilon\right)^2\right].
\end{align}
Similarly,
\begin{align}\label{ineqn:delta3}
\Delta_{t'+\frac{1}{2}\epsilon}^{1}&=\int_{t'+\frac{1}{2}\epsilon}^{1}\int_0^t\langle\nabla^2g(\theta(\tau))(\theta_2-\theta_1),\theta_2-\theta_1\rangle d\tau dt\\ \nonumber
&\geq\int_{t'+\frac{1}{2}\epsilon}^{1}\frac{4}{m}\|\theta_2-\theta_1\|_2^2tdt\\ \nonumber
=&\frac{4}{m}\|\theta_2-\theta_1\|_2^2\cdot\frac{1}{2}\left[1-\left(t'+\frac{1}{2}\epsilon\right)^2\right].
\end{align}
The inequalites in (\ref{ineqn:delta1}), (\ref{ineqn:delta2}) and (\ref{ineqn:delta3}) imply that
\begin{equation}\label{ineqn:deltat}
\Delta_0^1>\frac{2}{m}\|\theta_2-\theta_1\|_2^2.
\end{equation}
Therefore, the inequality in (\ref{ineqn:deltat}) and \eqref{eqn:diff_g} lead to the following strict inequality
\begin{equation}
g(\theta_2)-g(\theta_1)>\langle\nabla g(\theta_1), \theta_2-\theta_1\rangle + \frac{2}{m}\|\theta_2-\theta_1\|_2^2,
\end{equation}
which completes the proof.
\end{enumerate}

\end{proof}

\section{Proof of Lemma 3}

\begin{proof}
According to the definition of $\lambda_{max}$ in \eqref{eqn:lambdamx}, there must be $j_0\in\{1,\ldots,p\}$ such that $\lambda_{max}=\frac{1}{m}|\langle\theta^*_{\lambda_{max}},\bar{\bf x}^{j_0}\rangle|$. Clearly, $j_0\in\mathcal{I}_{\lambda_{max}}$ and thus $\mathcal{I}_{\lambda_{max}}$ is not empty.

For $\lambda\in(0,\lambda_{max})$, we prove the statement by contradiction. Suppose $\mathcal{I}_{\lambda}$ is empty, then the KKT condition for (LRD$_{\lambda}$) at $\theta^*_{\lambda}$ can be written as:
\begin{align*}
0\in\nabla g(\theta^*_{\lambda})+\sum_{j=1}^p\eta_j^+\bar{\bf x}^j+\sum_{j'=1}^p\eta_i^-(-\bar{\bf x}^i)+\gamma{\bf b}+N_{\mathcal{C}}(\theta^*_{\lambda}),
\end{align*}
where $\eta^+,\eta^-\in\Re^p_+$, $\gamma\in\Re$ and $N_{\mathcal{C}}(\theta^*_{\lambda})$ is the normal cone of $\mathcal{C}$ at $\theta^*_{\lambda}$. Because $\theta^*_{\lambda}\in\mathcal{C}$ and $\mathcal{C}$ is an open set, $\theta^*_{\lambda}$ is an interior point of $\mathcal{C}$ and thus $N_{\mathcal{C}}(\theta^*_{\lambda})=\emptyset$. Therefore, the above equation becomes:
\begin{align}\label{eqn:kkt_dual_logistic1}
\nabla g(\theta^*_{\lambda})+\sum_{j=1}^p\eta_j^+\bar{\bf x}^j+\sum_{j'=1}^p\eta_{j'}^-(-\bar{\bf x}^{j'})+\gamma{\bf b}=0.
\end{align}
Moreover, by the complementary slackness condition \citep{Boyd04}, we have $\eta^+_j=\eta^-_j=0$ for $j=1,\ldots,p$ since $\mathcal{I}_{\lambda}$ is empty. Then, \eqref{eqn:kkt_dual_logistic1} becomes:
\begin{align}\label{eqn:kkt_dual_logistic2}
\nabla g(\theta^*_{\lambda})+\gamma{\bf b}=0.
\end{align}
By the similar argument, the KKT condition for (LRD$_{\lambda_{max}}$) at $\theta^*_{\lambda_{max}}$ is:
\begin{align}\label{eqn:kkt_dual_logistic3}
\nabla g(\theta^*_{\lambda_{max}})+\sum_{j=1}^p\bar{\eta}_j^+\bar{\bf x}^j+\sum_{j'=1}^p\bar{\eta}_{j'}^-(-\bar{\bf x}^{j'})+\gamma'{\bf b}=0,
\end{align}
where $\bar{\eta}^+,\bar{\eta}^-\in\Re^p_+$ and $\gamma'\in\Re$.

Since $\mathcal{I}_{\lambda}=\emptyset$, we can see that $|\langle\theta^*_{\lambda},\bar{\bf x}^j\rangle|<m\lambda<m\lambda_{max}$ for all $j\in\{1,\ldots,m\}$. Therefore, $\theta^*_{\lambda}$ also satisfies \eqref{eqn:kkt_dual_logistic3} by setting $\eta^+=\bar{\eta}^+$, $\eta^-=\bar{\eta}^-$ and $\gamma=\gamma'$ without violating the complementary slackness conditions. As a result, $\theta^*_{\lambda}$ is an optimal solution of problem (LRD$_{\lambda_{max}}$) as well.

Moreover, it is easy to see that $\theta^*_{\lambda}\neq\theta^*_{\lambda_{max}}$ because $|\langle\theta^*_{\lambda},\bar{\bf x}^{j_0}\rangle|<m\lambda<m\lambda_{max}=|\langle\theta^*_{\lambda_{max}},\bar{\bf x}^{j_0}\rangle|$. Consequently, (LRD$_{\lambda_{max}}$) has at least two distinct optimal solutions, which contradicts with Lemma B. Therefore, $\mathcal{I}_{\lambda}$ must be an nonempty set. Because $\lambda$ is arbitrary in $(0,\lambda_{max})$, the proof is complete.

\end{proof}

\section{Proof of Theorem 4}

\begin{proof}
Recall that we need to solve the following optimization proble:
\begin{align}\label{prob:bound}
T(\theta^*_{\lambda},\bar{\bf x}^{j};\theta^*_{\lambda_0}):=\max_{\theta}\,\,\left\{|\langle\theta, \bar{\bf x}^j\rangle|:
\|\theta-\theta^*_{\lambda_0}\|_2^2\leq r^2,
\langle\theta, {\bf b}\rangle=0,
\langle\theta, \bar{\bf x}^*\rangle\leq m\lambda\right\}.
\end{align}
The feasibility of $\theta$ implies that
$$
\langle\theta, {\bf b}\rangle=0,
$$
i.e., $\theta$ belongs to the orthogonal complement of the space spanned by ${\bf b}$.
As a result, $\theta={\bf P}\theta$ and
$$
|\langle\theta, \bar{\bf x}^j\rangle|=|\langle{\bf P}\theta, \bar{\bf x}^j\rangle|=|\langle\theta, {\bf P}\bar{\bf x}^j\rangle|=0.
$$
Therefore, we can see that $T(\theta^*_{\lambda},\bar{\bf x}^{j};\theta^*_{\lambda_0})=0$, which completes the proof.

\end{proof}

\section{Proof of Corollary 5}

\begin{proof}
We set $\lambda_0=\lambda_{max}$ and $\theta^*_{\lambda_0}=\theta^*_{\lambda_{max}}$. Clearly, we have $\lambda_0>\lambda>0$ and $\theta^*_{\lambda_0}$ is known. Therefore, the assumptions in Theorem 4 are satisfied and thus $T(\theta^*_{\lambda},\bar{\bf x}^{j};\theta^*_{\lambda_0})=0$.

By the rule in (R1$'$), it is straightforward to see that $[\beta^*_{\lambda}]_j=0$, which completes the proof.
\end{proof}

\section{Proof of Lemma 6}

\begin{proof}
We show that the Slater's condition holds for (UBP$'$).

Recall that the feasible set of problem (UBP$'$) is $$\mathcal{A}_{\lambda_0}^{\lambda}=\left\{\theta:\|\theta-\theta^*_{\lambda_0}\|_2^2\leq r^2,
\langle\theta, {\bf b}\rangle=0,
\langle\theta, \bar{\bf x}^*\rangle\leq m\lambda\right\}.$$
To show that the Slater's condition holds, we need to seek a point $\theta'$ such that
$$
\|\theta'-\theta^*_{\lambda_0}\|_2^2< r^2,
\langle\theta', {\bf b}\rangle=0,
\langle\theta', \bar{\bf x}^*\rangle\leq m\lambda.
$$
Consider $\theta^*_{\lambda}$, since $\theta^*_{\lambda}\in\mathcal{A}_{\lambda_0}^{\lambda}$, the last two constraints of $\mathcal{A}_{\lambda_0}^{\lambda}$ are satisfied by $\theta^*_{\lambda}$ automatically.

On the other hand, because $\lambda_0\in(0,\lambda_{max}]$, Lemma 3 leads to the fact that the active set $\mathcal{I}_{\lambda_0}$ is not empty. Let $j_0\in\mathcal{I}_{\lambda_0}$, we can see that $$|\langle\theta^*_{\lambda_0},\bar{\bf x}^{j_0}\rangle|=m\lambda_0.$$
However, because $\theta^*_{\lambda}\in\mathcal{F}_{\lambda}$, we have
$$
|\bar{\bf X}^T\theta^*_{\lambda}|_{\infty}\leq m\lambda,
$$
and thus
$$
|\langle\theta^*_{\lambda},\bar{\bf x}^{j_0}\rangle|\leq m\lambda<m\lambda_0.
$$
Therefore, we can see that $\theta^*_{\lambda}\neq\theta^*_{\lambda_0}$.
By part b of Theorem 2, it follows that
$$
\|\theta^*_{\lambda}-\theta^*_{\lambda_0}\|_2^2<r^2.
$$
As a result, the Slater's condition holds for (UBP$'$) which implies the strong duality of (UBP$'$).

Moreover, it is easy to see that problem (UBP$'$) admits optimal solution in $\mathcal{A}_{\lambda_0}^{\lambda}$ because the objective function of (UBP$'$) is continuous and $\mathcal{A}_{\lambda_0}^{\lambda}$ is compact.

\end{proof}

\section{Proof of Lemma 7}

\begin{proof}
The Lagrangian of (UBP$'$) is
\begin{align}\label{eqn:lagrangian_ubp}
L(\theta;u_1, u_2, v) &= \langle\theta, \bar{\bf x}\rangle+\frac{u_1}{2}\big(\|\theta-\theta^*_{\lambda_0}\|_2^2-r^2\big)+u_2(\langle\theta, \bar{\bf x}^*\rangle-m\lambda_2)+v\langle\theta, {\bf b}\rangle\\ \nonumber
&=\langle\theta, \bar{\bf x}+u_2\bar{\bf x}^*+v{\bf b}\rangle+\frac{u_1}{2}\big(\|\theta-\theta^*_{\lambda_0}\|_2^2-r^2\big)-u_2m\lambda_2.
\end{align}
where $u_1,u_2\geq0$ and $v\in\Re$ are the Lagrangian multipliers.
To derive the dual function $\hat{g}(u_1, u_2,v)=\min_{\theta}L(\theta;u_1, u_2, v)$,
we can simply set $\nabla_{\theta}L(\theta;u_1,u_2,v)=0$, i.e.,
\begin{align*}
\nabla_{\theta}L(\theta;u_1,u_2,v)=\bar{\bf x}+u_1(\theta-\theta^*_{\lambda_1})+u_2\bar{\bf x}^*+v{\bf b}=0.
\end{align*}
When $u_1\neq0$, we set
\begin{align}\label{eqn:theta_ubd}
\theta = -\frac{1}{u_1}(\bar{\bf x}+u_2\bar{\bf x}^*+v{\bf b})+\theta^*_{\lambda_0}
\end{align}
to minimize $L(\theta;{\bf u},v)$ with $({\bf u},v)$ fixed.
By plugging \eqref{eqn:theta_ubd} to \eqref{eqn:lagrangian_ubp}, we can see that the dual function $\hat{g}(u_1,u_2,v)$ is:
\begin{align}\label{eqn:g}
\hat{g}(u_1,u_2,v)=&-\frac{1}{2u_1}\|\bar{\bf x}+u_2\bar{\bf x}^*+v{\bf b}\|_2^2+\langle\theta^*_{\lambda_0}, \bar{\bf x}+u_2\bar{\bf x}^*+v{\bf b}\rangle-\frac{1}{2}u_1r^2-u_2m\lambda_2\\ \nonumber
=&-\frac{1}{2u_1}\|\bar{\bf x}+u_2\bar{\bf x}^*+v{\bf b}\|_2^2+u_2m(\lambda_1-\lambda_2)+\langle\theta^*_{\lambda_0}, \bar{\bf x}\rangle-\frac{1}{2}u_1r^2.
\end{align}
The second line of (\ref{eqn:g}) is due to the fact that $\langle\theta^*_{\lambda_0}, \bar{\bf x}^*\rangle=m\lambda_1$ and $\langle\theta^*_{\lambda_0}, {\bf b}\rangle=0$.

When $u_1=0$, it is not straightforward to derive the dual function. The difficulty comes from the fact that $\bar{\bf x}+u_2\bar{\bf x}^*+v{\bf b}$ might be $0$ for some $u_2'\geq0$ and $v'$.

\begin{enumerate}
\item[a).]  Suppose $\frac{\langle{\bf P}\bar{\bf x},{\bf P}\bar{x}^*\rangle}{\|{\bf P}\bar{\bf x}\|_2\|{\bf P}\bar{x}^*\|_2}\in(-1,1]$, we show that $\bar{\bf x}+u_2\bar{\bf x}^*+v{\bf b}\neq0$ for all $u_2\geq0$ and $v$.

Suppose for contradiction that
$$\bar{\bf x}+u_2'\bar{\bf x}^*+v'{\bf b}=0,$$
in which $u_2'\geq0$ and $v'$. Then we can see that
\begin{align*}
{\bf P}\left(\bar{\bf x}+u_2'\bar{\bf x}^*+v'{\bf b}\right)=0.
\end{align*}
Since ${\bf P}{\bf b}=0$, it follows that
\begin{equation}\label{eqn:collinear}
{\bf P}\bar{\bf x}+u_2'{\bf P}\bar{\bf x}^*=0,
\end{equation}
and thus
\begin{equation}\label{eqn:u2_collinear}
u_2'=-\frac{\langle{\bf P}\bar{\bf x},{\bf P}\bar{\bf x}^*\rangle}{\|{\bf P}\bar{\bf x}^*\|_2^2}.
\end{equation}
Clearly, \eqref{eqn:collinear} implies that ${\bf P}\bar{\bf x}$ and ${\bf P}\bar{\bf x}^*$ are collinear, i.e., $\frac{\langle{\bf P}\bar{\bf x},{\bf P}\bar{x}^*\rangle}{\|{\bf P}\bar{\bf x}\|_2\|{\bf P}\bar{x}^*\|_2}\in\{-1,1\}$. Moreover, by \eqref{eqn:u2_collinear}, we know that $\langle{\bf P}\bar{\bf x},{\bf P}\bar{\bf x}^*\rangle\leq0$. Therefore, it is easy to see that $\frac{\langle{\bf P}\bar{\bf x},{\bf P}\bar{x}^*\rangle}{\|{\bf P}\bar{\bf x}\|_2\|{\bf P}\bar{x}^*\|_2}=-1$, which contradicts the assumption. Hence, $\bar{\bf x}+u_2\bar{\bf x}^*+v{\bf b}\neq0$ for all $u_2\geq0$ and $v$.

We then show that $\hat{g}(u_1,u_2,v)=-\infty$ when $u_1=0$ for all $u_2\geq0$ and $v$.

Since $\hat{g}(u_1, u_2,v)=\min_{\theta}L(\theta;u_1, u_2, v)$, for $u_1=0$, $u_2\geq0$ and $v$, we have
\begin{align}
\hat{g}(0, u_2,v)&=\min_{\theta}L(\theta;0, u_2, v)\\ \nonumber
&=\min_{\theta}\langle\theta, \bar{\bf x}+u_2\bar{\bf x}^*+v{\bf b}\rangle-u_2m\lambda_2.
\end{align}
Let $\theta = -t(\bar{\bf x}+u_2\bar{\bf x}^*+v{\bf b})$, it follows that
\begin{align}
\langle\theta, \bar{\bf x}+u_2\bar{\bf x}^*+v{\bf b}\rangle-u_2m\lambda_2=-t\|\bar{\bf x}+u_2\bar{\bf x}^*+v{\bf b}\|_2^2-u_2m\lambda_2,
\end{align}
and thus
\begin{align}
\lim_{t\rightarrow\infty}-t\|\bar{\bf x}+u_2\bar{\bf x}^*+v{\bf b}\|_2^2-u_2m\lambda_2=-\infty.
\end{align}
Therefore, $\hat{g}(u_1, u_2,v)=\min_{\theta}L(\theta;u_1, u_2, v)$, for $u_1=0$, $u_2\geq0$ and $v$.

All together, the dual function is
\begin{align}\label{eqn:dual_ubd1}
\hat{g}(u_1,u_2,v)=
\begin{cases}
-\frac{1}{2u_1}\|\bar{\bf x}+u_2\bar{\bf x}^*+v{\bf b}\|_2^2+u_2m(\lambda_1-\lambda_2)+\langle\theta^*_{\lambda_0}, \bar{\bf x}\rangle-\frac{1}{2}u_1r^2,\\ \vspace{2mm}
\hspace{65mm}\mbox{if }u_1>0, u_2\geq0,\\
-\infty,\hspace{57.5mm}\mbox{if }u_1=0, u_2\geq0.
\end{cases}
\end{align}
Because the dual problem is to maximize the dual function, we can write the dual problem as:
\begin{align}\label{prob:dual_bound}
\max_{u_1>0, u_2\geq0,v}\hat{g}(u_1, u_2,v)=-\frac{1}{2u_1}\|\bar{\bf x}+u_2\bar{\bf x}^*+v{\bf b}\|_2^2+u_2m(\lambda_1-\lambda_2)+\langle\theta^*_{\lambda_0}, \bar{\bf x}\rangle-\frac{1}{2}u_1r^2.
\end{align}
Moreover, it is easy to see that problem (\ref{prob:dual_bound}) is an unconstrained optimization problem with respect to $v$. Therefore, we set
\begin{align*}
\frac{\partial \hat{g}(u_1,u_2,v)}{\partial v}=-\frac{1}{u_1}\langle\bar{\bf x}+u_2\bar{\bf x}^*+v{\bf b}, {\bf b}\rangle=0,
\end{align*}
and thus
\begin{equation}\label{eqn:v}
v=-\frac{\langle\bar{\bf x}+u_2\bar{\bf x}^*,{\bf b}\rangle}{\|{\bf b}\|_2^2}.
\end{equation}
By plugging (\ref{eqn:v}) into $\hat{g}(u_1,u_2,v)$ and noting that $\mathcal{U}_1=\{(u_1,u_2):u_1>0,u_2\geq0\}$, problem (\ref{prob:dual_bound}) is equivalent to
\begin{align}\tag{UBD$'$}\label{prob:dual_bound1}
\max_{(u_1,u_2)\in\mathcal{U}_1}\,\,\bar{g}(u_1,u_2)=-\frac{1}{2u_1}\|{\bf P}\bar{\bf x}+u_2{\bf P}\bar{\bf x}^*\|_2^2+u_2m(\lambda_1-\lambda_2)+\langle\theta^*_{\lambda_1}, \bar{\bf x}\rangle-\frac{1}{2}u_1r^2.
\end{align}

By Lemma 6, we know that the Slater's conditions holds for (UBP$'$). Therefore, the strong duality holds for (UBP$'$) and (UBD$'$). By the strong duality theorem \cite{Guler2010}, there exists $u_1^*\geq0$, $u_2^*\geq0$ and $v^*$ such that $\hat{g}(u_1^*,u_2^*,v^*)=\max_{u_1\geq0,u_2\geq0,v}\hat{g}(u_1,u_2,v)$. By \eqref{eqn:dual_ubd1}, it is easy to see that $u_1^*>0$. Therefore, $\bar{g}(u_1,u_2)$ attains its maximum in $\mathcal{U}_1$.

\item[b).] Suppose $\frac{\langle{\bf P}\bar{\bf x},{\bf P}\bar{x}^*\rangle}{\|{\bf P}\bar{\bf x}\|_2\|{\bf P}\bar{x}^*\|_2}=-1$, it is easy to see that
\begin{align*}
\bar{\bf x}+u_2\bar{\bf x}^*+v{\bf b}=0\Leftrightarrow\,\,u_2=-\frac{\langle{\bf P}\bar{\bf x},{\bf P}\bar{\bf x}^*\rangle}{\|{\bf P}\bar{\bf x}^*\|_2^2}\,\,\mbox{and}\,\, v=-\frac{\langle\bar{\bf x}+u_2\bar{\bf x}^*,{\bf b}\rangle}{\|{\bf b}\|_2^2}.
\end{align*}
Let $u_2'=-\frac{\langle{\bf P}\bar{\bf x},{\bf P}\bar{\bf x}^*\rangle}{\|{\bf P}\bar{\bf x}^*\|_2^2}$ and $v'=-\frac{\langle\bar{\bf x}+u_2\bar{\bf x}^*,{\bf b}\rangle}{\|{\bf b}\|_2^2}$, we have
\begin{align*}
L(\theta;0,u_2',v')=-u_2'm\lambda_2=\frac{\langle{\bf P}\bar{\bf x},{\bf P}\bar{\bf x}^*\rangle}{\|{\bf P}\bar{\bf x}^*\|_2^2}m\lambda=-\frac{\|{\bf P}\bar{\bf x}\|_2}{\|{\bf P}\bar{\bf x}^*\|_2}m\lambda,
\end{align*}
and thus
$$\hat{g}(0, u_2',v')=\min_{\theta}L(\theta;0,u_2',v')=-\frac{\|{\bf P}\bar{\bf x}\|_2}{\|{\bf P}\bar{\bf x}^*\|_2}m\lambda.$$
By a similar argument as in the proof of part 1, we can see that the dual function is
\begin{align}\label{eqn:dual_ubd2}
\hat{g}(u_1,u_2,v)=
\begin{cases}
-\frac{1}{2u_1}\|\bar{\bf x}+u_2\bar{\bf x}^*+v{\bf b}\|_2^2+u_2m(\lambda_1-\lambda_2)+\langle\theta^*_{\lambda_0}, \bar{\bf x}\rangle-\frac{1}{2}u_1r^2,\\ \vspace{2mm}
\hspace{65mm}\mbox{if }u_1>0, u_2\geq0,\\ \vspace{2mm}
-\infty,\hspace{40mm}\mbox{if }u_1=0, u_2\geq0, u_2\neq u_2', v\neq v',\\
-\frac{\|{\bf P}\bar{\bf x}\|_2}{\|{\bf P}\bar{\bf x}^*\|_2}m\lambda,\hspace{28.5mm}\mbox{if }u_1=0,u_2=u_2',v=v',
\end{cases}
\end{align}
and the dual problem is equivalent to
\begin{align}\tag{UBD$''$}\label{prob:dual_bound11}
\max_{(u_1,u_2)\in\mathcal{U}_1\cup\mathcal{U}_2}\,\,\bar{\bar{g}}(u_1,u_2)=
\begin{cases}
\bar{g}(u_1,u_2),\hspace{12mm}\mbox{if }(u_1,u_2)\in\mathcal{U}_1,\\
-\frac{\|{\bf P}\bar{\bf x}\|_2}{\|{\bf P}\bar{\bf x}^*\|_2}m\lambda,\hspace{7.25mm}\mbox{if }(u_1,u_2)\in\mathcal{U}_2.
\end{cases}
\end{align}
\end{enumerate}
\end{proof}

\section{Proof of Theorem 8}

\subsection{Proof for the Non-Collinear Case}

In this section, we show that the results in Theorem 8 holds for the case in which $\frac{\langle{\bf P}\bar{\bf x},{\bf P}\bar{x}^*\rangle}{\|{\bf P}\bar{\bf x}\|_2\|{\bf P}\bar{x}^*\|_2}\in(-1,1)$, i.e., ${\bf P}\bar{\bf x}$ and ${\bf P}\bar{x}^*$ are not collinear.

\begin{proof}

As shown by part a) of Lemma 7, the dual problem of (UBP$'$) is equivalent to (\ref{prob:dual_bound1}). Since the strong duality holds by Lemma 6, we have
\begin{equation}
-T_{\xi}(\theta^*_{\lambda},\bar{\bf x}^{j};\theta^*_{\lambda_0})=\max_{u_1>0,u_2\geq0}\,\,\bar{g}(u_1,u_2).
\end{equation}
Clearly, problem (\ref{prob:dual_bound1}) can be solved via the following minimization problem
\begin{align}\label{prob:dual_bound2}
T_{\xi}(\theta^*_{\lambda},\bar{\bf x}^{j};\theta^*_{\lambda_0})=\min_{u_1>0,u_2\geq0}-\bar{g}(u_1,u_2).
\end{align}
Let $u_1^*$ and $u_2^*$ be the optimal solution of problem (\ref{prob:dual_bound2}). By part a) of Lemma 7, the existence of $u_1^*>0$ and $u_2^*\geq0$ is guaranteed.
By introducing the slack variables $s_1\geq0$ and $s_2\geq0$, the KKT conditions of problem (\ref{prob:dual_bound2}) can be written as follows:
\begin{align}\label{eqn:KKT_dual_bound1}
-\tfrac{1}{2(u_1^*)^2}\|{\bf P}\bar{\bf x}+u_2^*{\bf P}\bar{\bf x}^*\|_2^2+\frac{1}{2}r^2-s_1&=0,\\ \label{eqn:KKT_dual_bound2}
\tfrac{1}{u_1^*}\langle{\bf P}\bar{\bf x}+u_2^*{\bf P}\bar{\bf x}^*, {\bf P}\bar{\bf x}^*\rangle-m(\lambda_0-\lambda)-s_2&=0,\\
s_1u_1^*=0,\,\,s_2u_2^*&=0.
\end{align}
Since $u_1^*>0$, then $s_1=0$ and Eq. (\ref{eqn:KKT_dual_bound1}) results in
\begin{equation}\label{eqn:u_1}
u_1^*=\|{\bf P}\bar{\bf x}+u_2^*{\bf P}\bar{\bf x}^*\|_2/r.
\end{equation}
By plugging $u_1^*$ into (\ref{eqn:KKT_dual_bound2}), we have
\begin{equation}\label{eqn:u2}
\varphi(u_2^*):=\frac{\langle{\bf P}\bar{\bf x}+u_2^*{\bf P}\bar{\bf x}^*, {\bf P}\bar{\bf x}^*\rangle}{\|{\bf P}\bar{\bf x}+u_2^*{\bf P}\bar{\bf x}^*\|_2\|{\bf P}\bar{\bf x}^*\|_2}=\frac{m(\lambda_0-\lambda)+s_2}{r\|{\bf P}\bar{\bf x}^*\|_2}=d+\frac{s_2}{r\|{\bf P}\bar{\bf x}^*\|_2}.
\end{equation}
It is easy to observe that
\begin{enumerate}
\item[{\bf m1).}]since $\frac{\langle{\bf P}\bar{\bf x}, {\bf P}\bar{\bf x}^*\rangle}{\|{\bf P}\bar{\bf x}\|_2\|{\bf P}\bar{x}^*\|_2}\in(-1,1)$, i.e., ${\bf P}\bar{\bf x}$ and ${\bf P}\bar{x}^*$ are not collinear,  $\varphi(u_2^*)$ monotonically increases when $u_2^*\rightarrow\infty$ and
$
\lim_{u_2\rightarrow\infty}\varphi(u_2^*)=1
$;
\item[{\bf m2).}]$d\in(0,1]$ due to the fact that $\lambda_0>\lambda$ and \eqref{eqn:u2} (note $s_2\geq0$).
\end{enumerate}

a): Assume $\frac{\langle{\bf P}\bar{\bf x}, {\bf P}\bar{\bf x}^*\rangle}{\|{\bf P}\bar{\bf x}\|_2\|{\bf P}\bar{x}^*\|_2}\geq d$. We divide the proof into two cases.
\begin{enumerate}
\item[a1)] Suppose $\frac{\langle{\bf P}\bar{\bf x}, {\bf P}\bar{\bf x}^*\rangle}{\|{\bf P}\bar{\bf x}\|_2\|{\bf P}\bar{x}^*\|_2}> d$, \eqref{eqn:u2} and the monotonicity of $\varphi$ imply that
\begin{align*}
d+\frac{s_2}{r\|{\bf P}\bar{\bf x}^*\|_2}=\varphi(u_2^*)\geq\varphi(0)=\frac{\langle{\bf P}\bar{\bf x}, {\bf P}\bar{\bf x}^*\rangle}{\|{\bf P}\bar{\bf x}\|_2\|{\bf P}\bar{\bf x}^*\|_2}>d.
\end{align*}
Therefore, we can see that $s_2>0$ and thus $u_2^*=0$ due to the complementary slackness condition. By plugging $u_2^*=0$ into \eqref{eqn:u_1}, we can get $u_1^*=\frac{\|{\bf P}\bar{\bf x}\|_2}{r}$. The result in part a) of Theorem 8 follows by noting that $T_{\xi}(\theta^*_{\lambda},\bar{\bf x}^{j};\theta^*_{\lambda_0})=-\bar{g}(u_1^*,u_2^*)$.

\item[a2)] Suppose $\frac{\langle{\bf P}\bar{\bf x}, {\bf P}\bar{\bf x}^*\rangle}{\|{\bf P}\bar{\bf x}\|_2\|{\bf P}\bar{x}^*\|_2} = d$. If $u_2^*>0$, then $s_2=0$ by the complementary slackness condition. In view of \eqref{eqn:u2} and {\bf m1)}, we can see that
\begin{align*}
d=\varphi(u_2^*)>\varphi(0)=\frac{\langle{\bf P}\bar{\bf x}, {\bf P}\bar{\bf x}^*\rangle}{\|{\bf P}\bar{\bf x}\|_2\|{\bf P}\bar{\bf x}^*\|_2}=d,
\end{align*}
which leads to a contradiction. Therefore $u_2^*=0$ and the result in part a) of Theorem 8 follows by a similar argument as in the proof of a1).
\end{enumerate}
b): Assume $\frac{\langle{\bf P}\bar{\bf x}, {\bf P}\bar{\bf x}^*\rangle}{\|{\bf P}\bar{\bf x}\|_2\|{\bf P}\bar{x}^*\|_2}< d$. If $u_2^*=0$,  \eqref{eqn:u2} results in
\begin{align*}
d+\frac{s_2}{r\|{\bf P}\bar{\bf x}^*\|_2}=\varphi(0)=\frac{\langle{\bf P}\bar{\bf x}, {\bf P}\bar{\bf x}^*\rangle}{\|{\bf P}\bar{\bf x}\|_2\|{\bf P}\bar{\bf x}^*\|_2}<d,
\end{align*}
which implies that $s_2<0$, a contradiction. Thus, we have $u_2^*>0$ and $s_2=0$ by the complementary slackness condition. \eqref{eqn:u2} becomes:
\begin{equation}\label{eqn:u2_2}
\frac{\langle{\bf P}\bar{\bf x}+u_2^*{\bf P}\bar{\bf x}^*, {\bf P}\bar{\bf x}^*\rangle}{\|{\bf P}\bar{\bf x}+u_2^*{\bf P}\bar{\bf x}^*\|_2\|{\bf P}\bar{\bf x}^*\|_2}=\frac{m(\lambda_0-\lambda)}{r\|{\bf P}\bar{\bf x}^*\|_2}=d.
\end{equation}
Expanding the terms in \eqref{eqn:u2_2} yields the following quadratic equation:
\begin{equation}\label{eqn:quadratic}
a_2(u_2^*)^2+a_1u^*_2+a_0 = 0,
\end{equation}
where
$a_0$, $a_1$ and $a_2$ are given by \eqref{eqn:quadratic_sol}.

On the other hand, \eqref{eqn:u2_2} implies that $d\leq1$. In view of the assumptions $\frac{\langle{\bf P}\bar{\bf x}, {\bf P}\bar{\bf x}^*\rangle}{\|{\bf P}\bar{\bf x}\|_2\|{\bf P}\bar{x}^*\|_2}< d$ and $\frac{\langle{\bf P}\bar{\bf x}, {\bf P}\bar{\bf x}^*\rangle}{\|{\bf P}\bar{\bf x}\|_2\|{\bf P}\bar{x}^*\|_2}\neq-1$, we can see that ${\bf P}\bar{\bf x}^*$ and ${\bf P}\bar{\bf x}$ are not collinear. Therefore, {\bf m1)} and \eqref{eqn:u2_2} imply that $d<1$. Moreover, the assumption $\lambda_0>\lambda$ leads to $d>0$. As a result, we have $(1-d^2)>0$ and thus
$$
a_2a_0<0, \Delta = a_1^2-4a_2a_0 = 4d^2(1-d^2)\|{\bf P}\bar{\bf x}^*\|_2^4(\|{\bf P}\bar{\bf x}\|_2^2\|{\bf P}\bar{\bf x}^*\|_2^2-\langle{\bf P}\bar{\bf x}, {\bf P}\bar{\bf x}^*\rangle^2)>0.
$$
Consequently, (\ref{eqn:quadratic}) has only one positive solution which can be computed by the formula in \eqref{eqn:quadratic_sol}. The result in \eqref{eqn:bound2} follows by a similar argument as in the proof of a1).
\end{proof}

\subsection{Proof for the Collinear and Positive Correlated Case}\label{subsection:pcollinear}
We prove for the case in which $\frac{\langle{\bf P}\bar{\bf x},{\bf P}\bar{x}^*\rangle}{\|{\bf P}\bar{\bf x}\|_2\|{\bf P}\bar{x}^*\|_2}=1$.
\begin{proof}
Because $\frac{\langle{\bf P}\bar{\bf x},{\bf P}\bar{x}^*\rangle}{\|{\bf P}\bar{\bf x}\|_2\|{\bf P}\bar{x}^*\|_2}=1$, by part a) of Lemma 7, the dual problem of (UBP$'$) is given by (UBD$'$). Therefore, the following KKT conditions hold as well:
\begin{align}\label{eqn:KKT_dual_bound1}
-\tfrac{1}{2(u_1^*)^2}\|{\bf P}\bar{\bf x}+u_2^*{\bf P}\bar{\bf x}^*\|_2^2+\frac{1}{2}r^2-s_1&=0,\\ \label{eqn:KKT_dual_bound2}
\tfrac{1}{u_1^*}\langle{\bf P}\bar{\bf x}+u_2^*{\bf P}\bar{\bf x}^*, {\bf P}\bar{\bf x}^*\rangle-m(\lambda_1-\lambda_2)-s_2&=0,\\
s_1u_1^*=0,\,\,s_2u_2^*&=0
\end{align}
where $u_1^*>0$ and $u_2^*\geq0$ are the optimal solution of (UBD$'$), and $s_1,s_2\geq0$ are the slack variables.
Since $u_1^*>0$, then $s_1=0$ and Eq. (\ref{eqn:KKT_dual_bound1}) results in
\begin{equation}\label{eqn:u_1}
u_1^*=\|{\bf P}\bar{\bf x}+u_2^*{\bf P}\bar{\bf x}^*\|_2/r.
\end{equation}
By plugging $u_1^*$ into (\ref{eqn:KKT_dual_bound2}), we have
\begin{equation}\label{eqn:u2}
\varphi(u_2^*):=\frac{\langle{\bf P}\bar{\bf x}+u_2^*{\bf P}\bar{\bf x}^*, {\bf P}\bar{\bf x}^*\rangle}{\|{\bf P}\bar{\bf x}+u_2^*{\bf P}\bar{\bf x}^*\|_2\|{\bf P}\bar{\bf x}^*\|_2}=\frac{m(\lambda_0-\lambda)+s_2}{r\|{\bf P}\bar{\bf x}^*\|_2}=d+\frac{s_2}{r\|{\bf P}\bar{\bf x}^*\|_2},
\end{equation}
which implies
\begin{equation}\label{ineqn:dsml1}
d\leq1.
\end{equation} Moreover, we can see that the assumption  $\frac{\langle{\bf P}\bar{\bf x}, {\bf P}\bar{\bf x}^*\rangle}{\|{\bf P}\bar{\bf x}\|_2\|{\bf P}\bar{x}^*\|_2} = 1$ actually implies ${\bf P}\bar{\bf x}$ and ${\bf P}\bar{x}^*$ are collinear. Therefore, there exists $\alpha>0$ such that
\begin{equation}\label{eqn:pcolinear}
{\bf P}\bar{\bf x}=\alpha{\bf P}\bar{\bf x}^*.
\end{equation}
By plugging \eqref{eqn:pcolinear} into \eqref{eqn:u2}, we have
\begin{equation}\label{eqn:u_2_angle_constant}
\varphi(u_2^*)=\frac{\langle{\bf P}\bar{\bf x}+u_2^*{\bf P}\bar{\bf x}^*, {\bf P}\bar{\bf x}^*\rangle}{\|{\bf P}\bar{\bf x}+u_2^*{\bf P}\bar{\bf x}^*\|_2\|{\bf P}\bar{\bf x}^*\|_2}=\frac{(\alpha+u_2^*)\langle{\bf P}\bar{\bf x}^*, {\bf P}\bar{\bf x}^*\rangle}{(\alpha+u_2^*)\|{\bf P}\bar{\bf x}^*\|_2\|{\bf P}\bar{\bf x}^*\|_2}=1,
\end{equation}
i.e., $\varphi(u_2^*)$ is a constant and does not depend on the value of $u_2^*$.

As a result, we can see that $\frac{\langle{\bf P}\bar{\bf x},{\bf P}\bar{x}^*\rangle}{\|{\bf P}\bar{\bf x}\|_2\|{\bf P}\bar{x}^*\|_2}=1\geq d$. Therefore, we need to show that the result in part a) of Theorem 8 holds. We divide the proof into two cases.
\begin{enumerate}
\item[Case 1.] Suppose $\frac{\langle{\bf P}\bar{\bf x}, {\bf P}\bar{\bf x}^*\rangle}{\|{\bf P}\bar{\bf x}\|_2\|{\bf P}\bar{x}^*\|_2}> d$, i.e., $d<1$, \eqref{eqn:u2} and \eqref{eqn:u_2_angle_constant} imply that
\begin{align*}
d+\frac{s_2}{r\|{\bf P}\bar{\bf x}^*\|_2}=\varphi(u_2^*)=1>d.
\end{align*}
Therefore, we can see that $s_2>0$ and thus $u_2^*=0$ due to the complementary slackness condition. By plugging $u_2^*=0$ into \eqref{eqn:u_1}, we can get $u_1^*=\frac{\|{\bf P}\bar{\bf x}\|_2}{r}$. The result in part a) of Theorem 8 follows by noting that $T_{\xi}(\theta^*_{\lambda},\bar{\bf x}^{j};\theta^*_{\lambda_0})=-\bar{g}(u_1^*,u_2^*)$.
\item[Case 2.]  Suppose $\frac{\langle{\bf P}\bar{\bf x}, {\bf P}\bar{\bf x}^*\rangle}{\|{\bf P}\bar{\bf x}\|_2\|{\bf P}\bar{x}^*\|_2} = d=1$.
On the other hand, in view of \eqref{eqn:u2} and \eqref{eqn:u_2_angle_constant}, we can observe that $s_2=0$ because otherwise
\begin{align*}
\varphi(u_2^*)=1=d+\frac{s_2}{r\|{\bf P}\bar{\bf x}^*\|_2}>1.
\end{align*}
By plugging \ref{eqn:pcolinear} into \eqref{eqn:u_1}, we have $u_1^*=(\alpha+u_2^*)\|{\bf P}\bar{\bf x}^*\|_2/r$ and thus
\begin{align}\label{eqn:bound_pcolinear}
T_{\xi}(\theta^*_{\lambda},\bar{\bf x}^{j};\theta^*_{\lambda_0})=-\bar{g}(u_1^*,u_2^*)=r(\alpha+u_2^*)\|{\bf P}\bar{\bf x}^*\|_2-u_2^*m(\lambda_0-\lambda)-\langle\theta^*_{\lambda_0},\bar{\bf x}\rangle.
\end{align}
By noting $d=1$, it follows that
\begin{equation}\label{eqn:dis1}
m(\lambda_0-\lambda)=r\|{\bf P}\bar{\bf x}^*\|_2.
\end{equation}
Therefore, in view of \eqref{eqn:pcolinear} and \eqref{eqn:dis1},  \eqref{eqn:bound_pcolinear} becomes
\begin{align*}
T_{\xi}(\theta^*_{\lambda},\bar{\bf x}^{j};\theta^*_{\lambda_0})=r(\alpha+u_2^*)\|{\bf P}\bar{\bf x}^*\|_2-u_2^*r\|{\bf P}\bar{\bf x}^*\|_2-\langle\theta^*_{\lambda_0},\bar{\bf x}\rangle=r\|{\bf P}\bar{\bf x}\|_2-\langle\theta^*_{\lambda_0},\bar{\bf x}\rangle,
\end{align*}
which is the same as the result in part a) of Theorem 8.
\end{enumerate}
\end{proof}

\subsection{Proof for the Collinear and Negative Correlated Case}

Before we proceed to prove Theorem 8 for the case in which $\frac{\langle{\bf P}\bar{\bf x},{\bf P}\bar{x}^*\rangle}{\|{\bf P}\bar{\bf x}\|_2\|{\bf P}\bar{x}^*\|_2}=-1$, it is worthwhile to noting the following lemma.
\begin{lemma}{C}{}
Let $\lambda_{max}\geq\lambda_0>\lambda>0$, $d=\frac{m(\lambda_0-\lambda)}{r\|{\bf P}\bar{x}^*\|_2}$ and assume $\theta^*_{\lambda_0}$ is known. Then $d\in(0,1]$.
\end{lemma}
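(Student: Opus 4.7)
The plan is to prove $d \in (0,1]$ by treating the two bounds separately: $d > 0$ reduces to three positivity checks (the numerator, the radius $r$, and the length $\|{\bf P}\bar{\bf x}^*\|_2$), whereas $d \leq 1$ is a Cauchy--Schwarz estimate applied to the vector $\theta^*_{\lambda_0}-\theta^*_{\lambda}$, exploiting the two linear constraints that both $\theta^*_{\lambda_0}$ and $\theta^*_{\lambda}$ satisfy.

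For the lower bound, the numerator $m(\lambda_0-\lambda)$ is positive by hypothesis. For $\|{\bf P}\bar{\bf x}^*\|_2>0$ I would argue by contradiction: if ${\bf P}\bar{\bf x}^*=0$, then $\bar{\bf x}^*$ is a scalar multiple of ${\bf b}$, but this collides with $\langle\theta^*_{\lambda_0},\bar{\bf x}^*\rangle=m\lambda_0>0$ and $\langle\theta^*_{\lambda_0},{\bf b}\rangle=0$. To see that $r>0$, I would first note that $\theta^*_{\lambda}\neq\theta^*_{\lambda_0}$: pick any $j_0\in\mathcal{I}_{\lambda_0}$ (nonempty by Lemma 3), then $|\langle\theta^*_{\lambda_0},\bar{\bf x}^{j_0}\rangle|=m\lambda_0>m\lambda\geq|\langle\theta^*_{\lambda},\bar{\bf x}^{j_0}\rangle|$; applying the strict version of Theorem 2 (part b) to this pair gives $0<\|\theta^*_{\lambda}-\theta^*_{\lambda_0}\|_2^2<r^2$, hence $r>0$.

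For the upper bound $d\leq 1$, I would use membership of $\theta^*_{\lambda}$ in $\mathcal{A}_{\lambda_0}^{\lambda}$ together with the identity $\langle\theta^*_{\lambda_0},\bar{\bf x}^*\rangle=m\lambda_0$. Combining $\langle\theta^*_{\lambda},\bar{\bf x}^*\rangle\leq m\lambda$ with this equality yields
\begin{equation*}
\langle\theta^*_{\lambda_0}-\theta^*_{\lambda},\bar{\bf x}^*\rangle \geq m(\lambda_0-\lambda).
\end{equation*}
Since $\langle\theta^*_{\lambda_0},{\bf b}\rangle=\langle\theta^*_{\lambda},{\bf b}\rangle=0$, the difference $\theta^*_{\lambda_0}-\theta^*_{\lambda}$ lies in the orthogonal complement of $\mathrm{span}({\bf b})$, so ${\bf P}(\theta^*_{\lambda_0}-\theta^*_{\lambda})=\theta^*_{\lambda_0}-\theta^*_{\lambda}$, and by self-adjointness of ${\bf P}$ we can rewrite the inner product as $\langle\theta^*_{\lambda_0}-\theta^*_{\lambda},{\bf P}\bar{\bf x}^*\rangle$. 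Cauchy--Schwarz together with the ball constraint $\|\theta^*_{\lambda}-\theta^*_{\lambda_0}\|_2\leq r$ from Theorem 2 then gives
\begin{equation*}
m(\lambda_0-\lambda)\leq\|\theta^*_{\lambda_0}-\theta^*_{\lambda}\|_2\,\|{\bf P}\bar{\bf x}^*\|_2 \leq r\,\|{\bf P}\bar{\bf x}^*\|_2,
\end{equation*}
which rearranges to $d\leq 1$.

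The computation itself is short; the only genuine obstacle is the nondegeneracy bookkeeping in the $d>0$ step, namely ensuring that the definition of $d$ makes sense (that $r$ and $\|{\bf P}\bar{\bf x}^*\|_2$ are both strictly positive) before one can speak of its value. The $d\leq 1$ inequality is then a clean consequence of the defining constraints of $\mathcal{A}_{\lambda_0}^{\lambda}$ and needs no additional machinery.
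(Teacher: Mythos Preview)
Your proof is correct, and it takes a genuinely different route from the paper's.

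The paper proves $d\leq1$ indirectly, by routing through the Lagrangian machinery built up for Theorem~8: it sets $\bar{\bf x}=\bar{\bf x}^*$ so that the cosine $\frac{\langle{\bf P}\bar{\bf x},{\bf P}\bar{\bf x}^*\rangle}{\|{\bf P}\bar{\bf x}\|_2\|{\bf P}\bar{\bf x}^*\|_2}$ equals $1$, invokes Lemma~7a to guarantee that the dual problem (UBD$'$) attains its maximum at some $(u_1^*,u_2^*)\in\mathcal{U}_1$, writes out the KKT conditions, and reads off $d\leq1$ from the identity $\varphi(u_2^*)=d+\frac{s_2}{r\|{\bf P}\bar{\bf x}^*\|_2}$ combined with $|\varphi(u_2^*)|\leq1$. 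Your argument bypasses all of this: you work purely in the primal, combining $\langle\theta^*_{\lambda_0},\bar{\bf x}^*\rangle=m\lambda_0$, $\langle\theta^*_{\lambda},\bar{\bf x}^*\rangle\leq m\lambda$, the orthogonality of $\theta^*_{\lambda_0}-\theta^*_{\lambda}$ to ${\bf b}$, and the ball constraint from Theorem~2, then finishing with Cauchy--Schwarz. This is shorter, self-contained, and does not depend on the dual analysis that Lemma~C is itself meant to support, so it avoids any appearance of circularity. You are also more careful than the paper about the nondegeneracy checks $r>0$ and $\|{\bf P}\bar{\bf x}^*\|_2>0$; the paper's proof simply asserts $d>0$ from $\lambda_0>\lambda$ without commenting on the denominator. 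One minor remark: for $r>0$ you do not actually need the strict inequality of Theorem~2b; once $\theta^*_{\lambda}\neq\theta^*_{\lambda_0}$, part~a already gives $0<\|\theta^*_{\lambda}-\theta^*_{\lambda_0}\|_2^2\leq r^2$.
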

\begin{proof}
Let $\bar{\bf x}:=-\xi(-\bar{\bf x}^*)$. Clearly, we can see that $$\frac{\langle{\bf P}\bar{\bf x}, {\bf P}\bar{\bf x}^*\rangle}{\|{\bf P}\bar{\bf x}\|_2\|{\bf P}\bar{x}^*\|_2}=1.$$
Therefore, the results in Section \ref{subsection:pcollinear} apply.
As a result, in view of the inequality in (\ref{ineqn:dsml1}),

Notice that, $d$ is independent of $\bar{\bf x}$. In other words, as long as there exists a $j_0\in\{1,\ldots,p\}$ such that ${\bf P}\bar{\bf x}^{j_0}\neq0$ and $\frac{\langle{\bf P}\bar{\bf x}, {\bf P}\bar{\bf x}^*\rangle}{\|{\bf P}\bar{\bf x}\|_2\|{\bf P}\bar{x}^*\|_2}\in(-1,1)$, one can see that
$$d\leq1.$$
Moreover, since $\lambda_0>\lambda$, it is easy to see that $d>0$, which completes the proof.
\end{proof}

We prove Theorem 8 for the case in which $\frac{\langle{\bf P}\bar{\bf x},{\bf P}\bar{x}^*\rangle}{\|{\bf P}\bar{\bf x}\|_2\|{\bf P}\bar{x}^*\|_2}=-1$.

\begin{proof}
Consider the case in which $\frac{\langle{\bf P}\bar{\bf x}, {\bf P}\bar{\bf x}^*\rangle}{\|{\bf P}\bar{\bf x}\|_2\|{\bf P}\bar{x}^*\|_2}=-1$.
Clearly, ${\bf P}\bar{\bf x}$ and ${\bf P}\bar{\bf x}^*$ are collinear and there exists $\alpha=-\frac{\langle{\bf P}\bar{\bf x},{\bf P}\bar{\bf x}^*\rangle}{\|{\bf P}\bar{\bf x}^*\|_2^2}$ such that
\begin{equation}\label{eqn:ncollinear}
{\bf P}\bar{\bf x}=-\alpha{\bf P}\bar{\bf x}^*.
\end{equation}

As shown by part b) of Lemma 7, the dual problem is given by (\ref{prob:dual_bound11}). Therefore, to find
\begin{equation}
T_{\xi}(\theta^*_{\lambda},\bar{\bf x}^j;\theta^*_{\lambda_0})=\min_{(u_1,u_2)\in\mathcal{U}_1\cup\mathcal{U}_2}-\bar{\bar{g}}(u_1,u_2),
\end{equation}
we can first compute
\begin{equation}\label{prob:ubd_ncollinear}
T'_{\xi}(\theta^*_{\lambda},\bar{\bf x}^j;\theta^*_{\lambda_0})=\inf_{(u_1,u_2)\in\mathcal{U}_1}-\bar{\bar{g}}(u_1,u_2)
\end{equation}
and then compare $T'_{\xi}(\theta^*_{\lambda},\bar{\bf x}^j;\theta^*_{\lambda_0})$ with $-\bar{\bar{g}}(u_1,u_2)|_{(u_1,u_2)\in\mathcal{U}_2}=\frac{\|{\bf P}\bar{\bf x}\|_2}{{\bf P}\bar{\bf x}^*}m\lambda$. Clearly, we have
\begin{equation}\label{eqn:ubd_ncollinear}
T_{\xi}(\theta^*_{\lambda},\bar{\bf x}^j;\theta^*_{\lambda_0})=\min\left\{T'_{\xi}(\theta^*_{\lambda},\bar{\bf x}^j;\theta^*_{\lambda_0}),\frac{\|{\bf P}\bar{\bf x}\|_2}{{\bf P}\bar{\bf x}^*}m\lambda\right\}
\end{equation}

Let us consider the problem in (\ref{prob:ubd_ncollinear}). From problem (\ref{prob:dual_bound11}), we observe that
\begin{equation}
-\bar{\bar{g}}(u_1,u_2)|_{(u_1,u_2)\in\mathcal{U}_1}=-\bar{g}(u_1,u_2).
\end{equation}
By noting \eqref{eqn:ncollinear}, we have
\begin{equation}
-\bar{\bar{g}}(u_1,u_2)|_{(u_1,u_2)\in\mathcal{U}_1}=\frac{1}{2u_1}(u_2-\alpha)^2\|{\bf P}\bar{\bf x}^*\|_2^2-u_2m(\lambda_0-\lambda)+\frac{1}{2}u_1r^2-\langle\theta^*_{\lambda_0},\bar{\bf x}\rangle.
\end{equation}
Suppose $u_1$ is fixed, it is easy to see that
\begin{align}
u_2^*(u_1)=\argmin_{u_2\geq0}-\bar{\bar{g}}(u_1,u_2)|_{(u_1,u_2)\in\mathcal{U}_1}=\frac{u_1m(\lambda_0-\lambda)}{\|{\bf P}\bar{\bf x}^*\|_2^2}+\alpha
\end{align}
and
\begin{align}
h(u_1)&:=\min_{u_2}-\bar{\bar{g}}(u_1,u_2)|_{(u_1,u_2)\in\mathcal{U}_1}=-\bar{\bar{g}}(u_1,u_2^*(u_1))\\ \nonumber
&=\frac{1}{2}u_1r^2(1-d^2)-\alpha m(\lambda_0-\lambda)-\langle\theta^*_{\lambda_0},\bar{\bf x}\rangle.
\end{align}
Clearly, the domain of $h(u_1)$ is $u_1>0$ and $T'_{\xi}(\theta^*_{\lambda},\bar{\bf x}^j;\theta^*_{\lambda_0})=\inf_{u_1>0}h(u_1)$.

By Lemma C, one can see that that $d\in(0,1]$.

If $d=1$, then $$\min_{u_1>0}h(u_1)=-\alpha m(\lambda_0-\lambda)-\langle\theta^*_{\lambda_0},\bar{\bf x}\rangle.$$
Otherwise, if $d\in(0,1)$, it is easy to see that
$$\inf_{u_1>0}h(u_1)=\lim_{u_1\downarrow0}h(u_1)=-\alpha m(\lambda_0-\lambda)-\langle\theta^*_{\lambda_0},\bar{\bf x}\rangle.$$
All together, we have
\begin{equation}\label{eqn:ubd2}
T'_{\xi}(\theta^*_{\lambda},\bar{\bf x}^j;\theta^*_{\lambda_0})=-\alpha m(\lambda_0-\lambda)-\langle\theta^*_{\lambda_0},\bar{\bf x}\rangle.
\end{equation}
Moreover, we observe that
$$
\langle\theta^*_{\lambda_0},\bar{\bf x}\rangle=\langle{\bf P}\theta^*_{\lambda_0},\bar{\bf x}\rangle=\langle\theta^*_{\lambda_0},{\bf P}\bar{\bf x}\rangle=\langle\theta^*_{\lambda_0},-\alpha{\bf P}\bar{\bf x}^*\rangle=-\alpha m\lambda_0.
$$
Therefore, \eqref{eqn:ubd2} can simplified as:
\begin{equation}\label{eqn:ubd1}
T'_{\xi}(\theta^*_{\lambda},\bar{\bf x}^j;\theta^*_{\lambda_0})=-\alpha m(\lambda_0-\lambda)-\langle\theta^*_{\lambda_0},\bar{\bf x}\rangle=\alpha m\lambda=-\frac{\langle{\bf P}\bar{\bf x},{\bf P}\bar{\bf x}^*\rangle}{\|{\bf P}\bar{\bf x}^*\|_2^2}m\lambda=\frac{\|{\bf P}\bar{\bf x}\|_2}{\|{\bf P}\bar{\bf x}^*\|_2}m\lambda.
\end{equation}
By \eqref{eqn:ubd_ncollinear}, we can see that
\begin{equation}\label{eqn:ubd_ncollinear1}
T_{\xi}(\theta^*_{\lambda},\bar{\bf x}^j;\theta^*_{\lambda_0})=\frac{\|{\bf P}\bar{\bf x}\|_2}{\|{\bf P}\bar{\bf x}^*\|_2}m\lambda.
\end{equation}

Since $\frac{\langle{\bf P}\bar{\bf x}, {\bf P}\bar{\bf x}^*\rangle}{\|{\bf P}\bar{\bf x}\|_2\|{\bf P}\bar{x}^*\|_2}=-1$ and
$d\in(0,1]$ by Lemma C, one can see that $\frac{\langle{\bf P}\bar{\bf x}, {\bf P}\bar{\bf x}^*\rangle}{\|{\bf P}\bar{\bf x}\|_2\|{\bf P}\bar{x}^*\|_2}<d$. Therefore, we only need to show that part b) of Theorem 8 yields the same solution as \eqref{eqn:ubd_ncollinear1}  .

Recall that part b) of Theorem 8 says:
\begin{equation}\label{eqn:bound2}
T_{\xi}(\theta^*_{\lambda},\bar{\bf x}^{j};\theta^*_{\lambda_0}) = r\|{\bf P}\bar{\bf x}+u_2^*{\bf P}\bar{\bf x}^*\|_2-u_2^*m(\lambda_0-\lambda)-\langle\theta^*_{\lambda_0}, \bar{\bf x}\rangle,
\end{equation}
where
\begin{align}\label{eqn:quadratic_sol}
\begin{array}{lcl}\vspace{1mm}
u_2^*&=&\frac{-a_1+\sqrt{\Delta}}{2a_2},\\ \vspace{1mm}
a_2&=&\|{\bf P}\bar{\bf x}^*\|_2^4(1-d^2),\\ \vspace{1mm}
a_1&=&2\langle{\bf P}\bar{\bf x}, {\bf P}\bar{\bf x}^*\rangle\|{\bf P}\bar{\bf x}^*\|_2^2(1-d^2),\\ \vspace{1mm}
a_0&=&\langle{\bf P}\bar{\bf x}, {\bf P}\bar{\bf x}^*\rangle^2-d^2\|{\bf P}\bar{\bf x}\|_2^2\|{\bf P}\bar{\bf x}^*\|_2^2,\\
\Delta &=& a_1^2-4a_2a_0 = 4d^2(1-d^2)\|{\bf P}\bar{\bf x}^*\|_2^4(\|{\bf P}\bar{\bf x}\|_2^2\|{\bf P}\bar{\bf x}^*\|_2^2-\langle{\bf P}\bar{\bf x}, {\bf P}\bar{\bf x}^*\rangle^2).
\end{array}
\end{align}

In fact, if we plugging \eqref{eqn:ncollinear} into \eqref{eqn:quadratic_sol}, we have
\begin{align*}
\Delta &= 0,\\
u_2^* &= \frac{-a_1}{2a_2}=-\frac{\langle{\bf P}\bar{\bf x},{\bf P}\bar{\bf x}^*\rangle}{\|{\bf P}\bar{\bf x}^*\|_2^2}.
\end{align*}
Therefore, we can see that $u_2^*=\alpha$, ${\bf P}\bar{\bf x}+u_2^*{\bf P}\bar{x}^*=0$ and thus \eqref{eqn:bound2} results in
\begin{equation}\label{eqn:ubd_ncollinear2}
T_{\xi}(\theta^*_{\lambda},\bar{\bf x}^j;\theta^*_{\lambda_0})=-\alpha m(\lambda_0-\lambda)-\langle\theta^*_{\lambda_0},\bar{\bf x}\rangle=\alpha m\lambda=\frac{\|{\bf P}\bar{\bf x}\|_2}{\|{\bf P}\bar{\bf x}^*\|_2}m\lambda.
\end{equation}
Clearly, \eqref{eqn:ubd_ncollinear1} and \eqref{eqn:ubd_ncollinear2} give the same result, which completes the proof.
\end{proof}

\end{document}